\documentclass{article}

\usepackage{microtype}
\usepackage{graphicx}
\usepackage{subcaption}
\usepackage{booktabs} 

\usepackage{hyperref}
\let\oldaddcontentsline\addcontentsline 

\usepackage{algorithm}
\usepackage{algpseudocode}

\usepackage[preprint]{icml2026_modified}

\usepackage{amsmath}
\usepackage{amssymb}
\usepackage{mathtools}
\usepackage{amsthm}

\definecolor{ForestGreen}{rgb}{0.13, 0.55, 0.13}
\definecolor{BrickRed}{rgb}{0.8, 0.25, 0.33}

\usepackage[capitalize,noabbrev]{cleveref}

\theoremstyle{plain}
\newtheorem{theorem}{Theorem}[section]
\newtheorem{proposition}[theorem]{Proposition}

\theoremstyle{definition}

\theoremstyle{remark}

\usepackage[textsize=tiny]{todonotes}

\icmltitlerunning{(1D) Ordered Tokens Enable Efficient Test-Time Search}

\usepackage{url}
\usepackage{caption}
\usepackage{wrapfig}
\usepackage{multirow}
\usepackage{xspace}
\usepackage{titletoc}

\usepackage{tikz}
\usetikzlibrary{arrows.meta,positioning,fit,calc,shapes,backgrounds}

\usepackage{epstopdf}
\DeclareGraphicsExtensions{.pdf,.png,.jpg,.jpeg}

\usepackage{float}
\usepackage{enumitem}
\usepackage{colortbl}
\usepackage{comment}

\contentsuse{appendices}{toc}

\usepackage[dvipsnames]{xcolor}
\definecolor{LightGray}{gray}{0.9}
\definecolor{myblue}{RGB}{30, 60, 150}

\definecolor{cbeam}{HTML}{0c447c}
\definecolor{cla}{HTML}{2171b5}
\definecolor{cbon}{HTML}{4292c6}
\definecolor{car}{HTML}{2166ac}
\definecolor{cdet}{HTML}{d6604d}
\definecolor{cver}{HTML}{4dac26}

\usepackage[most]{tcolorbox}
\newcommand{\method}{\texttt{SoTo}\xspace}

\newcommand{\weburl}{\url{https://soto.epfl.ch}\xspace}

\begin{document}

\twocolumn[
  \icmltitle{(1D) Ordered Tokens Enable Efficient Test-Time Search}
  \icmlsetsymbol{equal}{*}
    
  \begin{icmlauthorlist}
      \icmlauthor{Zhitong Gao}{epfl}
      \icmlauthor{Parham Rezaei}{epfl}
      \icmlauthor{Ali Cy}{epfl}
      \icmlauthor{Mingqiao Ye}{epfl}
      \icmlauthor{Nataša Jovanović}{epfl}
      \icmlauthor{Jesse Allardice}{apple}
      \icmlauthor{Afshin Dehghan}{apple}
      \icmlauthor{Amir Zamir}{epfl}
      \icmlauthor{Roman Bachmann}{equal,epfl,apple}
      \icmlauthor{Oğuzhan Fatih Kar}{equal,epfl,apple}
  \end{icmlauthorlist}
    
  \icmlaffiliation{epfl}{Swiss Federal Institute of Technology Lausanne (EPFL)}
  \icmlaffiliation{apple}{Apple}

  \begin{center}
\textsuperscript{1}Swiss Federal Institute of Technology Lausanne (EPFL)\quad
\textsuperscript{2}Apple
\end{center}
\icmlcorrespondingauthor{Zhitong~Gao}{zhitong.gao\allowbreak @epfl.ch}
\icmlcorrespondingauthor{Roman~Bachmann}{r\_bachmann\allowbreak @apple.com}
\icmlcorrespondingauthor{Oğuzhan~Fatih~Kar}{o\_kar\allowbreak @apple.com}

  \icmlkeywords{tokenization, test-time scaling, autoregressive model, search}
    
  {\centering
    \weburl
    \vspace{0.5em}%
    \captionsetup{type=figure}%
    \includegraphics[width=0.94\linewidth]{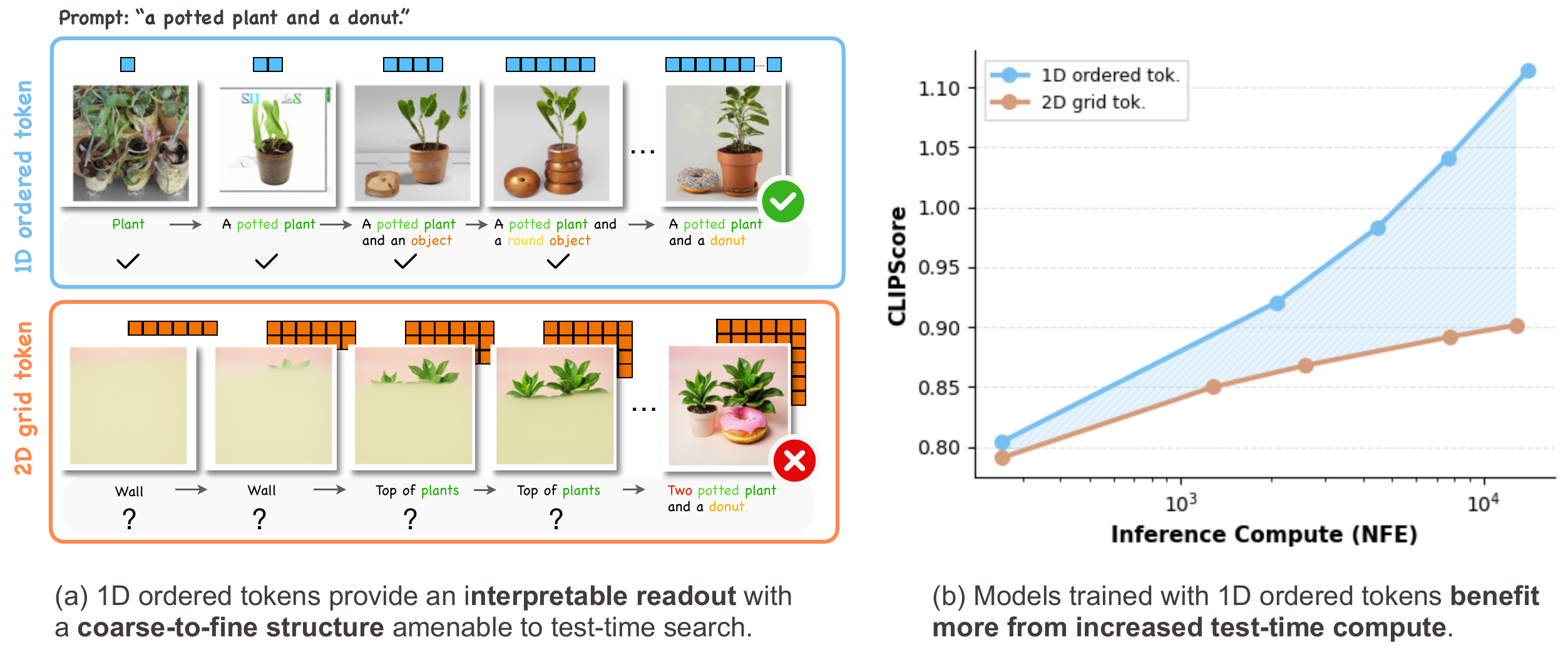}%
\captionof{figure}{(a) \textbf{Intermediate readouts.} 1D ordered tokens provide a coarse-to-fine structure with interpretable readouts amenable to test-time search. For the prompt \textit{`a potted plant and a donut''}, tokens progressively capture concepts from high- to low-level, e.g., \textit{`plant''} → \textit{`potted plant''} → \textit{`a potted plant and an object''}. This structure allows verifiers to effectively guide generation. In contrast, 2D grid tokens generated in raster-scan order are harder to verify and search over. (b) \textbf{Scaling behavior.}  1D ordered tokens (in this plot, FlexTok~\cite{flextok}) exhibit better test-time scaling than 2D grid tokens (from a controlled baseline) when using the best search algorithm for each (beam search for 1D and best-of-N for 2D). See Figure~\ref{fig:token_structure_comp} for complete results.}

    \label{fig:overview}\par
  }
  \vskip 0.3in
]

\printAffiliationsAndNotice{*Equal technical advising. }

\begin{abstract}

Tokenization is a key component of autoregressive (AR) generative models, converting raw data into more manageable units for modeling. Commonly, tokens describe local information, such as regions of pixels in images or word pieces in text, and AR generation predicts these tokens in a fixed order. A worthwhile question is whether token structures affect the ability to steer the generation through test-time search, where multiple candidate generations are explored and evaluated by a verifier. Using image generation as our testbed, we hypothesize that recent 1D ordered tokenizers with coarse-to-fine structure can be more amenable to search than classical 2D grid structures. This is rooted in the fact that the intermediate states in coarse-to-fine sequences carry semantic meaning that verifiers can reliably evaluate, enabling effective steering during generation.

Through controlled experiments, we find that AR models trained on coarse-to-fine ordered tokens exhibit improved test-time scaling behavior compared to grid-based counterparts. Moreover, we demonstrate that, thanks to the ordered structure, pure test-time search over token sequences (i.e., without training an AR model) can perform training-free text-to-image generation when guided by an image-text verifier. Beyond this, we systematically study how classical search algorithms (best-of-N, beam search, lookahead search) interact with different token structures, as well as the role of different verifiers and AR priors. Our results highlight the impact of token structure on inference-time scalability and provide practical guidance for test-time scaling in AR models.

\end{abstract}

\section{Introduction}
Autoregressive generative models rely on tokenization to convert raw data into more compact modeling units. The most common approaches across modalities encode information \emph{locally}, such as images into spatial grids~\cite{Oord2017VQVAE, Esser2020VQGAN, sun2024autoregressive} where local clusters of tokens correspond to local regions of pixels, or text into subwords~\cite{song2021fast, kudo2018sentencepiece}. Autoregressive generation then predicts these tokens sequentially, following spatial orderings like raster-scan for images or left-to-right for text. An important question is how these structural choices affect the model's ability to perform test-time search, where generation explores multiple candidates guided by a verifier, a technique that has proven valuable for improving generation quality and control in language modeling~\cite{lightman2023lets, yao2023treethought, wei2022cot} and diffusion models~\cite{ma2025its, singhal2025general, zhang2025classical}.

One can speculate that token structure matters for search. To investigate this, we study recent 1D ordered tokenizers~\cite{flextok, wen2025principal}, which compress images into sequences with an ordering that reflects a coarse-to-fine or semantic-to-detailed structure. Compared to grid-based representations, these tokenizers produce semantically meaningful intermediate readouts and support detokenization from a variable number of tokens.

The key observation motivating this work is that in coarse-to-fine orderings, intermediate sequences of generated tokens carry \emph{global} semantic meaning that can be reliably evaluated by verifiers, enabling pruning and refinement via search. In contrast, spatially-ordered tokenizers produce tokens that correspond only to fixed spatial regions (e.g., the upper-left corner of an image), which provide limited semantic signal about the full output. Consider the example in \cref{fig:overview}. Using 1D ordered tokenizers like FlexTok~\cite{flextok}, the choice of the first token significantly narrows down the space of possible generations towards ones that show plant-like concepts, and the ordered tokenizer's intermediate readouts show semantic content throughout generation. In contrast, the 2D grid tokenizer's first tokens only reveal the upper-left spatial region featuring a wall.

In this paper, we specifically focus on autoregressive image generation as our testbed. While image tokenizers have primarily been evaluated through reconstruction and generation fidelity~\cite{sun2024autoregressive, yu2024image}, we adopt a complementary perspective: their \textit{test-time scaling (TTS) behavior}, i.e., their ability to improve generation quality and alignment through search-based inference.

We demonstrate that an autoregressive model trained on 1D ordered tokens exhibits stronger test-time scaling behavior than a comparable AR model trained on 2D grid tokens. To stress-test the role of token structure, we show that pure test-time search over ordered token sequences can enable training-free (i.e., without training an autoregressive model) text-to-image generation when guided by an image-text similarity verifier. We further show that an AR model trained solely with text-conditioning can perform image-controlled generation in a training-free manner when paired with an image-image similarity verifier. Finally, to understand the broader design space, we provide a comprehensive analysis of how different search strategies (beam search, best-of-N sampling, lookahead search) perform across token structures, and investigate how different verifiers and autoregressive priors influence search-guided generation. 

Code, additional interactive visualizations, and model weights are available at \weburl.

\begin{figure*}[t!]
    \centering
    \includegraphics[width=0.88\linewidth]{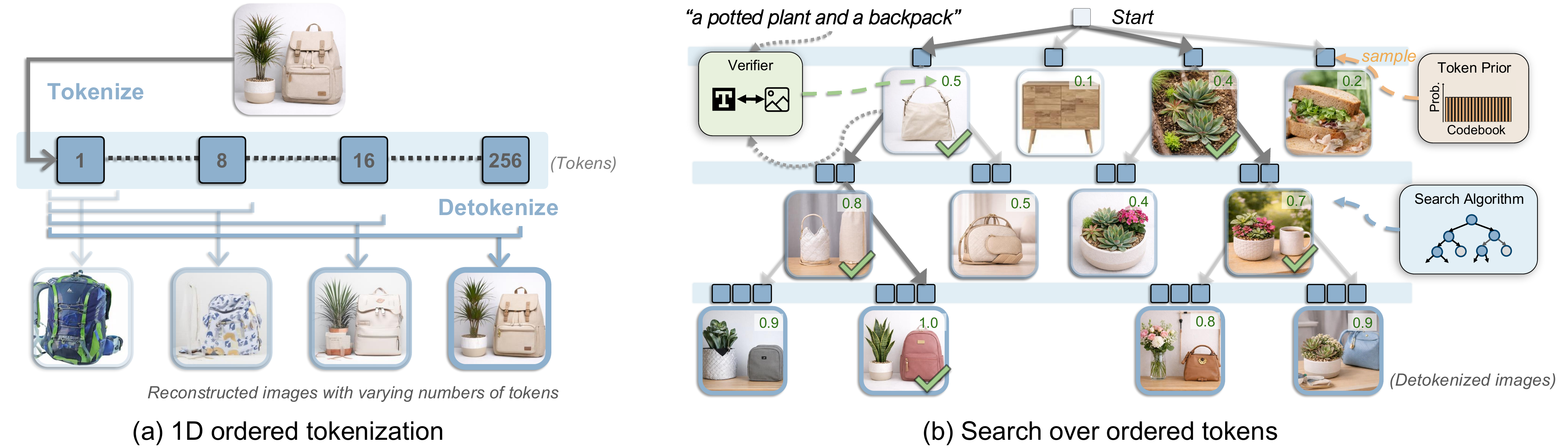}
    \caption{\textbf{Ordered tokens induce a searchable latent structure.} (a) FlexTok encodes images into a sequence of 1D ordered tokens trained to support variable-length decoding, imposing a coarse-to-fine hierarchy. (b) Illustration of search over the token vocabulary without an autoregressive model: candidate tokens are sampled using a \textit{token prior} (here, uniform over the codebook, i.e., no AR model is assumed) and evaluated using a \textit{verifier} (e.g., CLIP); a \textit{search algorithm} (here, beam search) then expands the most promising partial sequences. As more ordered tokens are included, intermediate reconstructions progressively refine global semantics and visual details. See Fig.~\ref{fig:four_components} for a full description of the token prior, verifier, and search algorithm components used in our framework.}
    \label{fig:motivation}
\end{figure*}

\section{Background}\label{sec:background}

In this section, we review AR generation, test-time search, and 1D ordered tokenizers to provide the necessary context for our work, followed by our core motivation. A comprehensive discussion of related literature is deferred to Sec.~\ref{sec:related_work}.

\paragraph{Autoregressive generation}
AR generation is a standard paradigm for generative modeling across text, image, and multimodal domains~\cite{touvron2023llama,ramesh2021zero,wu2024janus}. It typically \textit{tokenizes} data into discrete tokens $\mathbf{x} = (x_1, \ldots, x_T)$, where $x_t \in \mathcal{V}$, and models the data distribution via next-token prediction:
\begin{equation}
    p_\theta(\mathbf{x} \mid c) = \prod_{t=1}^{T} p_\theta(x_t \mid x_{<t}, c),
\end{equation} 
where $c$ is an optional conditioning context. 

\paragraph{Test-time search}
While standard inference performs greedy decoding or sampling from the learned distribution, an alternative is to treat the learned AR model as a prior and conduct verifier-guided search at inference time: selecting the sequence $\hat{\mathbf{x}}$ that maximizes a \textit{verifier} $g(\mathbf{x}, c) := S(\text{Dec}(\mathbf{x}), c)$, where $\text{Dec}(\cdot)$ maps tokens to pixel space and $S$ is a similarity metric:
\begin{equation}
    \hat{\mathbf{x}} = \arg\max_{\mathbf{x}}\; g(\mathbf{x}, c)
    \quad\text{s.t.}\quad
    x_t \in \mathcal{K}\!\left(p_\theta(\cdot \mid x_{<t}, c)\right).
    \label{eq:search}
\end{equation}
Here, $\mathcal{K}$ restricts the search space to likely candidates (e.g., top-$k$). Algorithms like \textit{Best-of-$N$}, \textit{Beam Search}, and \textit{Lookahead Search}~\cite{snell2024scaling} represent different strategies for approximating this constrained optimization.

Test-time search enables more reliable generation through verification, and is a popular direction to achieve \emph{test-time scaling} as it trades additional inference compute for generation quality~\cite{brown2024large, ma2025inference}.

\paragraph{1D ordered tokenizers.}
AR generation for images has co-evolved with tokenizer design. The standard approach encodes images into fixed 2D grid tokens, which implicitly assumes information is distributed uniformly across space. 1D \emph{ordered} tokenizers like FlexTok~\citep{flextok} and Semanticist~\citep{Wen2025semanticist} instead encode images into flexible-length 1D sequences, typically trained with nested dropout~\citep{rippel2014learning}. This ensures that any prefix can be decoded into a valid image, with early tokens capturing global structure and later tokens refining details.

\paragraph{Motivation}
Existing work on test-time scaling has largely focused on designing better search algorithms~\cite{chen2025tts}, stronger verifiers~\cite{lightman2023lets}, or studying scaling laws~\cite{snell2024scaling}, but less attention has been paid to what characteristics a model should \emph{possess} to benefit from test-time scaling in the first place. We argue that token structure is a key factor: it defines the search space and how intermediate states connect, and thus determines how verifiable each state is. Below, we first analyze why this token structure is more amenable to search~(\S\ref{sec:analysis}), and then introduce a systematic framework to study test-time scaling across different tokenizer designs~(\S\ref{sec:framework}).

\section{Coarse-to-Fine Ordered Token Structures are More Amenable to Search}\label{sec:analysis}

We hypothesize that 1D ordered tokens induce a hierarchical, coarse-to-fine structure that makes the token space more amenable to search. In this section, we investigate the latent structure of 1D ordered tokenizers (specifically FlexTok~\cite{flextok}) to validate this suitability.

\subsection{The First Token is a Global Semantic Cluster}

We begin by examining the information density of the first token. In standard 2D VQGAN-based tokenization, the first token corresponds to a local pixel patch in the top-left corner and thus contains little semantic information about the entire image. In contrast, the first token in FlexTok is trained to reconstruct the \emph{entire} image at a high compression ratio, encouraging it to capture global semantics.

To visualize this property, we select different first tokens from a vocabulary of 64K entries, and decode each multiple times using different random seeds. Figure~\ref{fig:first_token_vis} shows example reconstructions, where each token is decoded nine times. The resulting images form semantically coherent clusters (e.g., plants, bags, food, and furniture), indicating that individual first-token entries correspond to meaningful global semantic categories. As demonstrated by \citet{flextok}, subsequent tokens model ever more detailed \textit{``concepts''}, narrowing down the distribution over images defined by the token sequences.

\begin{figure}[h!]
    \centering
    \includegraphics[width=\linewidth]{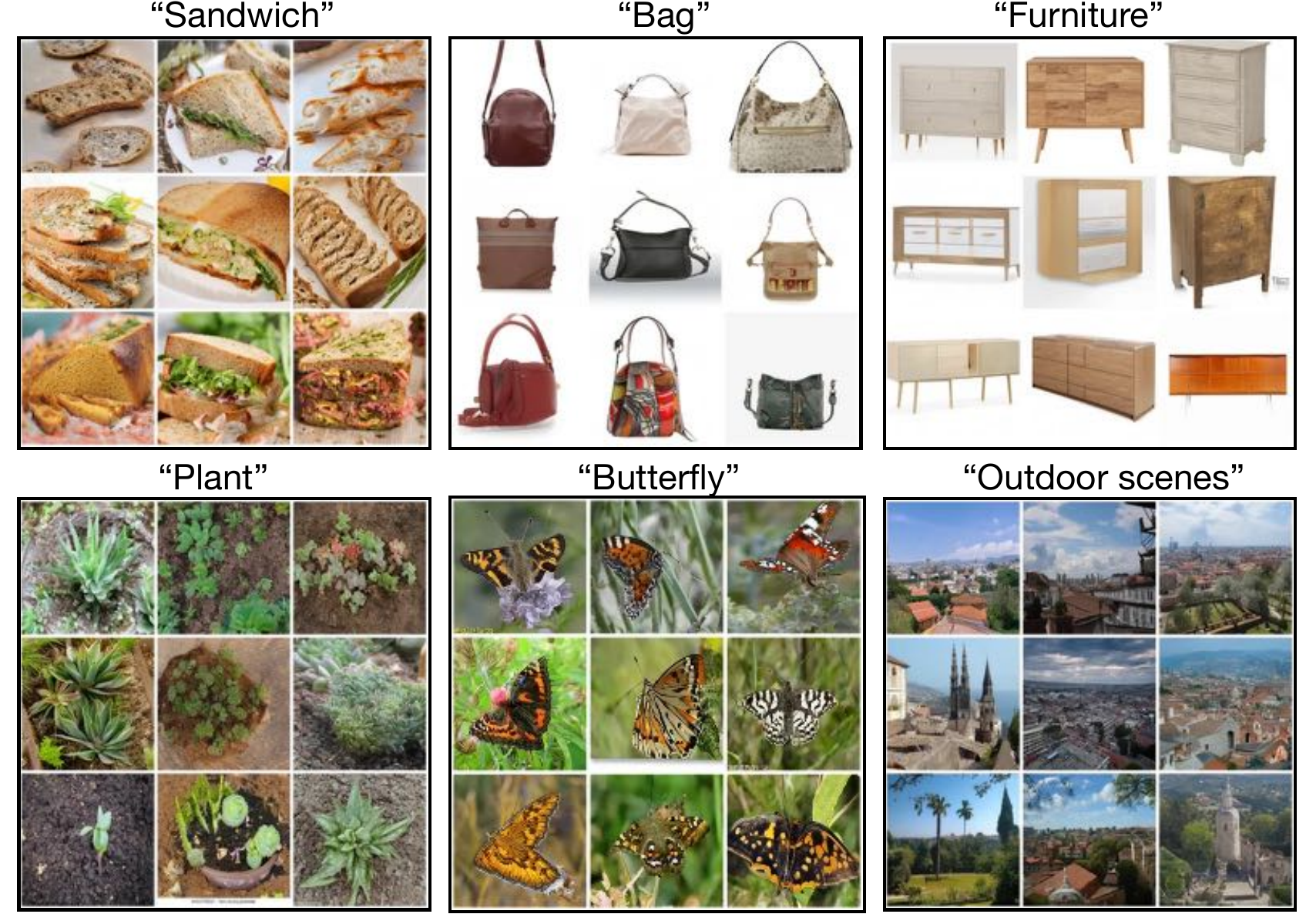}
\caption{\textbf{Visualization of images decoded from the first-token vocabulary in FlexTok.}
Each first-token entry is decoded using nine random seeds, producing nine images per token.
These decoded images form semantically coherent clusters (e.g., plants, bags, food, and furniture), indicating that tokens capture a global \textit{distribution of concepts} that can be searched over.}
    \label{fig:first_token_vis}
\end{figure}

\subsection{Zero-Shot Generation via Pure Search}
If the token space is semantically ordered, it should be possible to generate images by directly searching for tokens that maximize alignment with a text prompt, even in the absence of a generative model. We test this hypothesis by performing \emph{beam search} directly over the token space. At each step $t$, we expand the top-$k$ partial token sequences, detokenize them into images, and rank the candidates based on their CLIP~\cite{clip} or Imagereward~\cite{xu2023imagereward} similarity to the text prompt. An illustration of this procedure is shown in Fig.~\ref{fig:motivation}.

Figure~\ref{fig:training_free_search_demo} visualizes the images obtained during this search process. We observe that this approach produces coherent and semantically aligned images, with progressively finer details emerging as search explores additional tokens. Notably, this behavior does not arise with 2D grid tokenizations or 1D \textit{unordered} tokenizers~\cite{yu2024image}; without a coarse-to-fine structure, earlier tokens provide little information about later ones, making a full search over all combinations computationally infeasible.

\begin{figure}[h]
    \centering
    \includegraphics[width=\linewidth]{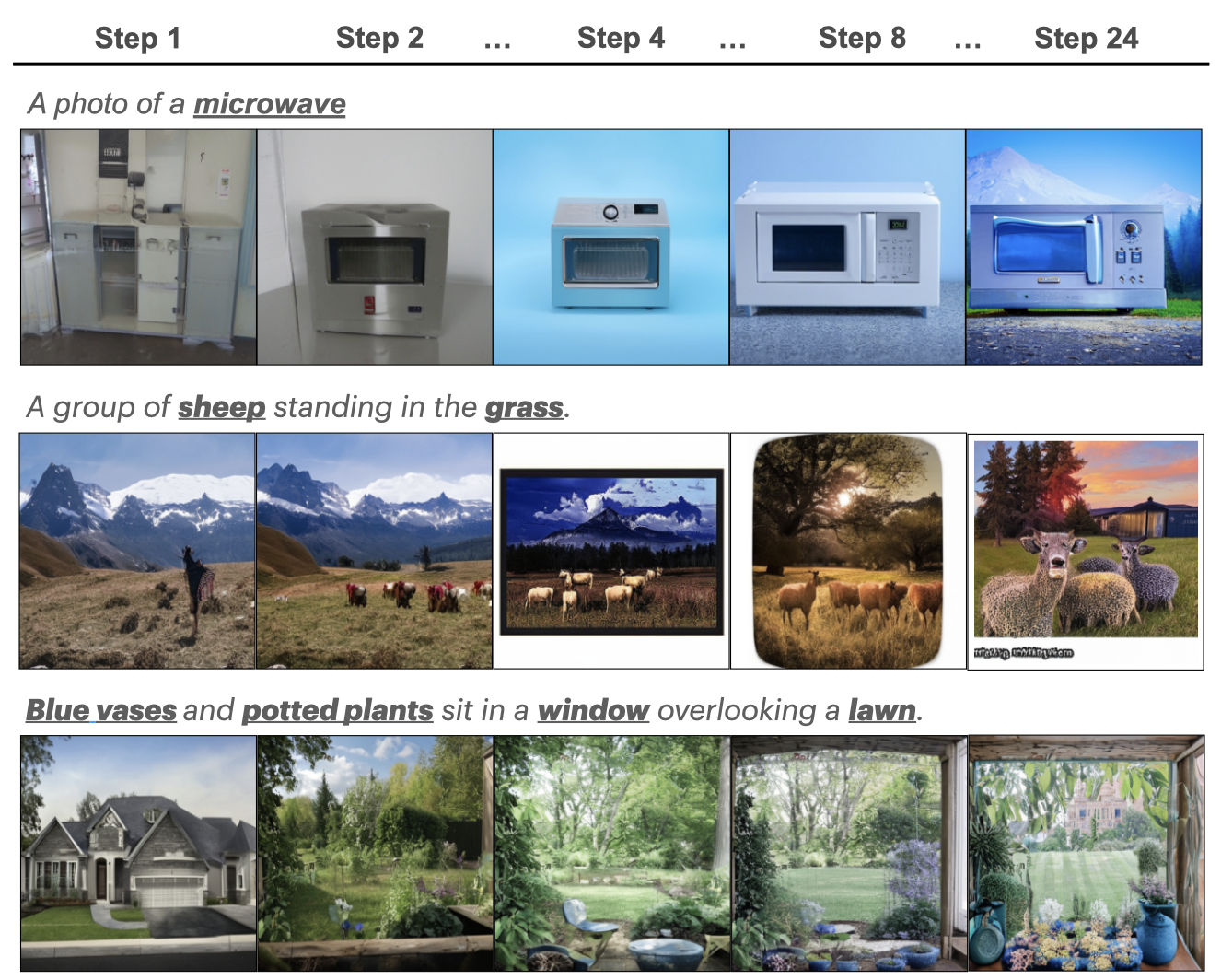}
\caption{\textbf{Direct search over 1D ordered tokens enables training-free text-to-image generation.} We search over FlexTok~\cite{flextok} using a 5-beam strategy and ImageReward as the verifier. We show the best image obtained at each step.}
    \label{fig:training_free_search_demo}
\end{figure}

\subsection{A Theoretical Perspective}



Searching over token sequences to maximize a verifier score is analogous to nearest-neighbor search in structured data, where search efficiency depends critically on how the space is organized. Just as kd-trees~\cite{bentley1975multidimensional} or PCA-based partitions~\cite{mcnames2002fast} enable efficient retrieval by structuring data along the most informative directions, search over tokens benefits when earlier tokens capture the global semantics that verifiers rely on. As formalized in Appendix~\ref{appx:theory}, 1D ordered tokenizers use nested dropout~\cite{rippel2014learning} to explicitly minimize the reconstruction error of intermediate decoded images. Under a Lipschitz assumption on the verifier, this minimization directly bounds the heuristic error~\cite{pearl1983heuristics} at each step, yielding a tighter theoretical bound on the overall search gap.

Both empirical and theoretical results suggest that the coarse-to-fine structure of 1D ordered tokens makes them more amenable to search, as intermediate representations can be effectively evaluated by verifiers that capture global semantic properties. In Sec.~\ref{sec:framework}, we build on this to study search over tokens with autoregressive models.

\section{ Search-over-Tokens (\method) Framework}
\label{sec:framework}

\begin{figure*}[h]
    \centering
    \includegraphics[width=\textwidth]{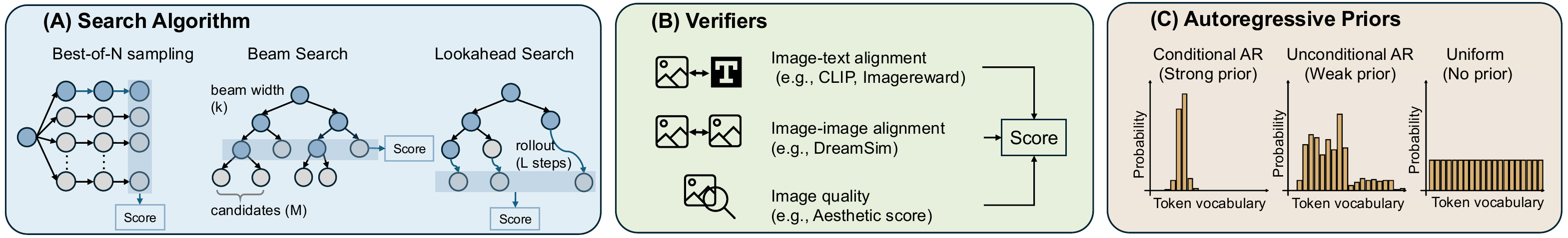}
    \caption{\textbf{Overview of the Search-over-Tokens (\method) evaluation framework.}
The framework studies test-time scaling behavior of image tokenizers when combined with autoregressive generation and search.
(A) \textbf{\textit{Search algorithms}}: different strategies for exploring the token space during generation, including Best-of-$N$ sampling, Beam Search, and Lookahead Search.
(B) \textbf{\textit{Verifiers}}: scoring functions that guide search by evaluating partial or complete decoded images, covering image--text alignment, image--image alignment, and image quality objectives.
(C) \textbf{\textit{Autoregressive prior}}: next-token probability models that constrain the search space, ranging from text-conditional AR models to unconditional AR models and uniform (prior-free) baselines.}
    \label{fig:four_components}
\end{figure*}
In this section, we investigate test-time search combined with autoregressive modeling. To systematically evaluate scaling ability, we employ three standard \textit{search algorithms} (best-of-N, beam, and lookahead search) and assess performance across eight different \textit{verifiers}. Finally, we analyze the impact of the autoregressive prior by varying the guidance level from strong (text-conditional) to weak (unconditional) to none, testing whether an effective tokenizer enables search with minimal priors. We call the approach in this study ``\textbf{S}earch-\textbf{o}ver-\textbf{To}kens'' (\method). Below, we discuss the three components in more detail.

\paragraph{Search algorithm.}
We consider three popular search algorithms and combine them with AR image generation models. (1) \textit{ Best-of-$N$ sampling} generates $N$ independent sequences from the AR model and selects the one with the highest verifier score; it is simple and parallelizable but does not leverage structure or partial tokens in the search trajectory, and thus can sometimes be inefficient. It is usually used as a baseline for test-time scaling. (2) \textit{Beam search} instead maintains a set of $k$ partial hypotheses (beams), expanding each using the model's top-$M$ next tokens (candidates) and retaining the best-scoring continuations. As it operates on partial token sequences, the informativeness of these intermediate states is critical. For 2D grid tokenizations, to enable detokenization from partial token sequences, we pad the remaining tokens, as illustrated in Figure~\ref{fig:overview}. (3) Besides these two, we also explore \textit{Lookahead search}, which is based on beam search but further improves reliability by rolling out partial sequences into more complete images before scoring them, providing verifiers with more meaningful inputs. This approach is especially useful for 2D tokens, but it also makes each decision step significantly more computationally expensive.

In practice, we perform beam search and lookahead search at intermediate token positions and directly generate the remaining tokens with the AR model. We vary the number of search tokens to control the compute budget. An illustration of these three search algorithms is provided in Figure~\ref{fig:four_components} (A), and additional details are provided in Appendix~\ref{sec:app_search_algo_details}.

\paragraph{Verifier.}
Verifiers serve as the search objective, guiding generation by scoring partial or complete token sequences. We consider three major categories: \textit{(1) image--text alignment}, which assesses correspondence between an image and a prompt using models such as CLIP~\cite{Hessel2021CLIPScore}, ImageReward~\cite{xu2023imagereward}, CycleReward~\cite{bahng2025cycle}, PickScore~\cite{kirstain2023pick}
, and HPSv2~\cite{wu2023human}. We also explore using the self-likelihood of tokens in the guided AR model, and design a rule-based verifier using a segmentation model (e.g., GroundedSAM~\cite{ren2024grounded}). \textit{(2) image--image alignment}, which measures similarity to a reference image using models such as Dreamsim~\cite{fu2023dreamsim}; and \textit{(3) image quality} verifiers, such as aesthetic predictors~\cite{schuhmann2022laion}, which evaluate fidelity or aesthetics independent of text.

These verifiers capture different aspects of image quality and are often complementary, for example, rule-based verifiers provide precise checks on object presence or position, while CLIP-like models better capture global semantics such as color or style. Therefore, we also explore \textit{ensemble verifiers} by combining multiple signals through rank-based aggregation for more robust guidance. An illustration of the verifier categories we explored is shown in Fig.~\ref{fig:four_components} (B). Please see the App.~\ref{sec:app_verifier_details} for implementation details of different verifiers.

\paragraph{Autoregressive prior.}
Lastly, we consider the role of the AR model, which provides prior probabilities for the next token and helps prune the search space to its most likely regions. To study its role, we compare performance using the original text-conditional AR model, an unconditional AR model (the same AR model but without text conditioning), and a no-AR model (uniform prior). Intuitively, these configurations represent a spectrum of guidance: the text-\textit{conditional} model provides the strongest prior, followed by the \textit{unconditional} model, while the \textit{uniform} prior is the weakest. An illustration of these priors is shown in Fig.~\ref{fig:four_components} (C) and additional details are provided in App.~\ref{sec:appx_pretrained_models}.


\begin{figure*}[t]
    \centering
    \includegraphics[width=\linewidth]{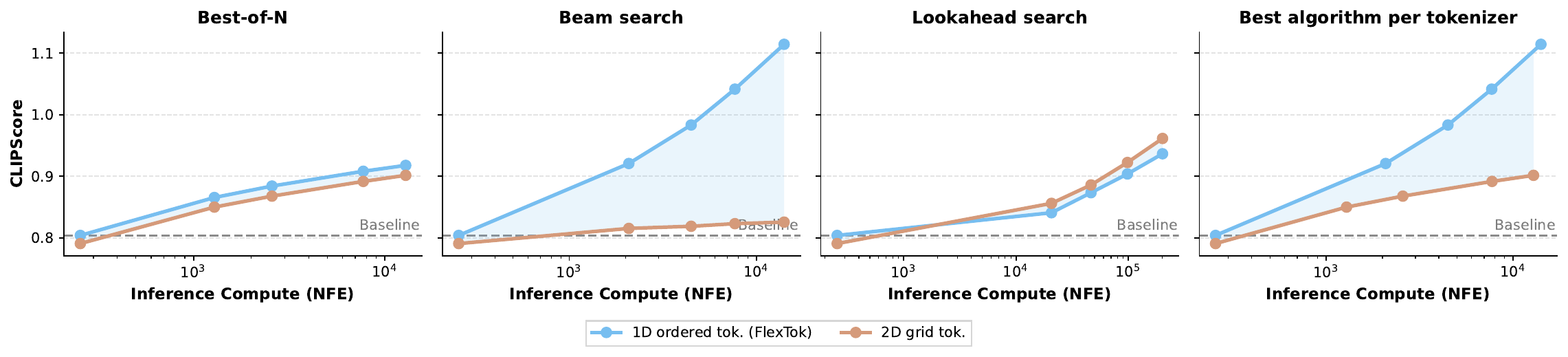}
    \caption{\textbf{Test-time scaling across token structures.} We compare inference-time search algorithms on two tokenizers: 1D ordered tokens (FlexTok) and a controlled 2D grid tokenizer. While best-of-$N$ and lookahead search exhibit similar scaling for both tokenizations, beam search yields substantially larger gains for 1D ordered tokens. \textit{The rightmost panel compares each tokenizer under its best-performing search algorithm, showing that 1D ordered tokens benefit more from search.}  Results are evaluated on the COCO Karpathy validation set~\citep{lin2014microsoft}. NFE denotes the number of function evaluations; the leftmost point corresponds to no search. The dashed line indicates the no-search baseline for FlexTok and is extended for reference.
}
\label{fig:token_structure_comp}
    \vspace{-3mm}
\end{figure*}

\section{Experiments}\label{sec:experiments}
In this section, we study \textit{1D ordered tokenization from a test-time search perspective}. We first describe the setup, then present results on test-time scaling with different search algorithms, verifier-guided zero-shot control, generation under different priors, and a comprehensive verifier analysis.

\subsection{Experimental Setting}

\textbf{Models.} Our primary 1D ordered tokenizer is FlexTok d18--d28, paired with a default 3.4B-parameter AR model. For scaling analysis, we additionally evaluate smaller AR variants (212M, 530M, and 1.4B parameters). For controlled comparisons, we employ a 2D grid tokenization baseline~\citep{flextok} that exactly matches the data, architecture, and training compute of our 3.4B FlexTok setup. We also evaluate Janus-1.3B~\citep{wu2024janus}, a competitive 2D grid-based AR model. Finally, to demonstrate generalizability, we evaluate other ordered generation models (Semanticist~\cite{Wen2025semanticist} and Infinity~\cite{han2025infinity}) in Appendix~\ref{app:other_tokenizers}.

\textbf{Datasets.} We evaluate text-to-image generation on the COCO Karpathy validation set~\citep{lin2014microsoft} and GenEval~\citep{ghosh2023geneval}. For zero-shot multimodal control, we use DreamBench++~\citep{peng2024dreambench++}, which includes reference images in addition to text prompts.

\textbf{Search Algorithms.} We evaluate best-of-$N$ sampling, beam search, and lookahead search. Unless otherwise specified, we vary $N$ from 1 to 50 for best-of-$N$. For beam and lookahead search, we use a beam width of 5 with 10 candidates per step and vary the number of searched tokens.

\textbf{Inference Compute.} We report performance as a function of inference compute measured by the \emph{Number of Function Evaluations} (NFE), counting each token generation or verification as one evaluation (cf. App.~\ref{sec:appx_nfe} for details).

More implementation details are provided in App.~\ref{app:implementation} and~\ref{app:exp_setting}.

\subsection{1D Ordered Tokens Enable Better TTS}

\paragraph{Controlled Experiments}
To isolate the effect of token structure on test-time scaling, we compare AR models trained on 1D ordered tokens (FlexTok) and a controlled 2D grid token baseline. We evaluate the search strategies described above; see \cref{fig:token_structure_comp}. For best-of-$N$, we use $N \in \{1, 5, 10, 30, 50\}$. For beam search, we vary the number of searched tokens in $\{16, 64, 128, 256\}$, and for lookahead search, we perform full rollouts with searched tokens in $\{4, 8, 16, 32\}$. Due to the high cost of large search budgets, experiments are conducted on a 300-image subset of the COCO Karpathy validation set~\citep{lin2014microsoft}, which we find sufficient and stable (cf. App.~\ref{app:exp_scale}).

Under no search, the two models achieve comparable CLIP scores, confirming that they have similar base generation quality. As inference compute increases, both tokenizations exhibit similar scaling trends under best-of-$N$ sampling and lookahead search. In contrast, \textit{beam search produces markedly different behavior across token structures}: while performance improves rapidly for 1D ordered tokens, it yields only marginal gains for 2D grid tokens.

This divergence indicates that the effectiveness of search depends critically on token structure: for 1D ordered tokens, partial prefixes already encode semantically meaningful global structure, making \textit{beam search the most compute-efficient strategy}. For 2D grid tokens, intermediate states provide weak or misleading signals, making beam search ineffective. While lookahead search can partially recover performance by rolling out more complete images before scoring, it comes at substantially higher computational cost. Within a comparable compute budget, \textit{best-of-N sampling} therefore achieves the best performance for 2D grid tokens.

Finally, when comparing each tokenizer under its best-performing search algorithm, \textit{\textbf{1D ordered tokenization consistently achieves higher performance}} across inference budgets. This demonstrates a clear representation-level advantage: the gap between 1D and 2D tokenizations cannot be closed by search algorithm choice alone. Consistent trends are observed on GenEval (cf. App.~\ref{app:tok_comp_geneval}).

\paragraph{Comparison with Janus.}
We further compare FlexTok with Janus~\citep{wu2024janus}, a competitive autoregressive model using a 2D grid tokenizer. The two models are closely matched in base performance, making this comparison well suited for examining differences in test-time scaling.

As shown in Fig.~\ref{fig:flextok_janus_search}, although Janus achieves slightly higher performance without search, \textit{FlexTok exhibits stronger test-time scaling under beam search}. Consistent with our controlled experiments, FlexTok leverages beam search more effectively, demonstrating the scaling advantage of 1D ordered tokenization. In contrast, beam search provides limited benefits for Janus, where best-of-$N$ sampling slightly outperforms it across inference budgets.

\begin{figure}[h]
    \centering
    \includegraphics[width=\linewidth]{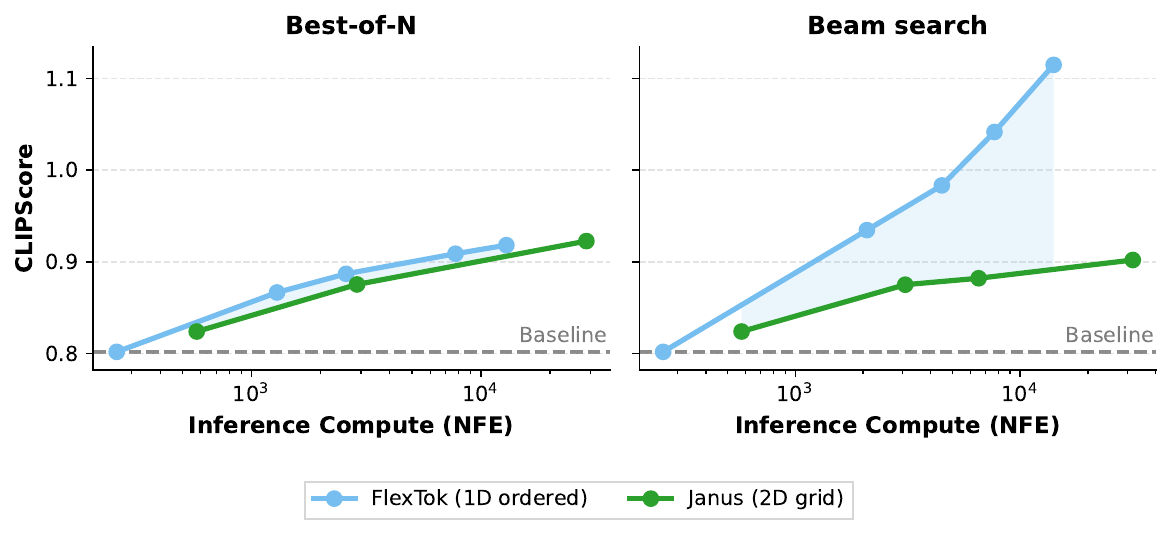}
    \caption{\textbf{Test-time scaling compared with Janus.}
We compare FlexTok (1D ordered tokens) and Janus~\citep{wu2024janus} (2D grid tokens) under best-of-$N$ sampling and beam search.
While Janus achieves slightly higher performance without search, FlexTok exhibits stronger scaling under beam search as inference compute increases.
Results are evaluated on the COCO validation set.
}
    \label{fig:flextok_janus_search}
    \vspace{-3mm}
\end{figure}

\paragraph{Other ordered generation paradigms.} 
To further verify that our findings generalize beyond FlexTok, we evaluate two additional ordered generation paradigms. First, we consider \textbf{Semanticist}~\cite{Wen2025semanticist}, a 1D ordered tokenizer that shares FlexTok's nested-dropout-based ordering mechanism but differs in architecture and token space design, and compare it against LlamaGen~\cite{sun2024autoregressive}, a comparable model using 2D grid tokens, on ImageNet-1K class-to-image generation. We observe the same trend as in our experiments: while all search methods improve both models, \textit{beam search yields larger gains for the ordered 1D tokenizer}. Second, we evaluate \textbf{Infinity}~\cite{han2025infinity}, a scale-wise autoregressive model related to VAR~\cite{tian2024visual}, on text-to-image generation on COCO. Infinity benefits more from search than Janus but less than FlexTok, suggesting that \textit{ordering improves search effectiveness, while semantic coarse-to-fine ordering is particularly effective}. Detailed results are provided in App.~\ref{app:other_tokenizers}.

\paragraph{Scaling across model sizes.}
We examine the extent to which inference-time search can compensate for training-time compute. We evaluate FlexTok autoregressive models across a range of parameter sizes using best-of-$N$ sampling, which provides a consistent way to control inference compute and trace scaling behavior across model sizes.

We observe that \textit{a 530M-parameter model with sufficient test-time compute can outperform a larger 3.4B-parameter model operating with limited inference compute} (Fig.~\ref{fig:test_time_compute_scaling}). As inference compute increases, however, larger models exhibit stronger scaling behavior. Overall, performance traces a Pareto frontier with respect to inference FLOPs, where the optimal model size increases with the available compute budget and follows a power-law relationship.

\begin{figure}[h]
    \centering
    \includegraphics[width=0.45\textwidth]{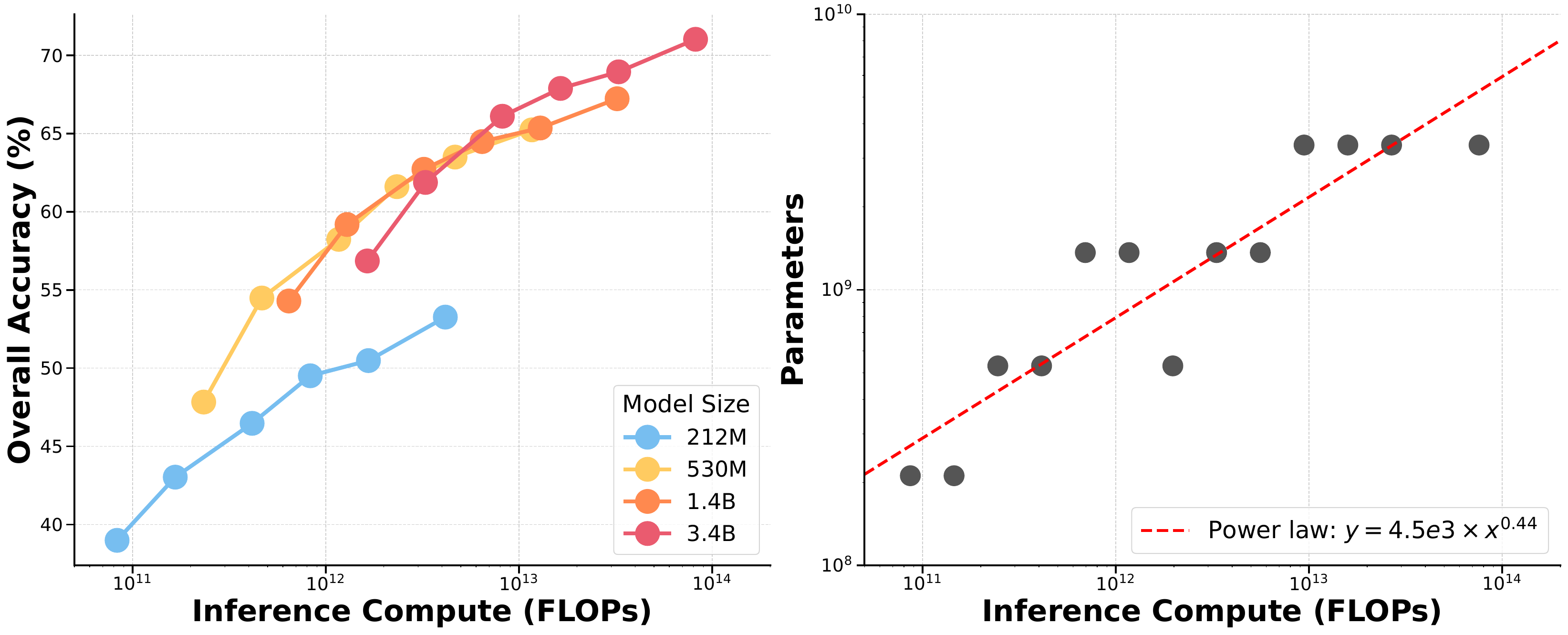}
    \caption{\textbf{Performance of search across different model sizes.} We study test-time scaling with different FlexTok AR sizes. \textbf{(Left):} We use GenEval with best-of-N  search and estimate the corresponding inference FLOPS. 
    \textbf{(Right)}: Extracting the model size with best performance within equally log spaced FLOPs buckets, we find alignment with a power law relationship. Fitting a power law of the form $y=a \times x^b$ for the optimal model size as a function of inference compute, we find $a=4.5\times10^3$ and $b=0.44$.}
    \label{fig:test_time_compute_scaling}
    \vspace{-3mm}
\end{figure}

\subsection{1D Ordered Tokens with Zero-Shot Control}
Beyond test-time scaling, we find that search over 1D ordered tokens enables \emph{zero-shot control} using conditioning signals not seen during training. We study a setting where the model generates an image from a text prompt while preserving a visual concept from a reference image, despite being trained only on text--image pairs. 

We perform beam search with FlexTok using an image--image similarity verifier, DreamSim~\citep{fu2023dreamsim}, and compare against Janus. To provide dense guidance, we apply search over the first 32 tokens for both models; for Janus, we use full lookahead rollouts at each step, which are necessary for concept preservation with 2D grid tokens, with all other parameters unchanged. We evaluate on DreamBench++~\citep{peng2024dreambench++}, a concept preservation benchmark with text prompts and reference images.

As shown in Fig.~\ref{fig:dreamsim} and Table~\ref{tab:dbpp_flextok_janus}, search substantially improves concept preservation for FlexTok (+18.4 on DINO-I) while preserving prompt-following performance. Janus also benefits, but the gains are smaller (+5.9 on DINO-I), even with lookahead search, indicating that \textit{1D ordered tokenization more effectively supports zero-shot image-guided control}. Additional qualitative results are shown in App.~\ref{app:concept_pres_vis}.

\begin{figure}[h!]
    \centering
    \includegraphics[width=\linewidth]{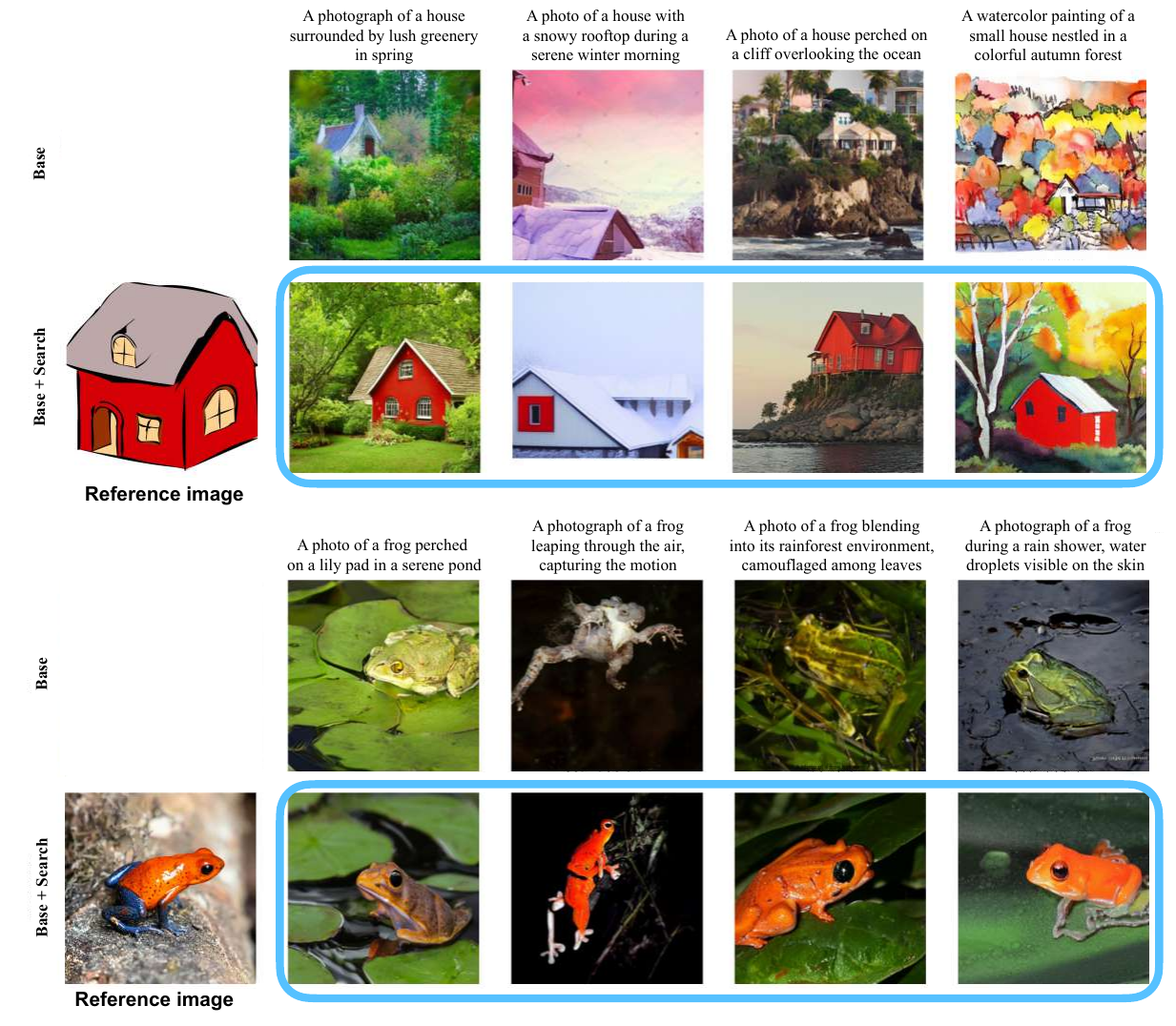} 
\caption{\textbf{Image generation with zero-shot concept preservation via search.}
Search over 1D ordered tokens enables multimodal control without finetuning by incorporating an image similarity verifier (DreamSim~\citep{fu2023dreamsim}) at inference time. The top row shows direct autoregressive generation with FlexTok, while the bottom row shows generations guided by image-based verification.}
    \label{fig:dreamsim}
\end{figure}

\begin{table}[h!]
\centering
\small
\newcommand{\zg}[1]{%
  \hspace{0.25em}\scriptsize
  \textcolor{ForestGreen}{\rlap{#1}}
}
\caption{\textbf{DreamBench++ concept preservation and prompt-following results.}
We follow the benchmark and report improvements in concept preservation (DINO-I, CLIP-I) and prompt following (CLIP-T) for FlexTok and Janus under test-time search. Both models benefit from search, but FlexTok achieves substantially larger gains with lower test-time compute. 
}
\renewcommand{\arraystretch}{1.2}
\begin{tabular}{llll}
\toprule
\textbf{Method} & \textbf{DINO-I} & \textbf{CLIP-I} & \textbf{CLIP-T} \\
\midrule
FlexTok
& 32.5
& 68.1
& 34.1 \\
FlexTok + Beam Search
& \textbf{50.9} \zg{+18.4}
& \textbf{76.5} \zg{+8.4}
& 33.1  \\
\midrule
Janus
& 34.8
& 69.0
& 35.5 \\
Janus + Lookahead Search
& \textbf{40.7} {\zg{+5.9}}
& \textbf{71.9} {\zg{+2.9}}
& 35.6  \\
\bottomrule
\end{tabular}
\label{tab:dbpp_flextok_janus}
\end{table}

\subsection{1D Ordered Tokens Enable Generation by Search}
As discussed in Sec.~\ref{sec:analysis}, direct beam search over FlexTok tokens can generate reasonable images. We quantitatively evaluate this \emph{generation-by-search} setting and analyze the role of the autoregressive prior. Specifically, we consider three priors: (1) a \textit{conditional} AR prior (standard text-conditioned FlexTok), (2) an \textit{unconditional} AR prior, and (3) a \textit{uniform} prior. For the unconditional and uniform priors, we uniformly sample 10\% of the token space during search.

We evaluate on a subset of 180 GenEval prompts covering single- and two-object categories. As shown in Table~\ref{tab:ar_priors}, search with a uniform prior achieves 79\% on single-object generation and 32\% on two-object generation, demonstrating that generation without any AR prior is feasible. Incorporating an unconditional prior further improves performance, while the conditional prior achieves the best results overall. Fig.~\ref{fig:training_free_search} and App.~\ref{app:ar_prior_extend} provide visual examples.

Similarly, experiments with another 1D ordered tokenizer (Semanticist~\cite{Wen2025semanticist}) show that text-to-image generation remains feasible even with a weak AR prior (e.g., a class-conditional prior); see App.~\ref{app:other_tokenizers}.

\begin{table}[h]
\centering
\caption{\textbf{Quantitative comparison of three different priors for search.} We compare the performance of a uniform prior, an unconditional AR prior, and a conditional AR prior using beam search on the GenEval subset with FlexTok. Results in Acc. (\%).}
\resizebox{0.4\textwidth}{!}{%
\begin{tabular}{lccc}
\toprule
\textbf{Prior} & \textbf{Search} & \textbf{Single Object} & \textbf{Two Object} \\
\midrule
Uniform     & yes & 79  & 32 \\
Unconditional  & yes & 85  & 33 \\
Conditional    & yes & \textbf{100} & \textbf{81} \\
\midrule
Conditional    & no  & 97  & 48 \\
\bottomrule
\end{tabular}%
}
\label{tab:ar_priors}
\end{table}

\begin{figure}[h]
    \centering
    \includegraphics[width=\linewidth]{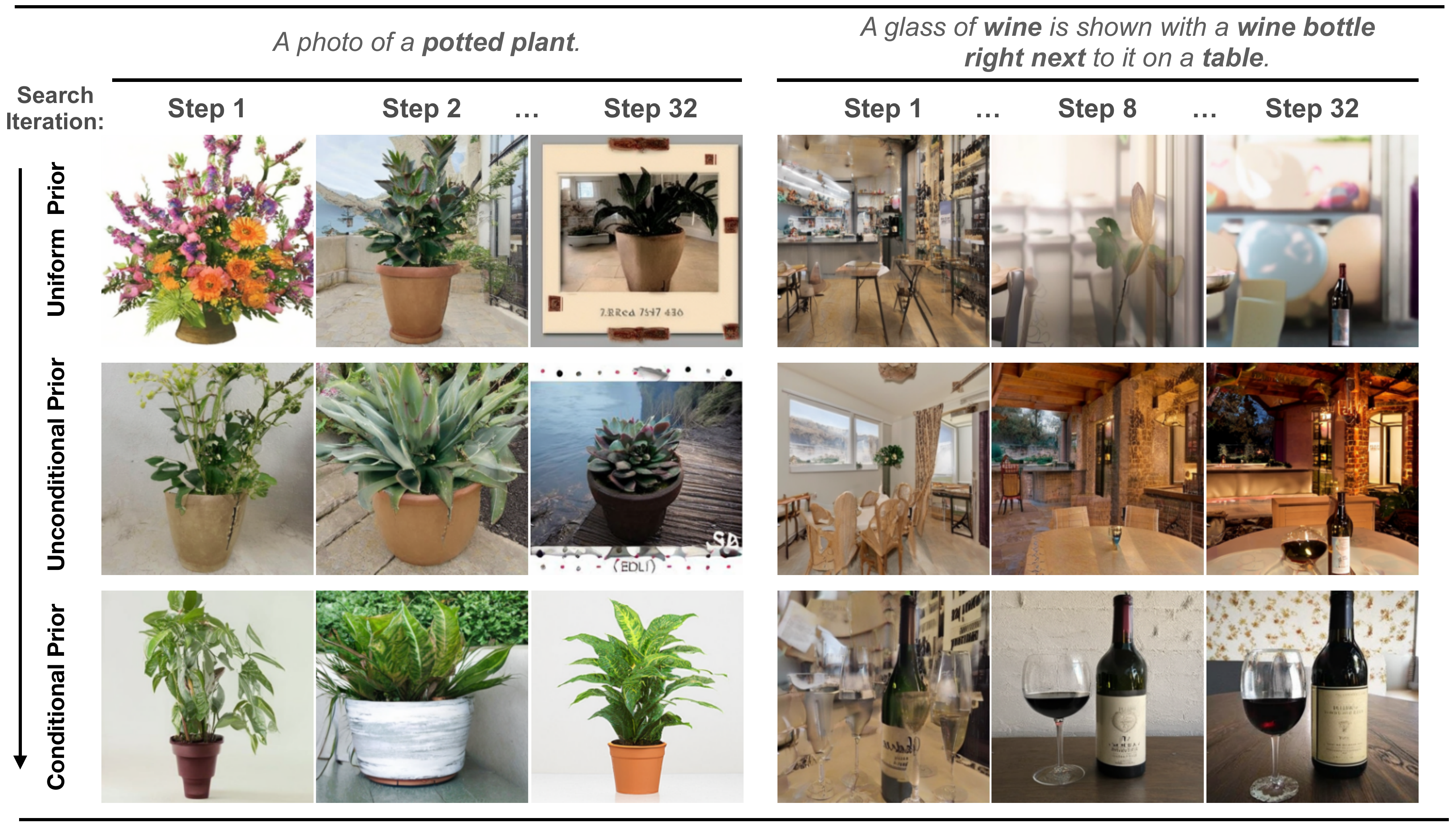} 
\caption{\textbf{Visual comparison of three priors for search.}
Uniform and unconditional priors combined with verifier-guided test-time search can generate reasonable images, but they explore a much larger search space than the conditional prior. As a result, the conditional prior often reaches tokens closer to the final image at earlier steps (e.g., directly identifying a potted plant or bottle in the first token). We show two example prompts of different complexity: simple prompts can often be matched within two tokens, while more complex prompts require longer token sequences.}
    \label{fig:training_free_search}
\end{figure}

\begin{figure}[h]
    \centering
    \includegraphics[width=\linewidth]{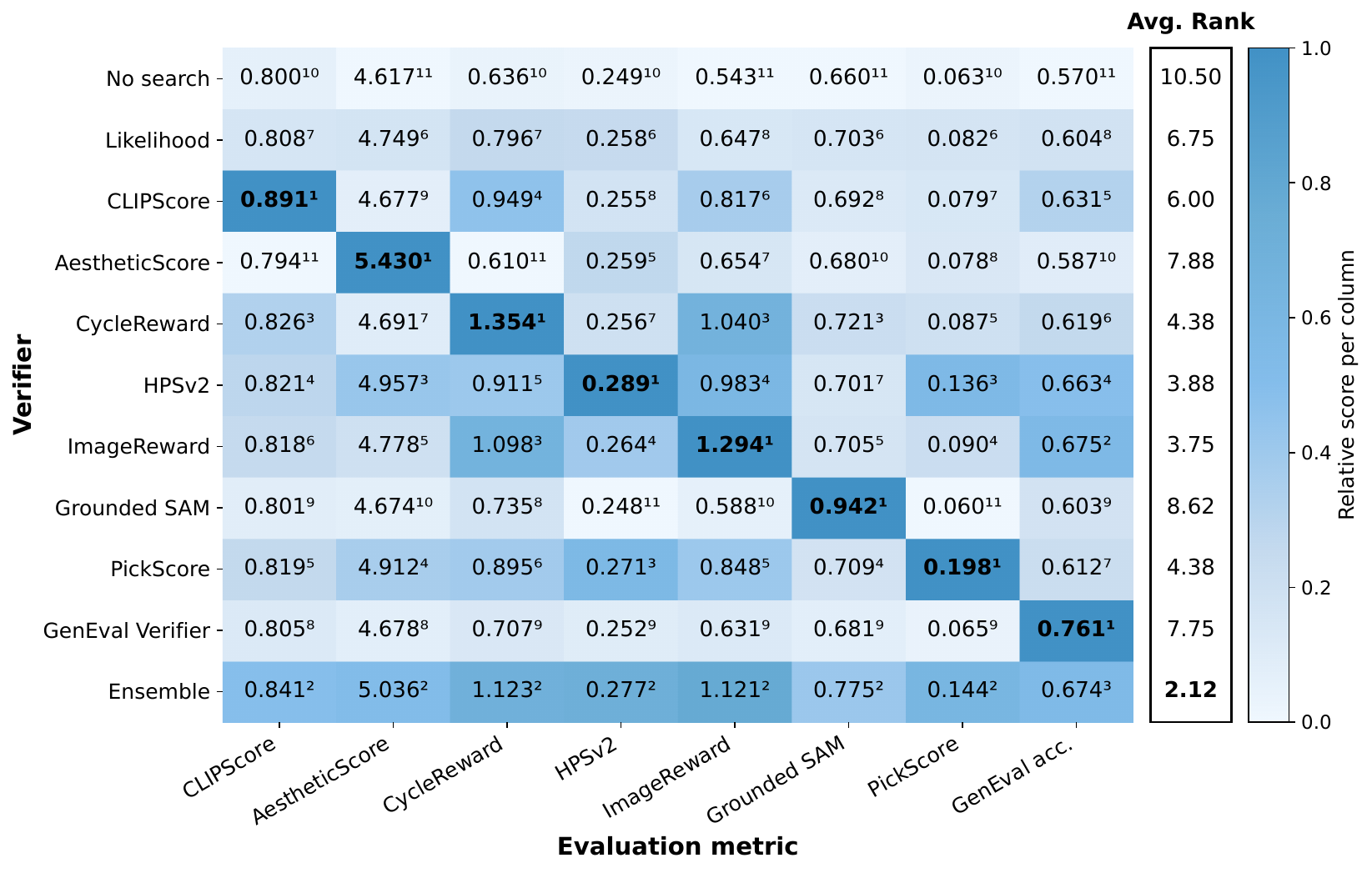}
    \caption{
\textbf{Comparison of different verifiers.} Each row reports search using one verifier.
All methods use the same beam search algorithm on FlexTok. The best score in each column is highlighted in bold. The superscript in each cell represents the rank within that column's metric, and the last column reports the average of these column-wise ranks, providing an overall rank for each verifier.}
    \label{fig:verifier_comparison}
\end{figure}

\subsection{Analysis of Different Verifiers}

Finally, we study test-time scaling with different verifiers. All experiments are conducted on GenEval with FlexTok and beam search, using 9 search steps to balance efficiency and performance. We evaluate eight verifiers: likelihood, CLIPScore~\citep{clip,Hessel2021CLIPScore}, Aesthetic Score~\citep{schuhmann2022laion}, CycleReward~\citep{bahng2025cycle}, HPSv2~\citep{wu2023human}, ImageReward~\citep{xu2023imagereward}, Grounded SAM~\citep{ren2024grounded}, and PickScore~\citep{kirstain2023pick}, along with an ensemble aggregating all eight. We include an oracle setting using the GenEval ground-truth metric as verifier.

As shown in Fig.~\ref{fig:verifier_comparison}, search with all verifiers consistently improves over the AR baseline across metrics, indicating that \textit{search robustly enhances generation quality}. When ranking verifiers column-wise, each performs best on its own objective. Notably, the \textit{ensemble} typically ranks second on individual metrics but achieves the \textit{best overall average ranking}, demonstrating robust and balanced performance. Among individual verifiers, \textit{ImageReward} and \textit{HPSv2} achieve the strongest average rankings, suggesting that human-preference models serve as effective general verifiers. Full GenEval results and visualizations for different verifiers are provided in App.~Table~\ref{tab:sup_geneval_verifier_summary} and Figure~\ref{fig:verifier_progression_1}--\ref{fig:verifier_progression_5} .

\section{Related Work}\label{sec:related_work}

\paragraph{Image tokenization}
Image tokenization creates compressed latent representations of images and enables efficient generative modeling over the latent space, including diffusion~\cite{rombach2022high}, masked~\cite{chang2022maskgit}, and autoregressive~\cite{ramesh2021zero} approaches. The standard approach maps images into a fixed 2D grid of quantized tokens~\cite{Oord2017VQVAE, Esser2020VQGAN}, which are typically predicted in raster-scan order by autoregressive generation models~\cite{ramesh2021zero, yu2022scaling, sun2024autoregressive, chen2025janus}. However, 2D grid tokenization implicitly
assumes information is distributed uniformly across space. To overcome this rigidity, 1D tokenizers such as TiTok~\cite{yu2024image} compress images into short 1D sequences (e.g., 32 discrete tokens), greatly reducing the number of tokens required to reconstruct an image. A growing line of work further introduces 1D tokenizers that support variable-length encoding and coarse-to-fine ordering~\cite{yan2024elastictok, duggal2025adaptive, miwa2025one, flextok, Wen2025semanticist, selftok_cvpr, selftok_arxiv, liu2025detailflow}, where early tokens capture global structure and later
tokens add detail. We refer to this family as \textit{1D ordered tokens}. Among them, FlexTok~\cite{flextok} achieves stable tokenization down to a single token, produces semantically coherent reconstructions at any prefix length, and the autoregressive models trained on its tokens have shown strong text-to-image generation, making it a natural testbed for our work. Complementary to the advances in tokenization, we study how token structure affects test-time scaling.

\paragraph{Test-time scaling in image generation}
Test-time scaling (a.k.a. inference-time scaling), where additional inference-time compute improves output quality, has proven effective in large language models~\cite{wei2022cot, snell2024scaling, huang2024self} and has recently been explored for image generation. For diffusion models,~\citet{ma2025its} established a verifier-plus-search framework, showing that searching over noise trajectories substantially improves generation quality; subsequent works further explore this direction~\cite{singhal2025general, zhang2025classical, he2025scaling, sabour2025test}. For autoregressive models, TTS methods have been developed for specific token structures: TTS-VAR~\cite{chen2025tts} targets next-scale VAR~\cite{tian2024visual} models, while ScalingAR~\citep{chen2025go} and GridAR~\cite{park2025progress} design search strategies for 2D grid-based AR models. Text-based chain-of-thought approaches~\cite{jiang2025t2i, guo2025can} offer another direction but require post-training and models with language-generation ability.

In this work, we study how token structure affects test-time scaling, and show that 1D ordered tokens are inherently more amenable to search than 2D grid tokens, even enabling training-free image generation via direct search without an AR model. Several concurrent works provide complementary evidence:~\citet{riise2025visual} show that next-scale AR models scale better at test time than diffusion models; SelfTok~\cite{selftok_arxiv} finds that RL-based post-training on ordered tokens yields larger gains than on 2D grid tokens; and~\citet{beyer2025highly} demonstrate training-free generation from highly compressed 1D tokens. Different from these works, we systematically study how token structure interacts with search algorithms, verifiers, and AR priors under controlled settings to provide a holistic view.

Please refer to the Appendix~\ref{appx:related_work} for additional related work.

\section{Conclusion and Limitations}

In this work, we investigated how token structure influences the effectiveness of test-time search in autoregressive image generation. We showed that 1D ordered tokenizers with a coarse-to-fine structure are inherently more amenable to search, and empirically demonstrated their advantages in test-time scaling. Beyond improved scaling behavior, we found that this structure enables effective zero-shot control, and, in the extreme, allows training-free image generation via direct search over the ordered token space. We further provided a systematic analysis of how search strategies, verifiers, and autoregressive priors interact with token structure. We discuss limitations of our study and future work below.

\textbf{Search algorithms.} In this work, search is primarily used as a diagnostic tool to study token structure, rather than as a fully optimized component. The search strategies we evaluate (e.g., beam search, best-of-N, lookahead) are largely generic and not tailored to the structure of ordered tokens. Designing search algorithms that explicitly exploit the coarse-to-fine hierarchy, such as adaptive selection of token positions, learned search policies, or verifier-aware branching, could significantly improve both efficiency and generation quality. Additionally, current approaches allocate compute in a fixed manner; developing adaptive strategies (e.g., early stopping based on verifier saturation) could enable more compute-efficient test-time scaling.

\textbf{Verifiers} While we evaluate a diverse set of verifiers and show consistent improvements from search, the quality of generation is bounded by the reliability of these verifiers. In particular, we observe that with sufficient compute, search can exploit weaknesses in the verifier (i.e., verifier hacking) (cf. App.~\ref{sec:failure_cases}). Most of the current verifiers also provide only global scalar feedback, limiting their ability to guide fine-grained corrections during generation. Developing more robust and interpretable verifiers remains a key challenge.

\textbf{Tokenizer and detokenization constraints.}
Our experiments focus on a 1D ordered tokenizer with a flow-based detokenizer that requires multiple denoising steps. This introduces a computational bottleneck during search, as intermediate decoding is repeatedly required for verification (cf. Fig.~\ref{fig:inference_time}). Moreover, the effectiveness of search depends on the quality of intermediate reconstructions, which may degrade for early token prefixes. Future work could explore more efficient decoding mechanisms (e.g., one-step detokenization), adaptive decoding schedules, or tokenizer designs that further improve intermediate fidelity.

\textbf{Generality across generation paradigms.}
Although we validate our findings on multiple ordered tokenization schemes (e.g., FlexTok and Semanticist) and provide initial results on alternative paradigms (e.g., scale-level ordering in Infinity), our analysis is limited to a relatively small set of models. It remains unclear how broadly these findings generalize across architectures and training objectives. Moreover, our results suggest that token ordering should not be viewed solely as a representational choice, but also as a mechanism for enabling effective test-time search. Developing tokenization schemes that better leverage test-time scaling is a promising direction for future work.

\textbf{Extension to other modalities.}
While our experiments focus on image generation, the core idea of ordered token structures may extend to other domains such as text, video, or multimodal generation. Investigating whether a similar ordering scheme can improve search and controllability in these settings is a promising direction for future work.

\textbf{Training-time vs. test-time compute.}
Our study focuses on test-time scaling and shows that larger models continue to benefit from increased inference-time search. However, it remains unclear whether this trend persists at larger scales. In particular, larger models may exhibit reduced diversity or mode collapse, potentially limiting the effectiveness of search. Understanding how to jointly optimize training-time and test-time compute so that models remain general and easily steerable remains an important open problem.

\section*{Acknowledgment}
We thank Ali Garjani and Jiachen Lu for constructive discussions and assistance in preparing the manuscript. We are also grateful to Muhammad Uzair Khattak, Mingfei Gao, and Anders Boesen Lindbo Larsen for their valuable feedback on earlier versions of the manuscript. We further thank Yizhou Xu and Zhekai Jiang for helpful discussions on the theoretical aspects of this work. We acknowledge Lambda for supporting this paper through academic compute grant program, and a gift from Apple. This work was supported under project ID 43 as part of the Swiss AI Initiative, through a grant from the ETH Domain, with computational resources provided by the Swiss National Supercomputing Centre (CSCS) on the Alps infrastructure.  This work has also received funding from the Swiss State Secretariat for Education, Research and Innovation (SERI).

\bibliography{main}
\bibliographystyle{icml2026}

\newpage
\clearpage
\onecolumn
\appendix
\let\addcontentsline\oldaddcontentsline
\phantomsection
\section*{\Large Table of Contents}
\startcontents[appendices]

\printcontents[appendices]{l}{1}{\setcounter{tocdepth}{2}}

\newpage
\section{Additional Related Work}\label{appx:related_work}

\noindent In this section, we summarize additional related work relevant to our study, providing a broader context for our paper.

\paragraph{Search in AI and LLMs} Search has long been a cornerstone of classical AI, with algorithms such as minimax, alpha--beta pruning, and Monte Carlo Tree Search (MCTS)~\cite{coulom2006efficient, browne2012survey} achieving strong performance in domains like chess, Go, and planning~\cite{russell1995modern, campbell2002deep}. Modern models such as AlphaGo~\cite{silver2016mastering, silver2017mastering} and AlphaZero~\cite{alpha_zero} show that combining learned priors with search yields superhuman performance and benefits from more test-time compute, a pattern also seen in poker~\cite{brown2018superhuman}. Search has become increasingly important in large language models (LLMs), particularly for complex reasoning tasks~\cite{wang2022self, yao2023treethought, lightman2023lets}, where search is performed over intermediate thinking steps. Recent studies have primarily focused on developing better search algorithms~\cite{yao2023treethought, snell2024scaling, besta2024graph}, designing stronger verifiers for evaluating intermediate or final reasoning steps~\cite{lightman2023lets, zhu2023judgelm, wang2023math}, and understanding the scaling laws of test-time compute~\cite{snell2024scaling, brown2024large, wu2024empirical}.

Compared to these domains, intermediate token sequences in image generation typically lack clear semantic meaning, and the flat token space does not naturally lend itself to efficient search strategies. Our work shows that 1D ordered tokenizers can help address these challenges by producing semantically interpretable intermediate states with a coarse-to-fine structure, bringing image generation closer to domains where search has proven effective.

\paragraph{RL-based training for image generation} A growing body of work applies reinforcement learning (RL) to improve image generation using similar reward signals as those that serve as verifiers in test-time search. DDPO~\cite{black2023training} formulates diffusion denoising as a multi-step Markov Decision Process and applies policy gradients to fine-tune on downstream objectives. More recently, GRPO-based methods~\cite{shao2024deepseekmath} have been adapted from LLMs to visual generation: DanceGRPO~\cite{xue2025dancegrpo} and Flow-GRPO~\cite{liu2025flow} adapt GRPO to diffusion and flow-based models, AR-GRPO~\cite{yuan2025argrpo} applies GRPO to next-token-prediction AR image generators, and
\citet{gallici2025fine} integrate GRPO with next-scale VAR models. A separate line of work incorporates textual chain-of-thought reasoning into image generation via RL~\cite{guo2025can, jiang2025t2i}, enabling models to plan in language before producing visual tokens. Notably, most RL-finetuned image generation models still operate with fixed inference-time compute in the visual generation process itself. Where additional compute is introduced, it is typically through language-based reasoning rather than in the image token space. Our work studies test-time scaling by search directly over the image token space.

\noindent\textit{Connection to search.} RL and search both steer generation toward higher-quality outputs guided by a reward/verifier, but RL reshapes the model's distribution through training with a fixed objective, while search explores per instance at inference time, offering flexibility to target different objectives by swapping verifiers without retraining. Empirically, these two approaches have been found to be complementary in both LLMs~\cite{snell2024scaling, huang2024self, yue2025does} and diffusion models~\cite{ma2025its}, where search can further improve already reward-finetuned models. Our work focuses on the search axis, and can be applied on top of RL-finetuned models as well.

\paragraph{Controllability and verifiers in image generation} Controllability is a long-standing goal in generative modeling, pursued through diverse approaches. Classifier guidance~\cite{dhariwal2021diffusion} and classifier-free guidance (CFG)~\cite{ho2022classifier} strengthen conditional generation by amplifying conditioning signals during inference. Spatial conditioning methods such as ControlNet~\cite{zhang2023adding} inject structural signals (e.g., edges, depth, pose) into pretrained models, enabling fine-grained layout control. Preference-based learning with reward models such as ImageReward~\cite{xu2023imagereward}, HPSv2~\cite{wu2023human}, PickScore~\cite{kirstain2023pick}, and CycleReward~\cite{bahng2025cycle} improves text-image alignment and aesthetic quality through RL-based finetuning.

Our work offers an orthogonal angle, improving controllability through verifier-guided search at inference time. By swapping verifiers, one can steer generation toward different objectives on the fly without retraining. Our framework naturally benefits from advances in other areas, such as stronger AR models that narrow the search space and better reward models that provide more reliable verification signals.

\paragraph{Training-time and test-time compute} The performance of generative models can be improved by scaling compute along two axes: training-time compute, by increasing model size, data, and training duration~\cite{kaplan2020scaling, hoffmann2022training, zhai2022scaling, peebles2023scalable}, and test-time compute, by allocating additional computation during inference~\cite{snell2024scaling, muennighoff2025s1, ma2025its}. The tradeoff between these two axes has been studied across domains. In board games, combining learned priors with search yields superhuman performance that improves with more test-time compute~\cite{silver2016mastering, brown2018superhuman}, and
\citet{jones2021scaling} explicitly characterize how training-time and test-time compute can be exchanged using MCTS on the game of Hex. For LLMs,
\citet{snell2024scaling} show that with compute-optimal allocation, a smaller model can outperform one 14$\times$ larger on math reasoning. For diffusion-based image generation, \citet{ma2025its} show that a smaller diffusion model with more test-time compute can achieve similar performance to a larger model.

Our work takes a step in this direction by studying how model size and prior strength interact with test-time compute in autoregressive image generation. We find that a smaller AR model with sufficient test-time search can outperform a larger one without search, and that even with minimal training-time compute (e.g., no trained AR model), sufficient test-time search over ordered tokens can produce reasonable images.

\section{Theoretical Analysis}\label{appx:theory}
\paragraph{Intuition.}
Searching over image tokens to satisfy a target property (e.g., image–text alignment) can be viewed as a structured retrieval problem: the goal is to identify an image configuration that maximizes a verifier score within a large combinatorial space. The efficiency of this search depends critically on how the token space is organized. This is analogous to classical search data structures, where organizing data along informative directions (e.g., kd-trees~\cite{bentley1975multidimensional} or PCA-based partitioning~\cite{mcnames2002fast}) can significantly accelerate retrieval. Both 1D ordered tokenizers and 2D grid tokenizers impose structure over image representations, but in fundamentally different ways. A 1D ordered tokenizer arranges tokens according to their information contribution, so that early tokens capture globally salient content that is most relevant to a semantic verifier. In contrast, a 2D grid tokenizer organizes tokens according to spatial locality, without explicitly aligning token order with semantic importance. Intuitively, a representation that prioritizes information relevant to the objective should enable more effective and reliable search.

We formalize this intuition by analyzing the relationship between the search gap (i.e., the difference between the optimal verifier score and the score achieved by search) and the structure of the token space. We first show that the search gap is bounded by the heuristic error~\cite{pearl1983heuristics}, which measures how accurately intermediate partial generations predict the final verifier score (Proposition~\ref{proposition_1}). We then connect this heuristic error to the reconstruction error between intermediate decoded images and their possible full completions under a Lipschitz assumption on the verifier (Proposition~\ref{proposition_2}). This establishes that search performance depends on how well intermediate states approximate the final image. Since 1D ordered tokenizers are trained with nested dropout to explicitly minimize this intermediate reconstruction error, they induce tighter bounds on the search gap compared to 2D grid tokenizers.

\paragraph{Setup.}
Let $x^* = \arg\max_x\, g(x, c)$ be the optimal image for a given verifier $g$ and context
$c$, and let $\hat{x}$ denote the image found by a heuristic search algorithm. Define the
search gap $\Delta = g(x^*,c) - g(\hat{x},c) \geq 0$. Let $x_{1:t}$ denote the decoded
intermediate image obtained from a token prefix via a partial decoder, and define the optimal
continuation value $F_t(x_{1:t}) := \max_{x_{t+1:T}} g(x_{1:T},c)$. The heuristic error is
$B_t := \sup_{x_{1:t}} |F_t(x_{1:t}) - g(x_{1:t},c)|$. Let $t_0$ be the critical step at
which the search algorithm permanently prunes the optimal prefix $x^*_{1:t_0}$, and define
the continuation suboptimality $\eta_{t_0} := F_{t_0}(\hat{x}_{1:t_0}) - g(\hat{x},c) \geq 0$.

\begin{proposition}\label{proposition_1}
The search gap satisfies
\[
    \Delta \leq 2B_{t_0} + \eta_{t_0}.
\]
\end{proposition}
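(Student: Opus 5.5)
The plan is a three-link chain of inequalities that passes through the heuristic scores at the critical step $t_0$. It rests on one property of the search algorithm: it ranks partial prefixes by the verifier score of their decoded intermediate image, $g(x_{1:t},c)$. First, by optimality of $x^*$, the optimal continuation value of its own prefix equals the optimum:
\[
F_{t_0}(x^*_{1:t_0}) = \max_{x_{t_0+1:T}} g(x^*_{1:t_0},x_{t_0+1:T},c) = g(x^*,c),
\]
because the maximum is attained by the completion $x^*_{t_0+1:T}$ and cannot exceed the global maximum. Second, from the definition of $B_{t_0}$ as a supremum over all prefixes, any prefix $x_{1:t_0}$ satisfies
\[
g(x_{1:t_0},c)-B_{t_0}\le F_{t_0}(x_{1:t_0})\le g(x_{1:t_0},c)+B_{t_0}.
\]

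The key step is to argue that $g(\hat{x}_{1:t_0},c)\ge g(x^*_{1:t_0},c)$. At step $t_0$ the search permanently discards $x^*_{1:t_0}$. The returned image $\hat{x}$ must descend from some prefix retained at that step, namely $\hat{x}_{1:t_0}$. A retained prefix scored at least as well as the pruned one under the heuristic $g(\cdot,c)$, since that is the pruning criterion for beam and lookahead search at the prefix level. Chaining gives
\[
g(x^*,c)=F_{t_0}(x^*_{1:t_0})\le g(x^*_{1:t_0},c)+B_{t_0}\le g(\hat{x}_{1:t_0},c)+B_{t_0}\le F_{t_0}(\hat{x}_{1:t_0})+2B_{t_0}.
\]
By the definition of $\eta_{t_0}$, the last expression equals $g(\hat{x},c)+\eta_{t_0}+2B_{t_0}$. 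Subtracting $g(\hat{x},c)$ yields $\Delta\le 2B_{t_0}+\eta_{t_0}$.

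The main obstacle is making this ranking inequality rigorous rather than informal. Two cases need care. If the optimal prefix is never pruned, $t_0$ is undefined; then I would take $t_0=T$, where $B_T=0$ since $F_T=g$, so the search returns $x^*$ (or an equal-scoring sequence) and $\Delta=0$. If the algorithm scores a lookahead rollout rather than the raw prefix, I would read $g(x_{1:t},c)$ in the definition of $B_t$ as whatever heuristic the algorithm actually uses, and the same chain goes through. Everything else is immediate from the definitions of $F_t$, $B_t$, and $\eta_t$.
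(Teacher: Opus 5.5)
Your proof is correct and follows the paper's argument essentially step for step: it uses the same chain $F_{t_0}(x^*_{1:t_0})\le g(x^*_{1:t_0},c)+B_{t_0}\le g(\hat{x}_{1:t_0},c)+B_{t_0}\le F_{t_0}(\hat{x}_{1:t_0})+2B_{t_0}$, closed with $F_{t_0}(x^*_{1:t_0})=g(x^*,c)$ and the definition of $\eta_{t_0}$. The only difference is minor: the paper simply assumes $g(\hat{x}_{1:t_0},c)\ge g(x^*_{1:t_0},c)$ as the defining property of the divergence step, whereas you derive it from the top-$k$ pruning rule and add the $t_0=T$ and lookahead cases; both presume $x^*_{1:t_0}$ was actually scored as a candidate rather than never sampled.
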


\begin{proof}[Proof sketch]
Assume the algorithm diverges at step $t_0$, preferring $\hat{x}_{1:t_0}$ over
$x^*_{1:t_0}$ (i.e., $g(\hat{x}_{1:t_0}, c) \geq g(x^*_{1:t_0}, c)$). By the definition of
$B_t$, we have
\[
    F_{t_0}(x^*_{1:t_0})
    \leq g(x^*_{1:t_0}, c) + B_{t_0}
    \leq g(\hat{x}_{1:t_0}, c) + B_{t_0}
    \leq F_{t_0}(\hat{x}_{1:t_0}) + 2B_{t_0}.
\]
Since $F_{t_0}(x^*_{1:t_0}) = g(x^*, c)$ and
$g(\hat{x}, c) = F_{t_0}(\hat{x}_{1:t_0}) - \eta_{t_0}$, we obtain
$\Delta \leq 2B_{t_0} + \eta_{t_0}$.
(Note: The AR prior can be incorporated by replacing $g(x,c)$ with
$g(x,c) + \lambda \log p_\omega(x)$.)
\end{proof}

\begin{proposition}\label{proposition_2}
Assume the verifier $g$ is $L$-Lipschitz with respect to the decoded image space, and the
reconstruction error between intermediate tokens and all consistent completions is bounded by
$\epsilon_t$, i.e.,
\[
    \sup_{x \in \mathcal{C}(x_{1:t})} \|x_{1:t} - x\|_2 \leq \epsilon_t.
\]
Then the heuristic error satisfies $B_t \leq L\epsilon_t$.
\end{proposition}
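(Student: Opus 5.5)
The plan is to bound, for an arbitrary prefix $x_{1:t}$, the two one-sided deviations between $F_t(x_{1:t})$ and $g(x_{1:t},c)$. Both reduce to a single Lipschitz estimate. Here $\mathcal{C}(x_{1:t})$ is the set of full images $x = \mathrm{Dec}(x_{1:T})$ obtainable by extending the prefix with some continuation $x_{t+1:T}$. We identify the prefix with its partially decoded image, so that $g(x_{1:t},c)$ and $g(x,c)$ are both values of the same map $S(\cdot,c)$ on image space. The Lipschitz hypothesis then gives, for every $x\in\mathcal{C}(x_{1:t})$,
\[
|g(x,c) - g(x_{1:t},c)| \le L\,\|x - x_{1:t}\|_2 \le L\epsilon_t ,
\]
where the second inequality is the reconstruction assumption in the statement.

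Next we pass to the maximum over continuations. Since $F_t(x_{1:t}) = \max_{x\in\mathcal{C}(x_{1:t})} g(x,c)$, the upper estimate $g(x,c)\le g(x_{1:t},c)+L\epsilon_t$ holds for every completion, so taking the max gives $F_t(x_{1:t}) \le g(x_{1:t},c) + L\epsilon_t$. For the reverse direction we pick any single completion $x'\in\mathcal{C}(x_{1:t})$, which exists because the continuation set is nonempty. Then $F_t(x_{1:t}) \ge g(x',c) \ge g(x_{1:t},c) - L\epsilon_t$. Together these give $|F_t(x_{1:t}) - g(x_{1:t},c)|\le L\epsilon_t$.

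Because $\epsilon_t$ is assumed uniform over prefixes, taking the supremum over $x_{1:t}$ yields $B_t\le L\epsilon_t$. We would then remark that, combined with Proposition~\ref{proposition_1}, this gives $\Delta \le 2L\epsilon_{t_0} + \eta_{t_0}$. Nested dropout, which trains intermediate decodings to be close to full reconstructions, is what keeps $\epsilon_t$ small.

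The argument itself is elementary. The only point needing care is bookkeeping: being explicit that $g$ is evaluated on decoded images (including padding or partial decoding for prefixes), so that the Lipschitz bound applies to both arguments in the same space. We should also note that the maximum defining $F_t$ is attained, or else replace it with a supremum; the bound survives either way. If the verifier is augmented with the AR log-prior term, as in the note after Proposition~\ref{proposition_1}, that term is not covered by the Lipschitz assumption, so we would state the result for $g$ alone.
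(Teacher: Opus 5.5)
Your proposal is correct and follows the same route as the paper: apply the Lipschitz bound to each completion $x \in \mathcal{C}(x_{1:t})$, then take suprema over completions and prefixes. Your two-sided treatment of the maximum defining $F_t$ simply spells out the step the paper's sketch compresses into ``taking supremum over completions.'' The upper bound comes from the max, and the lower bound from any single completion, using that $\mathcal{C}(x_{1:t})$ is nonempty.
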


\begin{proof}[Proof sketch]
For any $L$-Lipschitz verifier $g$, and any completion $x \in \mathcal{C}(x_{1:t})$, we have
$|g(x_{1:t}, c) - g(x, c)| \leq L \|x_{1:t} - x\|_2$. Taking supremum over completions and
prefixes gives $B_t \leq L\epsilon_t$.
\end{proof}

\paragraph{Combined bound.}
Substituting Proposition~\ref{proposition_2} into Proposition~\ref{proposition_1}, we obtain
\[
    \Delta \leq 2L\epsilon_{t_0} + \eta_{t_0},
\]
which directly links the search gap to the reconstruction discrepancy $\epsilon_{t_0}$ of the
tokenizer at the critical pruning step. Here, $\eta_{t_0}$ captures the suboptimality of the search after the divergence point $t_0$. Under a sufficiently strong search procedure (e.g., large-beam search), $\eta_{t_0}$ is typically small, as the continuation from $\hat{x}_{1:t_0}$ approximately maximizes $g$. Moreover, it vanishes under early stopping, where the intermediate decode $x_{1:t_0}$ is returned directly, yielding $\Delta \le 2L\epsilon_{t_0}$. This is particularly natural for 1D ordered tokenizers, whose intermediate decodes are semantically meaningful due to nested dropout.

\begin{itemize}
    \item \textbf{1D Ordered Tokens:} 1D Ordered Tokens explicitly minimize intermediate reconstruction error via nested dropout ($\mathcal{L}_{\text{nested}} = \mathbb{E}_t[\|x_{1:t} - x\|_2^2]$), encouraging
    $\epsilon_t$ to remain small throughout generation. Under a linear reconstruction
    assumption, this relates to a PCA-like decomposition where $\epsilon_t = \bigl(\sum_{s=t+1}^{T} \lambda_s^2\bigr)^{1/2}$~\cite{rippel2014learning}. Because leading components capture dominant global variance early, $\epsilon_t$ decreases rapidly with $t$.

    \item \textbf{2D Grid Tokens:} For spatial grid tokenizations, the reconstruction objective is only enforced at $t=T$. At intermediate steps ($t \ll T$), large portions of the image remain unconstrained, leading to potentially large $\epsilon_t$ and thus loose heuristic bounds.
\end{itemize}

\paragraph{Discussion.}
Our analysis demonstrates that the search gap $\Delta$ is fundamentally governed by the
reconstruction error of intermediate images. 1D ordered tokens explicitly minimize this error
via nested dropout, maintaining a progressively tightening bound that provides reliable
guidance from the earliest tokens. In contrast, 2D grid tokens offer no such structural
guarantee, leading to weaker heuristic guidance at early stages. While lookahead rollouts can
partly mitigate the heuristic error for 2D grid tokens, they come at substantially higher
computational cost and are less effective in settings with weak priors, such as uniform priors
or zero-shot multimodal control. These theoretical findings are consistent with our empirical
results in Section~\ref{sec:experiments}.

\section{Method and Implementation Details}\label{app:implementation}
This section provides additional methods and implementation details for the search algorithms and verifiers explored.

\subsection{Search Algorithms}\label{sec:app_search_algo_details}

Below, we provide detailed formulations and pseudocode for the three search algorithms used in this paper.

As defined in Sec.~3.1 of the main paper, an image is represented as a sequence of $T$ discrete tokens
\[
\mathbf{x} = (x_1, x_2, \dots, x_T), \qquad x_t \in \mathcal{V},
\]
 where $\mathcal{V}$ is the token vocabulary. We assume a verifier function
$g : \mathcal{V}^T \rightarrow \mathbb{R}$, which assigns a scalar score to a (possibly decoded) image. For clarity, we here also define a
\textit{next-token prior} model $p(x_t \mid x_{<t})$, typically an autoregressive (AR) image model conditioned on a text prompt (omitted in notation for simplicity). When such a prior model is unavailable, we also allow a uniform next-token prior (as in Sec.~3.2). Since most verifiers operate on images rather than raw tokens, we denote by $\mathrm{Dec}(\cdot)$ the detokenizer that converts a token sequence into an image. For
\textit{image-based verifiers}, we therefore write
\[ g(\mathbf{x}) = g_{\text{img}}(\mathrm{Dec}(\mathbf{x})).
\]
 throughout this section. We note, however, that some verifiers operate directly on token sequences (e.g., likelihood-based verifiers); these can be written as $g_{\text{tok}}(\mathbf{x})$ and do not require detokenization. For conciseness, the descriptions below focus on the image-based case, but all algorithms apply to token-based verifiers as well.

\paragraph{Best-of-$N$ sampling.} Best-of-$N$ sampling draws $N$ independent sequences from the next-token prior model and selects the one with the highest verifier score. Formally,
\[
\mathbf{x}^\star = \arg\max_{i \in \{1,\dots,N\}} g_{\text{img}}\!\left(\mathrm{Dec}(\mathbf{x}^{(i)})\right),
\]
 where each sequence $\mathbf{x}^{(i)} = (x^{(i)}_1,\dots,x^{(i)}_T)$ is generated autoregressively via
$x^{(i)}_t \sim p(x_t \mid x_{<t})$.

We note that Best-of-$N$ relies crucially on an informative next-token prior; with a uniform prior, all $N$ samples are effectively random trajectories in the full token space and thus cannot produce meaningful images. We provide pseudocode in Algorithm~\ref{algo:best_of_n}.

\begin{algorithm}[h]
\caption{Best-of-$N$ Sampling}\label{algo:best_of_n}
\begin{algorithmic}[1]
\Require Next-token prior $p(\cdot \mid \cdot)$, verifier $g_{\text{img}}$,
detokenizer $\mathrm{Dec}(\cdot)$, number of samples $N$
\Ensure Generated image
\For{$i = 1$ to $N$}
    \State $\mathbf{x} \leftarrow []$
    \For{$t = 1$ to $T$}
        \State Sample $x_t \sim p(x_t \mid \mathbf{x})$
        \State Append $x_t$ to $\mathbf{x}$
    \EndFor
    \State $\text{img}[i] \leftarrow \mathrm{Dec}(\mathbf{x})$
    \State $\text{score}[i] \leftarrow g_{\text{img}}(\text{img}[i])$
\EndFor
\State $i^\star \leftarrow \arg\max_i \text{score}[i]$
\State \Return $\text{img}[i^\star]$
\end{algorithmic}
\end{algorithm}

\paragraph{Beam search.} Beam search is a guided tree search where each node (partial sequence) expands to a few likely next tokens, and only the most promising branches are kept at every step. This allows the verifier to guide generation throughout the process, rather than only evaluating complete images.

Specifically, beam search alternates between (1) expanding each prefix using the next-token prior and (2) selecting the top-$k$ prefixes according to the verifier. Given a beam $B_{t-1}$ at step $t-1$, we first obtain, for each prefix $\mathbf{x}_{1:t-1} \in B_{t-1}$, a set of $M$ candidate next tokens sampled from the conditional prior $p(x_t \mid \mathbf{x})$. We then construct the expanded candidate set
\[
\text{Cand}_t =
\bigl\{
\mathbf{x}_{1:t-1} \circ x_t
\;\big|\;
\mathbf{x}_{1:t-1} \in B_{t-1},\; x_t \in \mathcal{N}_t(\mathbf{x}_{1:t-1})
\bigr\},
\]
 where $\circ$ denotes token concatenation. Among these $kM$ expanded prefixes, we retain the $k$ highest-scoring ones based on partial-image verifier scores:
\[ B_t =
\operatorname{Top}_k\!\left( g_{\text{img}}(\mathrm{Dec}_{\text{partial}}(\mathbf{x}_{1:t})) :\;
\mathbf{x}_{1:t} \in \text{Cand}_t
\right).
\]
 After $T$ steps, beam search selects the completed sequence with the highest full-image score:
\[
\mathbf{x}^\star =
\arg\max_{\mathbf{x} \in B_T} g_{\text{img}}(\mathrm{Dec}(\mathbf{x})).
\]

\textbf{\textit{Importantly}}, the partial detokenizer
$\mathrm{Dec}_{\text{partial}}$ depends on \textit{the underlying token structure.} For 1D ordered token sequences such as FlexTok~\cite{flextok}, we can directly detokenize the current prefix into an image for verification. In contrast, for 2D grid tokenizations used in Janus~\cite{wu2024janus, chen2025janus}, we obtain a partial image by padding the ungenerated grid locations with zeros.

More generally, \textit{beam search does not need to be applied at every token step.} Instead, we can evaluate the verifier only at sparse token positions. In this variant, the autoregressive model is rolled forward for $k$ steps before a verification and selection step is performed (e.g., applying search only at token indices $64, 128, 256, \dots$). Formally, for a skip length $s$, the candidate set becomes
\[
\mathcal{N}_{t+s}(\mathbf{x}_{1:t}) =
\{ x_{t+s}^{(1)}, \dots, x_{t+s}^{(M)}
\;\big|\; x_{t+s}^{(i)} \sim p(x_{t+s} \mid \mathbf{x}_{1:t})
\}.
\]
 Each candidate $x_{t+s}^{(i)}$ corresponds to rolling the AR model forward
$s$ steps without verifier guidance and applying selection only at the
$(t+s)$-th token.

Because beam search applies verification and guidance during the generation path, it can still produce meaningful images even when no informative prior model is available (as shown in Sec.~3.2). Pseudocode for beam search is provided in Algorithm~\ref{algo:beam_search}.

\begin{algorithm}[h]
\caption{Beam Search (with Sparse Verification)}
\label{algo:beam_search}
\begin{algorithmic}[1]
\Require Next-token prior $p(\cdot \mid \cdot)$, verifier $g_{\text{img}}$,
detokenizer $\mathrm{Dec}_{\text{partial}}$ (or $\mathrm{Dec}$), beam size $k$,
width $M$, skip length $s$
\Ensure Best generated image

\State $B \leftarrow \{\text{empty sequence}\}$

\For{$t = 1$ to $T$}
    \State $\text{Cand} \leftarrow []$

    \If{$t \bmod s = 0$} \textcolor{gray!60}{\Comment{search step}}
        \For{each prefix $\mathbf{x}$ in $B$}
            \State Sample
            $\{x_t^{(1)}, \dots, x_t^{(M)}\}
            \sim p(x_t \mid \mathbf{x})$

            \For{each $x_t^{(i)}$}
                \State $\mathbf{x}' \leftarrow \mathbf{x} \circ x_t^{(i)}$
                \State $\text{img} \leftarrow \mathrm{Dec}_{\text{partial}}(\mathbf{x}')$
                \State $v \leftarrow g_{\text{img}}(\text{img})$
                \State Append $(\mathbf{x}', v)$ to $\text{Cand}$
            \EndFor
        \EndFor
        \State $B \leftarrow \{\, \mathbf{x}' \mid (\mathbf{x}', v) \in \text{Top}_k(\text{Cand}) \,\}$
    \Else \textcolor{gray!60}{\Comment{roll forward without branching}}
        \For{each prefix $\mathbf{x}$ in $B$}
            \State Sample a single token $x_t^{r} \sim p(x_t \mid \mathbf{x})$
            \State $\mathbf{x}' \leftarrow \mathbf{x} \circ x_t^{r}$
            \State Append $\mathbf{x}'$ to $\text{Cand}$
        \EndFor
        \State $B \leftarrow \text{Cand}$
    \EndIf
\EndFor

\State $\mathbf{x}^\star \leftarrow
\arg\max_{\mathbf{x}\in B} g_{\text{img}}(\mathrm{Dec}(\mathbf{x}))$
\State \Return $\mathrm{Dec}(\mathbf{x}^\star)$
\end{algorithmic}
\end{algorithm}

\paragraph{Lookahead search.} Lookahead search follows the same procedure as beam search but replaces partial-image verification with a rollout-based evaluation. Instead of detokenizing the current prefix $\mathbf{x}_{1:t}$ directly, lookahead rolls out $L$ additional tokens using the next-token prior before calling the verifier. This provides the verifier with more complete visual context, especially when early tokens contain little semantic information (e.g., in 2D grid tokenizations).

Formally, lookahead follows the same procedure as beam search except in the scoring stage. Instead of directly detokenizing the partial prefix with
$\mathrm{Dec}_{\text{partial}}$, lookahead replaces this with a rollout-based detokenization: each prefix is evaluated only after rolling it out for $L$ additional autoregressive steps. The selection step therefore becomes
\[ B_t =
\operatorname{Top}_k\!\left( g_{\text{img}}\!\left(
\mathrm{Dec}(\mathbf{x}_{1:t} \circ \mathbf{x}_{\text{la}})
\right) :\;
\mathbf{x}_{1:t} \in \text{Cand}_t
\right),
\]
 where
\[
\mathbf{x}_{\text{la}} = (x_{t+1}, \dots, x_{t+L}),
\]
 and the rollout tokens are sampled from the next-token prior,
$x_{t+\ell} \sim p(x_{t+\ell} \mid \mathbf{x}_{1:t+\ell-1})$ for
$\ell \in [1,L]$.

Note that only the prefix $\mathbf{x}_{1:t}$ is retained in the candidate set and beam; the rollout tokens are used solely for scoring and discarded afterward. We provide pseudocode in Algorithm~\ref{algo:lookahead}, with the differences from beam search highlighted in \textcolor{blue!60!black}{blue} for clarity.

\begin{algorithm}[h]
\caption{Lookahead Search (differences from Beam Search in \textcolor{blue!60!black}{blue})}
\label{algo:lookahead}
\begin{algorithmic}[1]
\Require Next-token prior $p(\cdot \mid \cdot)$, verifier $g_{\text{img}}$,
detokenizer $\mathrm{Dec}$, beam size $k$, width $M$, skip length $s$,
\textcolor{blue!60!black}{rollout length $L$}
\Ensure Best generated image

\State $B \leftarrow \{\text{empty sequence}\}$

\For{$t = 1$ to $T$}
    \State $\text{Cand} \leftarrow []$

    \If{$t \bmod s = 0$} \textcolor{gray!60}{\Comment{search step}}
        \For{each prefix $\mathbf{x}$ in $B$}
            \State Sample $\{x_t^{(1)}, \dots, x_t^{(M)}\}
            \sim p(x_t \mid \mathbf{x})$

            \For{each $x_t^{(i)}$}
                \State $\mathbf{x}' \leftarrow \mathbf{x} \circ x_t^{(i)}$
                \State \textcolor{blue!60!black}{Initialize rollout token list:
                $\mathbf{x}_{\mathrm{la}} \leftarrow []$}
                \For{\textcolor{blue!60!black}{$\ell = 1$ to $L$}}
                    \State \textcolor{blue!60!black}{Sample
                    $x_{t+\ell} \sim p(x_{t+\ell} \mid \mathbf{x}' \circ \mathbf{x}_{\mathrm{la}})$}
                    \State \textcolor{blue!60!black}{Append $x_{t+\ell}$ to $\mathbf{x}_{\mathrm{la}}$}
                \EndFor

                \State \textcolor{blue!60!black}{$\text{img} \leftarrow
                \mathrm{Dec}(\mathbf{x}' \circ \mathbf{x}_{\mathrm{la}})$}
                \State $v \leftarrow g_{\text{img}}(\text{img})$
                \State Append $(\mathbf{x}', v)$ to $\text{Cand}$
            \EndFor
        \EndFor

        \State $B \leftarrow
        \{\, \mathbf{x}' \mid (\mathbf{x}', v) \in \text{Top}_k(\text{Cand}) \,\}$
    \Else \textcolor{gray!60}{\Comment{roll forward without branching}}
        \For{each prefix $\mathbf{x}$ in $B$}
            \State Sample a single token $x_t^{r} \sim p(x_t \mid \mathbf{x})$
            \State $\mathbf{x}' \leftarrow \mathbf{x} \circ x_t^{r}$
            \State Append $\mathbf{x}'$ to $\text{Cand}$
        \EndFor
        \State $B \leftarrow \text{Cand}$
    \EndIf
\EndFor

\State $\mathbf{x}^\star \leftarrow
\arg\max_{\mathbf{x}\in B} g_{\text{img}}(\mathrm{Dec}(\mathbf{x}))$
\State \Return $\mathrm{Dec}(\mathbf{x}^\star)$
\end{algorithmic}
\end{algorithm}

\subsection{Verifiers}\label{sec:app_verifier_details}
\renewcommand{\arraystretch}{1.3}
\begin{table*}[h]
\centering
\small
\begin{tabular}{p{4cm} p{4cm} p{8cm}}
\toprule
\textbf{Category} & \textbf{Verifier} & \textbf{Description / Purpose} \\
\midrule
\multirow{7}{*}{\textbf{Image--text alignment}}
& CLIPScore~\citep{clip,Hessel2021CLIPScore}
& Contrastive image--text similarity using CLIP ViT-B/32; fast global semantic alignment. \\
& ImageReward~\citep{xu2023imagereward}
& Reward model trained on human comparisons; strong on semantic correctness and perceptual realism. \\
& PickScore~\citep{kirstain2023pick}
& Trained on user comparisons; lightweight and effective for global semantics and composition. \\
& HPSv2~\citep{wu2023human}
& Large unified human-preference model; strong on fine-grained quality, realism, and adherence. \\
& CycleReward (Combo)~\citep{bahng2025cycle}
& Combines cycle-consistency and preference-based objectives; stable semantic alignment. \\
& Likelihood
& Token-level self-likelihood $\sum \log p(x_i \mid x_{<i})$; reflects model fluency without detokenization. \\
& Rule-based
& Object grounding + segmentation (via GroundedSAM~\cite{ren2024grounded}) for checking object existence, count, color, and spatial relations. \\
\midrule
\multirow{1}{*}{\textbf{Image--image alignment}}
& DreamSim~\citep{fu2023dreamsim}
& Perceptual similarity model trained on human triplets; strong for reference-image guidance. \\
\midrule
\multirow{1}{*}{\textbf{Image quality}}
& LAION Aesthetic Score~\citep{schuhmann2022laion}
& Predicts aesthetic appeal, composition, clarity; complements semantic verifiers. \\
\midrule
\multirow{1}{*}{\textbf{Ensemble}}
& Rank-based fusion~\citep{ma2025its}
& Aggregates ranks across verifiers; robust to scale differences and leverages complementary strengths. \\
\bottomrule
\end{tabular}
\caption{Summary of all verifiers used in our work.}
\label{tab:sup_verifier_summary}
\end{table*}

Verifiers serve as the search objective, providing guidance for both partial and complete token sequences. As in the main paper, we group the verifiers into three major categories: (1) \emph{image--text alignment} verifiers, which include CLIPScore, ImageReward, HPSv2, PickScore, CycleReward, likelihood-based scoring, and our rule-based verifier; (2) \emph{image--image alignment} verifiers, represented by DreamSim; (3) \emph{image-quality} verifiers, such as the LAION Aesthetic Score; and (4) an \emph{ensemble} of them. We summarize the verifiers in Table~\ref{tab:sup_verifier_summary} and describe implementation details for each verifier below.

\paragraph{CLIPScore.} We use OpenAI CLIP ViT-B/32~\citep{clip} and compute the CLIPScore as defined in \citet{Hessel2021CLIPScore}:
\[
\mathrm{CLIPScore} = 2.5 \times \max(\text{cos\_sim}, 0),
\]
where \(\text{cos\_sim}\) is the cosine similarity between normalized CLIP embeddings of the text prompt and generated image. CLIPScore provides a fast semantic alignment signal.

\paragraph{ImageReward.} ImageReward~\citep{xu2023imagereward} is trained on 137K human image--prompt preference pairs. It predicts which image better matches human intent, capturing semantic correctness, object quality, and perceptual realism. We use the official ImageReward-v1 checkpoint.

\paragraph{PickScore.} PickScore~\citep{kirstain2023pick} is trained on the Pick-a-Pic dataset, which contains real user preference comparisons gathered from an online image-generation interface. It is lightweight and performs well for global semantic alignment and composition quality.

\paragraph{HPSv2.} HPSv2~\citep{wu2023human} is a large-scale reward model trained on a unified mixture of human preference datasets. Compared to ImageReward and PickScore, HPSv2 more reliably captures fine-grained prompt adherence and style consistency.

\paragraph{CycleReward (Combo).} CycleReward~\citep{bahng2025cycle} introduces a self-supervised cycle-consistency objective between text and image embeddings without relying directly on human annotations. The \emph{Combo} variant aggregates multiple reward heads (e.g., cycle-based and preference-based), producing a stable alignment score and improving robustness across prompt types.

\paragraph{Likelihood-based verifier.} For autoregressive models with accessible token probabilities, we compute the token-level self-likelihood
\[ g_{\text{tok}}(x_{1:t}) = \sum_{i=1}^t \log p(x_i \mid x_{<i}),
\]
 which reflects the model's internal image-text consistency. This score requires no detokenization and is therefore efficient to evaluate. However, it is inherently limited by the predictive capability and biases of the AR model itself, and in practice tends to yield only limited improvements in image quality or alignment.

\paragraph{Rule-based verifier (Grounded-SAM).} To evaluate fine-grained, structured constraints such as object presence, count, color, and spatial relations, we implement a rule-based verifier built on Grounded-SAM~\cite{ren2024grounded}. Our design follows the GenEval~\cite{ghosh2023geneval} evaluation pipeline, but replaces the Mask2Former detector with an open-vocabulary segmentation pipeline (GroundingDINO + SAM) to support more general prompts, and replaces binary spatial checks with a continuous scoring scheme for improved robustness.

Given an object phrase from the prompt (e.g., ``a red apple'', ``a cat'', ``a dog to the left of a chair''), GroundingDINO predicts text-conditioned bounding boxes and SAM produces corresponding segmentation masks. From these masks, the verifier computes: (i) \textit{object existence and count} by matching detected masks to the specified object; (ii) \textit{color consistency} using CLIP-based classification on cropped object regions, following \citet{ghosh2023geneval}; and (iii) \textit{spatial relation accuracy}, obtained by comparing object masks along the axis implied by the relation (e.g., ``left of'', ``in front of''). This is computed by projecting mask centroids or support regions onto the relevant axis and converting the relative ordering into a continuous score in $[0,1]$~\cite{rezaei2025settle}, yielding a smooth and stable spatial signal instead of a brittle pass/fail check. All criteria are aggregated into a single continuous score in $[0,1]$, allowing the rule-based verifier to provide interpretable and localized guidance to the search algorithm.

Note that this verifier requires parsing the prompt into structured attributes (objects, colors, relations). In our experiments on GenEval, we use the provided metadata directly; for general usage, this parsing would typically require an LLM or VLM to extract the necessary attributes from free-form text.

\paragraph{DreamSim.} DreamSim~\citep{fu2023dreamsim} is a perceptual similarity model trained on human-labeled triplets. It provides a strong reference-image alignment signal, capturing fine-grained textures and semantics similarity accurately.

\paragraph{Aesthetic Score.} We use the LAION aesthetic predictor~\citep{schuhmann2022laion}, trained on human-rated aesthetic labels. It produces a continuous score reflecting visual appeal, clarity, composition, and style. This verifier complements semantic scores by penalizing visually low-quality outputs.

\paragraph{Ensemble verifiers.} Following \citet{ma2025its}, we combine multiple verifiers using rank-based aggregation. Each candidate is ranked independently by each verifier, and the ranks are summed (or averaged) to produce an aggregated score. This avoids inconsistencies between heterogeneous scoring scales and leverages the complementary strengths of different verifiers, yielding more robust guidance during search.

\subsection{Tokenizer and AR Models}\label{sec:appx_pretrained_models}
We use pretrained tokenizers and autoregressive image generation models across all experiments, including FlexTok~\cite{flextok}, Janus~\cite{wu2024janus}, and a 2D grid tok variant from FlexTok (used as a tokenization ablation; see the FlexTok paper~\cite{flextok} for details). Each model is paired with its official tokenizer and publicly released AR checkpoint. For FlexTok, we use the largest 3.4B AR model as the default and additionally evaluate other released sizes (212M, 530M, 1.4B, 3.4B) for scaling analysis.

All AR models are trained for text-to-image generation. For unconditional AR experiments, we run FlexTok with an empty prompt (i.e., the CFG token only). Across all models, we keep their official sampling hyperparameters (e.g., temperature, top-$k$, top-$p$, classifier-free guidance) exactly as released and do not retune any sampling settings.
\section{Experiment Settings}\label{app:exp_setting}
This section provides the dataset protocols, evaluation settings, search configurations, and inference-time compute metrics used throughout our experiments.

\subsection{Dataset and Evaluation Settings}\label{sec:appx_evaluation_setting}
We evaluate on three benchmarks covering both text-to-image and reference-guided generation: GenEval, COCO Captions, and DreamBench++. We describe each benchmark and the evaluation protocol below.

\paragraph{GenEval}~\cite{ghosh2023geneval} GenEval measures compositional text-to-image alignment with respect to object presence, object count, color attributes, and spatial relations. It contains 553 prompts, each describing a simple but compositionally challenging scene (e.g., ``A blue cup on the left of a pink table''). The evaluation uses a rule-based verification pipeline using multiple models like Mask2Former and CLIP~\cite{clip}. We follow the common practice of using the average of 5 examples for evaluation, and run the official evaluation protocol to get the results.

\paragraph{MS-COCO}~\cite{lin2014microsoft} MS-COCO evaluates general text-to-image generation quality. We use a subset of 300 captions from the MS-COCO validation captions (Karpathy split). It covers a broad range of everyday scenes and realistic photographic styles. We use the original captions without augmentation and report CLIP-based image--text alignment scores as the primary metric, while also including other verifiers as supplemental evaluations.

\paragraph{DreamBench++.}~\cite{peng2024dreambench++} DreamBench++ evaluates concept preservation and reference-guided generation. It contains 1{,}350 instances, each consisting of a text prompt paired with a reference image. Generated images are evaluated using CLIP and DINO similarities with respect to the reference, measuring both image-text semantic consistency and image-image consistency. DreamBench++ serves as a benchmark for scenarios requiring multiple forms of control.

\subsection{Search Configuration}

Unless otherwise specified, we use a consistent set of hyperparameters for each search algorithm across all experiments.
\textit{For beam search}, we use a beam width of $k=5$ and a candidate width of $M=10$ for all AR models.
\textit{Lookahead search} uses the same $(k, M)$ configuration, and unless otherwise specified, the rollout continues until the end of the sequence.
For \textit{Best-of-$N$}, we vary $N$ from 1 to 50.

To control the number of search steps in beam search and lookahead search, we adopt different verification schedules depending on the tokenizer.
For FlexTok, we typically use {exponentially spaced steps}, e.g., $t \in \{2^0, 2^1, 2^2, \dots, 2^8\}$ (9 steps), following its exponential training schedule and as used in the verifier analysis experiments. For \textbf{2D grid tokens} (including Janus), verification is instead performed at {uniformly spaced positions}, e.g., $t = 32 \times n$ for $n = 0, \dots, 8$ (9 steps). In our controlled comparisons, we also apply {uniform verification to FlexTok} for fairness. However, in general, {exponential spacing yields better performance}. Designing more effective verification schedules remains an open direction for future work.

\subsection{Inference-Time Compute}\label{sec:appx_nfe}
Following prior work~\cite{ma2025its}, we report inference-time compute using the \emph{Number of Function Evaluations} (NFE), where each next-token sampling step and each verifier call count as one evaluation. We explain each case below.

\paragraph{Best-of-$N$ sampling.} Each of the $N$ sequences requires sampling $T$ tokens and one image-level verification:
\[
\mathrm{NFE}_{\text{BoN}} = NT + N.
\]

\paragraph{Beam search.} With beam size $k$, candidate width $M$, sequence length $T$, and skip length $s$ for sparse verification:
\[
\mathrm{NFE}_{\text{beam}} = Tk \;+\; \frac{T}{s}(kM).
\]

\paragraph{Lookahead search.} Lookahead rolls out each candidate by $L$ steps before verification:
\[
\mathrm{NFE}_{\text{lookahead}} = Tk \;+\; \frac{T}{s}(kM)(1 + L),
\]
where $L$ is truncated near the end of the sequence. 

We note that in practice, the rollout length $L$ may vary across steps (e.g., when rolling out to the end of the sequence), and thus cannot always be treated as a constant multiplier. The above expression is therefore a simplified approximation. Similarly, if the skip length $s$ varies (e.g., under exponentially spaced verification), the formulation should be adjusted accordingly.

\paragraph{Relation to wall-clock time.} NFE is useful because it is hardware-agnostic and comparable across algorithms, but it does not fully capture realized runtime. In particular, the three functions have different runtimes. We compare wall-clock runtime across search algorithms in Sec.~\ref{app:app_runtime} and Fig.~\ref{fig:inference_time}, report per-verifier latency in Table~\ref{tab:sup_verifier_time}, and use GFLOPs in Fig.~\ref{fig:test_time_compute_scaling} when comparing AR models of different sizes.
\section{Additional Results}

\subsection{Other Ordered Tokenization and Generation Schemes}\label{app:other_tokenizers}

To test whether our findings generalize beyond FlexTok, we evaluate two additional settings: (1) \textbf{Semanticist}~\cite{Wen2025semanticist}, another 1D ordered tokenizer trained with a similar nested-dropout-based ordering mechanism, and (2) \textbf{Infinity}~\cite{han2025infinity}, a scale-wise autoregressive generation framework related to VAR~\cite{tian2024visual}. These experiments broaden the scope of our study and help disentangle whether the observed gains arise from a specific implementation or from the underlying token ordering structure.

\paragraph{Results on Semanticist.}
To verify that the advantage of ordered token structures is not specific to FlexTok, we evaluate Semanticist~\cite{Wen2025semanticist}, a 1D ordered tokenizer that differs from FlexTok in architecture and token space design while sharing a similar nested-dropout-based ordering mechanism. Since the associated AR model is trained for class-to-image generation on ImageNet, we evaluate it on ImageNet-1K~\cite{krizhevsky2012imagenet} with one generated image per class (1K images total), and compare it against the original LlamaGen-L~\cite{sun2024autoregressive} model, which uses a 2D grid tokenizer and provides a relatively controlled baseline because Semanticist adopts the same AR backbone on top of its tokenizer.

To better study test-time scaling under this class-conditioned setting, we consider two prompt types for the verifier: (1) \textbf{simple prompts} of the form ``a photo of a [CLASS\_NAME]'' and (2) \textbf{complex prompts} from ImageNet-1K-VL-Enriched~\cite{imagenet1k_vl_enriched}, which provide richer caption-level guidance. The latter also offers a useful stress test of whether search can enhance generation when the prior is weak and only provides class-level conditioning. We use CLIPScore~\cite{Hessel2021CLIPScore} as the verifier for all experiments. We apply the \textbf{same search algorithms} to both models: Best-of-$N$ sampling ($N{=}10$), beam search (beam size $=5$, candidates $=10$, 4 search steps), and lookahead search (same hyperparameters as beam search, with rollout to the end). Since Semanticist uses 32 tokens while LlamaGen uses 256 tokens, we distribute the 4 search steps proportionally across the sequence: $[1,4,16,32]$ for Semanticist and $[64,128,192,256]$ for LlamaGen.

Table~\ref{tab:semanticist_llamagen} shows that all search algorithms improve over the baseline for both models, but \emph{beam search yields substantially larger gains for the ordered 1D tokenizer}. For example, on simple prompts, beam search improves Semanticist by $+10.42$ CLIPScore points, compared to only $+3.51$ for LlamaGen; on complex prompts, the gap is similarly pronounced ($+12.45$ vs.\ $+4.04$). These results are consistent with our main findings in the paper and further support the conclusion that the advantage is structural rather than specific to FlexTok. We also observe that complex prompts benefit even more from search than simple prompts, suggesting that \textit{text-guided search can provide meaningful gains even when the underlying autoregressive prior is only class-conditioned}. Example visualizations are provided in Figure~\ref{fig:semanticist_vis}.

\begin{table}[h]
\centering
\small
\caption{\textbf{Test-time search on Semanticist (1D ordered) vs.\ LlamaGen-L (2D grid) on ImageNet-1K.} We report CLIPScore (\%) under simple and complex prompt guidance. Improvements over the base model are shown in parentheses.}
\label{tab:semanticist_llamagen}
\begin{tabular}{llccccc}
\toprule
Model & Tokenizer & Prompt & Base & Best-of-$N$ & Beam Search & Lookahead \\
\midrule
Semanticist & 1D ordered & Simple  & 74.60 & 81.77 (+7.16) & \textbf{85.02 (+10.42)} & 85.51 (+10.91) \\
LlamaGen-L  & 2D grid    & Simple  & 76.85 & 82.50 (+5.64) & 80.36 (+3.51)          & 81.96 (+5.11) \\
Semanticist & 1D ordered & Complex & 70.67 & 79.72 (+9.06) & \textbf{83.12 (+12.45)} & 83.68 (+13.02) \\
LlamaGen-L  & 2D grid    & Complex & 73.52 & 80.48 (+6.96) & 77.56 (+4.04)          & 79.42 (+5.90) \\
\bottomrule
\end{tabular}
\end{table}

\begin{figure*}[h]
    \centering
    \includegraphics[width=0.9\linewidth]{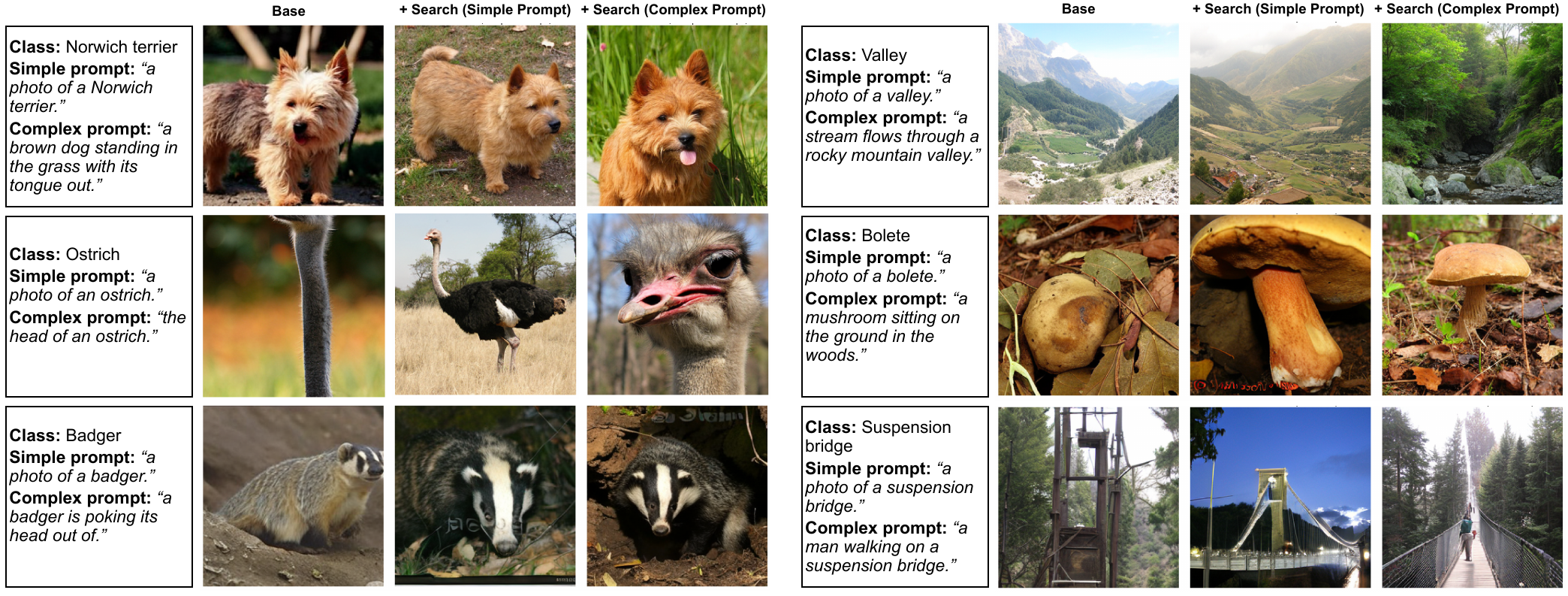}
    \caption{\textbf{Visualization examples of Semanticist for class-to-image generation on ImageNet.} We compare direct autoregressive generation, beam search with a simple prompt, and beam search with a complex prompt. Beam search generally improves image--text alignment, while complex prompts provide additional guidance beyond class priors. Below each group of images, we show the ImageNet class ID and name, along with the corresponding simple and complex prompts.}
    \label{fig:semanticist_vis}
\end{figure*}

\paragraph{Results on Infinity.}
We further evaluate {Infinity-2B}~\cite{han2025infinity}, a scale-wise autoregressive image generation framework related to VAR~\cite{tian2024visual}. Unlike standard 2D raster-scan tokenization, Infinity generates images progressively from low to high spatial resolution, providing a hierarchical ordering over generation steps. This makes it a useful intermediate case between standard 2D grid tokenization and semantically ordered 1D tokenization.

For fair comparison with the other models in our study, we evaluate direct autoregressive decoding and beam search on COCO using CLIPScore. We use beam width $=5$, candidates $=10$, and 9 search steps. Since Infinity predicts 13 scales in total and the earliest 4 scales are too coarse to provide reliable verifier guidance, we apply search from step 5 to step 13. This keeps the search budget comparable to our other autoregressive baselines while focusing computation on the stages where intermediate outputs become informative.

Results are shown in Table~\ref{tab:infinity_compare}. Infinity benefits substantially from beam search, improving by $+6.2$ CLIPScore points over its autoregressive baseline. This gain is larger than that of Janus ($+5.3$) but smaller than that of FlexTok ($+9.6$). We interpret this as evidence that \emph{ordering itself helps search}, while \emph{semantic coarse-to-fine ordering helps the most}. In particular, Infinity provides a meaningful hierarchy at the spatial-resolution level, which improves searchability relative to standard 2D grid generation, but still appears less effective than a tokenization whose prefixes are explicitly organized by semantic information content.

\begin{table}[t]
\centering
\small
\caption{\textbf{Comparison of generation paradigms on COCO.} We report CLIPScore (\%) for autoregressive decoding and beam search under matched search budgets.}
\label{tab:infinity_compare}
\begin{tabular}{lccc}
\toprule
Model & Janus-1.3B~\cite{wu2024janus} & Infinity-2B~\cite{han2025infinity} & FlexTok-1.3B~\cite{flextok} \\
\midrule
Token structure & 2D grid & 2D multi-scale & 1D ordered \\
AR decoding     & 82.3    & 81.9           & 80.4 \\
Beam search     & 87.6 (+5.3) & 88.1 (+6.2) & 90.0 (+9.6) \\
\bottomrule
\end{tabular}
\end{table}

Together, these additional experiments support the broader conclusion of our paper: the effectiveness of test-time search depends strongly on token structure. Search can improve multiple generation schemes, but the magnitude of the gain varies substantially depending on whether intermediate prefixes expose sufficiently informative structure for the verifier to guide generation effectively.

\subsection{Token Structure Comparison on GenEval}\label{app:tok_comp_geneval}
Figure~\ref{fig:tok_comp_geneval} extends the scaling comparison of Figure~\ref{fig:token_structure_comp} to the GenEval benchmark, using ImageReward as the verifier. Results are shown for all three search algorithms (Best-of-$N$, beam search, and lookahead search) across both 1D ordered tokens (FlexTok) and 2D grid tokens. The top row reports ImageReward scores and the bottom row reports GenEval compositional accuracy, both as a function of inference compute (NFE). The pattern is consistent with our COCO findings: Beam search yields substantially larger gains for FlexTok than for the 2D grid tokenizer, while Best-of-$N$ shows more similar scaling across both. The rightmost panel compares each tokenizer under its best-performing algorithm, confirming that FlexTok benefits more from increased inference compute across both metrics. Notably, while FlexTok achieves consistently higher ImageReward scores, the improvement in GenEval accuracy is more modest. This gap likely reflects the mismatch between the continuous verifier signal (ImageReward) and the discrete compositional accuracy metric used by GenEval.

\begin{figure*}[htp]
    \centering
    \includegraphics[width=\linewidth]{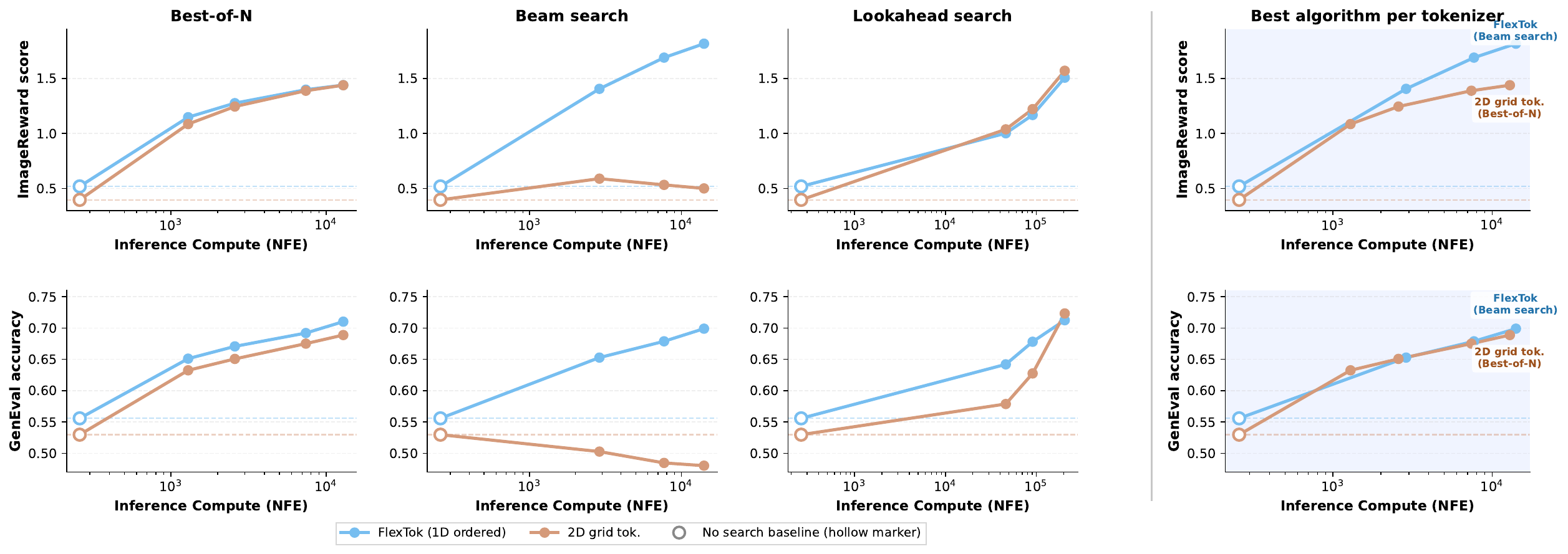}
    \caption{\textbf{Test-time scaling across token structures on GenEval.} We compare three inference-time search algorithms (Best-of-$N$, beam search, and lookahead search) on two tokenizers: 1D ordered tokens (FlexTok) and a 2D grid tokenizer, evaluated on GenEval using ImageReward as the verifier. Top row: ImageReward score vs.\ inference compute; bottom row: GenEval accuracy. \textit{The rightmost panel shows each tokenizer paired with its best-performing search algorithm, revealing that FlexTok benefits more from increased inference compute.} Note that while FlexTok achieves higher verifier scores, the improvement in final GenEval accuracy is more modest, likely due to the gap between the verifier signal and discrete task accuracy. NFE (number of function evaluations) measures inference compute; the hollow leftmost marker on each curve denotes the no-search baseline, extended as a color-matched dashed line across each panel for reference.}
    \label{fig:tok_comp_geneval}
\end{figure*}

\subsection{Experimental Scale and Variance Analysis on COCO} \label{app:exp_scale}
We limit our controlled ablations to 300 COCO images due to the substantial computational cost of comprehensive hyperparameter sweeps and lookahead baselines. To study the statistical significance, we further scale key beam search evaluations to a 1,000-image subset of COCO. As shown in Table~\ref{tab:variance_analysis}, the variance across 5 random subsets is low, and the performance gap between 1D and 2D tokenizations remains consistent with the 300-image results. 

\begin{table}[h]
\centering
\caption{\textbf{Variance analysis on COCO (CLIPScore \%).} Results on 300-image and 1K-image subsets. Improvements over the baseline are shown in parentheses. The mean and standard deviation are computed over 5 random subsets for the 300-image setting. Beam search uses a beam width of 5, 10 candidates per step, and 32 search steps.}
\small
\begin{tabular}{llll}
\toprule
\textbf{Algorithm} & \textbf{300 Images} & \textbf{1K Images} & \textbf{300 Images (Mean $\pm$ Std)} \\
\midrule
FlexTok Baseline & 80.39 & 80.28 & 80.23 $\pm$ 0.38 \\
FlexTok Beam Search & 93.44 (+13.05) & 93.72 (+13.44) & 93.53 $\pm$ 0.50 (+13.30 $\pm$ 0.29) \\
2D Grid Baseline & 79.06 & 79.19 & 79.25 $\pm$ 0.24 \\
2D Grid Beam Search & 81.59 (+2.53) & 81.59 (+2.40) & 81.68 $\pm$ 0.35 (+2.43 $\pm$ 0.24) \\
\bottomrule
\end{tabular}
\label{tab:variance_analysis}
\end{table}
\subsection{Wall-Clock Runtime Analysis}\label{app:app_runtime}
We report wall-clock runtimes for search algorithms and verifiers to provide a practical reference for practitioners. Figure~\ref{fig:inference_time} breaks down wall-clock inference time per algorithm. The dominant cost shifts from AR generation (Best-of-$N$) to detokenization (beam and lookahead search) as search steps increase.

\begin{figure*}[h!]
    \centering
    \includegraphics[width=0.8\linewidth]{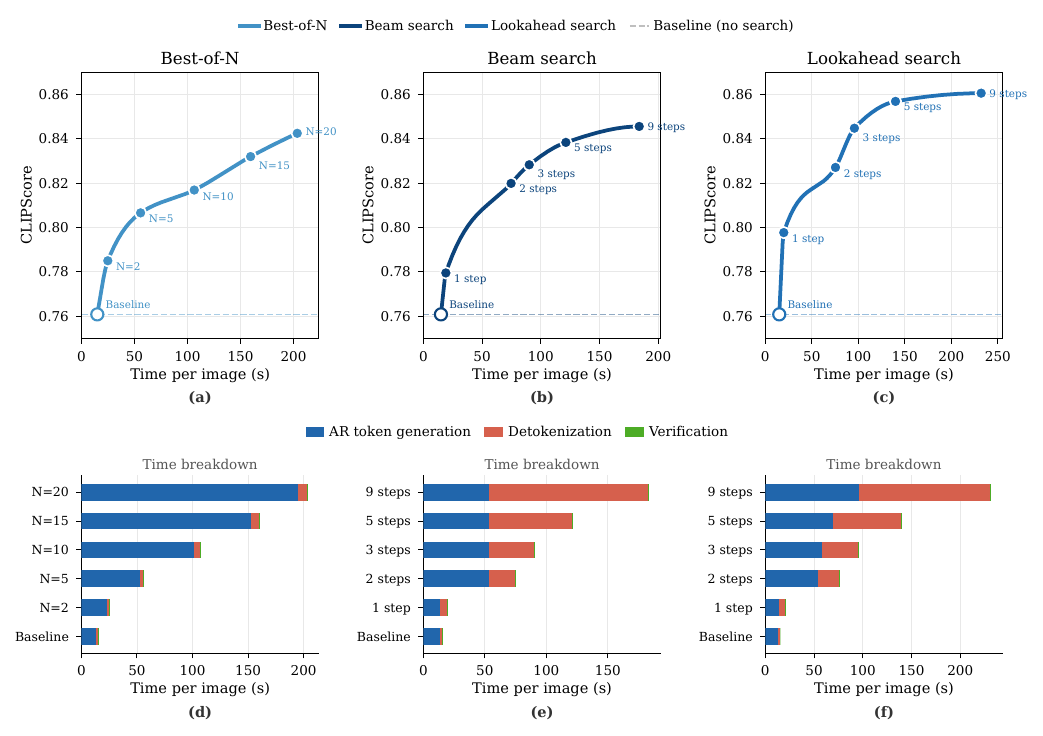}
    \caption{\textbf{Inference time analysis for different search algorithms (H100 GPU).} Top row (a--c): CLIPScore vs.\ wall-clock inference time per image for \textcolor{cbon}{\textbf{Best-of-$N$}}, \textcolor{cbeam}{\textbf{Beam search}}, and \textcolor{cla}{\textbf{Lookahead search}} (rollout length $L{=}8$), respectively. Each point corresponds to one configuration ($N$ or number of search steps), with the open circle marking the no-search AR baseline (dashed line). Bottom row (d--f): empirical wall-clock time breakdown by component for each configuration, showing \textcolor{car}{AR token generation}, \textcolor{cdet}{detokenization}, and \textcolor{cver}{verification}. For \textcolor{cbon}{Best-of-$N$}, inference cost is dominated by repeated AR generation and scales linearly with $N$. For \textcolor{cbeam}{Beam} and \textcolor{cla}{Lookahead} search, detokenization becomes the dominant wall-clock cost as the number of search steps increases, while AR generation time remains roughly constant. This overhead stems from the multi-step nature of the flow-based detokenizer; a faster or single-step detokenizer would directly reduce wall-clock time without affecting the NFE-based scaling behavior shown in Figure~\ref{fig:token_structure_comp}. Verification cost is negligible across all configurations. All timings are estimated from 20 images on COCO.}
    \label{fig:inference_time}
\end{figure*}

\paragraph{Verifier runtimes.} Table~\ref{tab:sup_verifier_time} reports the per-call latency of each verifier on a $256{\times}256$ image using a single GH200 GPU. Lightweight scoring functions such as likelihood and aesthetic score operate in under 25\,ms, while ImageReward, HPSv2, and CycleReward are moderately slower (40--60\,ms). Rule-based verification via Grounded SAM is the most expensive due to its multi-module pipeline. Across all configurations, verifier cost constitutes a negligible fraction of total inference time compared to AR generation and detokenization, confirming that counting each function evaluation as one NFE unit is a fair and hardware-agnostic measure of inference compute.

\begin{table}[H]
\centering
\small
\renewcommand{\arraystretch}{1.2}
\caption{\textbf{Verifier runtimes} on a $256{\times}256$ image on GH200 (single GPU). Times are reported in milliseconds (ms).}
\label{tab:sup_verifier_time}
\begin{tabular}{lc}
\toprule
\textbf{Verifier} & \textbf{Time (ms)} \\
\midrule
CLIP                      & 27.98  \\
Aesthetic                 & 21.43  \\
ImageReward               & 59.16  \\
PickScore                 & 36.17  \\
HPSv2                     & 48.68  \\
Rule-based (Grounded SAM) & 181.31 \\
CycleReward               & 46.73  \\
Probability (Likelihood)  & 0.06   \\
\bottomrule
\end{tabular}
\end{table}

\subsection{Search Hyperparameter Ablations}\label{sec:ablation_study} In this section, we analyze how key search hyperparameters affect performance for both beam search and lookahead search. We focus on the three most influential hyperparameters: (1) beam width, (2) number of search steps (i.e., verification positions), and (3) lookahead length in lookahead search. We leave the candidate width fixed at $M = 10$, since in our experiments it does not scale as effectively as increasing beam width.\footnote{Intuitively, increasing $M$ expands the search but retains the same number of prefixes, whereas increasing beam width expands and preserves more candidates simultaneously, yielding better returns.}

We present results for FlexTok and Janus as representative models. All experiments use a 300-caption COCO subset, with CLIPScore as the primary verifier, and we also report Aesthetic Score and ImageReward for completeness.

\paragraph{Results on FlexTok.} The results for FlexTok are shown in Table~\ref{tab:sup_flex_sweep_combined}. We keep our original default setting ($k{=}5$, $v{=}9$, $L{=}0$) and vary one hyperparameter at a time to study its effect. For \textit{\textbf{beam width}}
$k$, we explore $\{2,5,10,15,20,25\}$. As $k$ increases, CLIPScore and ImageReward improve, while Aesthetic Score slightly decreases. For the
\textit{\textbf{number of search steps}}, we evaluate search with skip lengths
$1,2,4,\ldots,128$, which correspond to search and verification counts
$256,128,64,\ldots,2$. In our main experiments, we use a polynomially increasing skip schedule consistent with FlexTok training, namely
$2^0,2^1,\ldots,2^8$, giving $v{=}9$ search steps. We find that increasing the number of search steps also consistently improves performance, especially for CLIPScore and ImageReward. Lastly, we vary the \textit{\textbf{lookahead length}} from $L{=}0$ to $L{=}256$. Lookahead of $L{=}32$ works best and is comparable to $L{=}256$, possibly because FlexTok partial sequences with
$\sim$32 tokens already reveal clear semantic structure. Overall, these results show that \textit{our default configuration is a reasonable, lightweight, and efficient setting, while increasing these hyperparameters can further boost performance.} Among them, increasing the number of search steps appears most promising and yields the strongest results in the table. A scaling curve for search steps is shown in Fig.~7 (middle) of the main paper.
\begin{table*}[h]
\centering
\small
\caption{\textbf{FlexTok hyperparameter sweeps} on a 300-caption COCO subset.  
Left: beam width.  Middle-left: number of search step (v = N/s when uniformly skip length s for token number N).  Right: lookahead length.  The row corresponding to our \textbf{default setting} ($k{=}5$, $v{=}9$, $L{=}0$) is shaded in \textbf{gray}.  
Best values within each block are in \textbf{bold}.  
“Aes’’ = Aesthetic Score; “IR’’ = ImageReward.}
\renewcommand{\arraystretch}{1.2}
\resizebox{0.8\textwidth}{!}{%
\begin{tabular}{lccc|lccc|lccc}
\toprule
\multicolumn{8}{c|}{\textbf{Beam Search}} &
\multicolumn{4}{c}{\textbf{Lookahead Search}} \\
\midrule
\multicolumn{4}{c|}{\textbf{Beam width ($k$), $M{=}10$, $v{=}9$}} &
\multicolumn{4}{c|}{\textbf{Number of search steps ($v$), $k{=}5$, $M{=}10$}} &
\multicolumn{4}{c}{\textbf{Lookahead length ($L$), $k{=}5$, $v{=}9$, $M{=}10$}} \\
\midrule
$k$ & CLIP $\uparrow$ & Aes $\uparrow$ & IR $\uparrow$ &
$v$ & CLIP $\uparrow$ & Aes $\uparrow$ & IR $\uparrow$ &
$L$ & CLIP $\uparrow$ & Aes $\uparrow$ & IR $\uparrow$ \\
\midrule

2  & 86.44 & \textbf{4.57} & 0.33 &
2   & 81.85  & 4.51 & 0.35 &
\cellcolor{gray!15}0  & \cellcolor{gray!15}90.04  & \cellcolor{gray!15}4.49 & \cellcolor{gray!15}0.33 \\
\cellcolor{gray!15}5  & \cellcolor{gray!15}90.04 & \cellcolor{gray!15}4.49 &\cellcolor{gray!15} 0.33 &
4   & 85.93  & 4.50 & 0.46 &
2   & 91.22 & 4.51 & 0.37 \\

10 & 92.74 & 4.44 & 0.41 &
8   & 87.68  & 4.52 & 0.49 &
8   & 91.61 & 4.45 & 0.45 \\

15 & 93.50 & 4.40 & 0.58 &
\cellcolor{gray!15}$9^*$ & \cellcolor{gray!15}90.04  & \cellcolor{gray!15}4.49 & \cellcolor{gray!15}0.33 &
\textbf{32} & \textbf{92.84} & \textbf{4.47} & \textbf{0.44} \\

20 & 94.63 & 4.39 & 0.60 &
16  & 90.08  & \textbf{4.52} & 0.55 &
256 & 92.32 & 4.48 & 0.43 \\

\textbf{25} & \textbf{101.70} & 4.39 & \textbf{0.61} &
32  & 93.45  & 4.52 & 0.56 &
\multicolumn{4}{c}{} \\

\multicolumn{4}{c|}{} &
64  & 98.33  & 4.49 & 0.60 &
\multicolumn{4}{c}{} \\

\multicolumn{4}{c|}{} &
128 & 104.15 & 4.45 & \textbf{0.58} &
\multicolumn{4}{c}{} \\

\multicolumn{4}{c|}{} &
\textbf{256} & \textbf{111.48} & 4.41 & 0.43 &
\multicolumn{4}{c}{} \\

\bottomrule
\end{tabular}
}
\label{tab:sup_flex_sweep_combined}
\end{table*}

\paragraph{Results on Janus.} The results for Janus are shown in Table~\ref{tab:sup_janus_sweep_combined}. For beam search, we use the same default setting ($k{=}5$, $v{=}9$), and similarly observe that increasing these hyperparameters generally improves performance. For \textbf{\textit{beam width}}, we test $\{2,5,10\}$. For the
\textbf{\textit{number of search steps}}, we evaluate skip lengths
$1,8,64,144$, corresponding to verification counts $576,72,9,$ and $4$. In addition, we study \textbf{\textit{lookahead lengths}} of $8,64,128$, and full lookahead. Among these, $L{=}128$ and full lookahead achieve the best performance. These results highlight that \textit{lookahead is particularly important for 2D grid tokenizers}, whose early tokens provide limited semantic structure.
\begin{table*}[h]
\centering
\small
\caption{\textbf{Janus hyperparameter sweeps} on a 300-caption COCO subset.  
Left: beam width.  
Middle-left: Number of search steps.  
Right: lookahead length.  
The \textbf{default setting} ($k{=}5$, $v{=}9$, $L{=}\text{`All'}$) is shaded in \textbf{gray}.  
Best values within each block are in \textbf{bold}.  
“Aes’’ = Aesthetic Score; “IR’’ = ImageReward.}
\renewcommand{\arraystretch}{1.2}
\resizebox{0.8\textwidth}{!}{%
\begin{tabular}{lccc|lccc|lccc}
\toprule
\multicolumn{8}{c|}{\textbf{Beam Search}} &
\multicolumn{4}{c}{\textbf{Lookahead Search}} \\
\midrule

\multicolumn{4}{c|}{\textbf{Beam width ($k$), $M{=}10$, $v{=}9$}} &
\multicolumn{4}{c|}{\textbf{Number of search steps. ($v$), $k{=}5$, $M{=}10$}} &
\multicolumn{4}{|c}{\textbf{Lookahead length ($L$), $k{=}5$, $v{=}9$, $M{=}10$}} \\
\midrule

$k$ & CLIP $\uparrow$ & Aes $\uparrow$ & IR $\uparrow$ &
$v$ & CLIP $\uparrow$ & Aes $\uparrow$ & IR $\uparrow$ &
$L$ & CLIP $\uparrow$ & Aes $\uparrow$ & IR $\uparrow$ \\
\midrule

2   & 85.37 & \textbf{4.83} & 0.21 &
4   & 87.49 & 4.80 & \textbf{0.24} &
0   & 87.59 & {4.80} & 0.15 \\

\cellcolor{gray!15}5   & \cellcolor{gray!15}87.59 & \cellcolor{gray!15}{4.80} & \cellcolor{gray!15}0.15 &
\cellcolor{gray!15}9   & \cellcolor{gray!15}87.59 & \cellcolor{gray!15}{4.80} & \cellcolor{gray!15}0.15 &
8   & 88.21 & {4.81} & 0.29 \\

10  & \textbf{89.02} & 4.79 & \textbf{0.27} &
72  & 88.19 & \textbf{4.81} & 0.17 &
64  & 90.58 & {4.83} & 0.33 \\

\multicolumn{4}{c|}{} &
576 & \textbf{92.18} & 4.61 & -0.19 &
128 & 91.49 & {4.83} & 0.26 \\

\multicolumn{4}{c|}{} &
\multicolumn{4}{c}{} &
\cellcolor{gray!15}\textbf{All} & \cellcolor{gray!15}\textbf{92.27} & \cellcolor{gray!15}\textbf{4.84} & \cellcolor{gray!15}\textbf{0.38} \\

\bottomrule
\end{tabular}
}

\label{tab:sup_janus_sweep_combined}
\end{table*}

\subsection{Additional Results on Verifier Analysis} \label{app:verifier_results}
\paragraph{Per-Category Verifier Breakdown on GenEval}
We show leave-one-out results in the main paper (Fig.~8). Here in Table~\ref{tab:sup_geneval_verifier_summary}, we provide the full GenEval category breakdown for all verifiers. Different verifiers excel on different aspects of the benchmark. For example, Grounded SAM achieves the best performance on
\emph{Position} and \emph{Color Attribute}, but performs worse on
\emph{Single Object}, \emph{Colors}, and \emph{Counting}. Likelihood improves most categories except \emph{Position}. ImageReward and the ensemble perform similarly strong overall, with the ensemble achieving the best overall accuracy among learned verifiers. The official GenEval evaluator serves as an upper bound.
\begin{table}[h]
\centering
\caption{\textbf{Verifier performance on GenEval categories.}
Numbers are integers. The \textbf{Overall} score is normalized to 0--100.
Best values in each column (excluding the oracle GenEval row) are highlighted in \textbf{bold}.
The official GenEval evaluator is shown in gray as an oracle upper bound.}
\setlength{\tabcolsep}{3pt}
\renewcommand{\arraystretch}{1.2}
\resizebox{0.6\columnwidth}{!}{%
\begin{tabular}{lccccccc}
\toprule
\textbf{Method} &
\textbf{Single Obj.} &
\textbf{Position} &
\textbf{Two Obj.} &
\textbf{Colors} &
\textbf{Color Attr.} &
\textbf{Counting} &
\textbf{Overall $\uparrow$} \\
\midrule
FlexTok Base & 95 & 16 & 56 & 80 & 35 & 59 & 57 \\
\midrule
Likelihood & \textbf{100} & 14 & 66 & 84 & 36 & 63 & 60 \\
CLIPScore & 96 & 20 & 72 & 87 & 41 & 63 & 63 \\
AestheticScore & 99 & 13 & 68 & 81 & 42 & 50 & 59 \\
CycleReward & 98 & 18 & 69 & 86 & 42 & 59 & 62 \\
HPSv2 & \textbf{100} & 16 & 77 & 89 & 42 & \textbf{74} & 66 \\
ImageReward & 98 & 27 & \textbf{81} & 87 & 41 & 71 & \textbf{67} \\
Grounded SAM & 94 & \textbf{31} & 51 & 85 & 44 & 58 & 60 \\
PickScore & 98 & 12 & 69 & 82 & 37 & 70 & 61 \\
\midrule
Ensemble & \textbf{100} & 17 & 74 & \textbf{89} & \textbf{47} & \textbf{74} & \textbf{67} \\
\midrule
\rowcolor{gray!20}
GenEval (Oracle) & 100 & 41 & 83 & 91 & 60 & 81 & 76 \\
\bottomrule
\end{tabular}}
\label{tab:sup_geneval_verifier_summary}
\end{table}

\paragraph{Verifier Comparison on COCO}
Following the GenEval analysis, we also evaluate FlexTok with beam search on the COCO 300-caption subset using different verifiers, including leave-one-out and verifier ensemble settings. The full results are shown in Figure~\ref{fig:coco_verifier}. We observe a similar trend as in GenEval: different verifiers specialize in different aspects, but the \emph{ensemble} consistently achieves the best average rank and is almost always the second-best method for each individual metric. This further confirms that combining complementary verifier signals yields stronger and more stable performance.

\begin{figure}[h]
    \centering
    \includegraphics[width=0.7\linewidth]{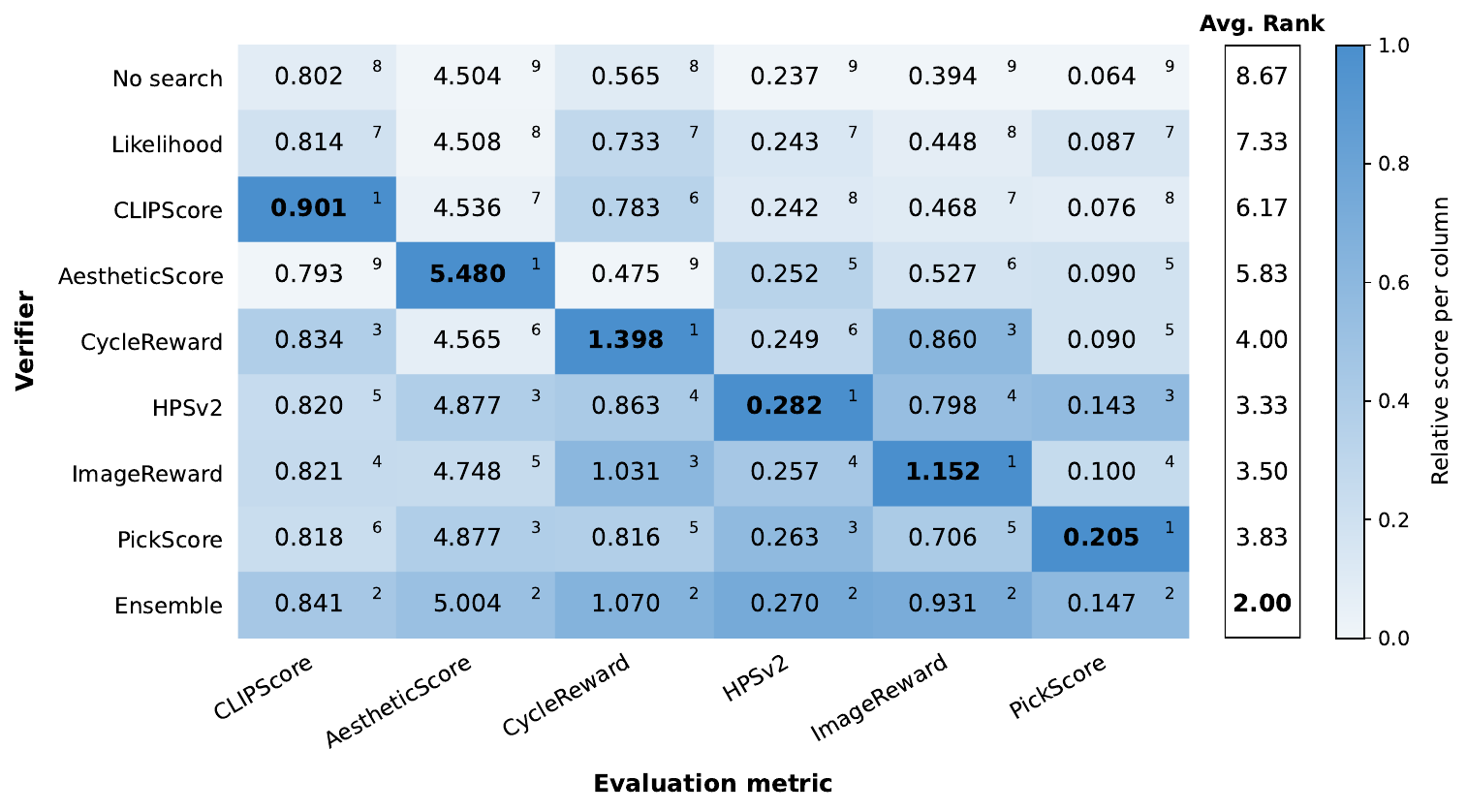}
    \caption{\textbf{Comparison of different verifiers on COCO.} Each row reports search using one verifier. All methods use the same beam search algorithm on FlexTok. The best score in each column is highlighted in bold. The superscript in each cell represents the rank within that column's metric, and the last column reports the average of these column-wise ranks, providing an overall rank for each verifier.}
    \label{fig:coco_verifier}
\end{figure}

\paragraph{Verifier Score Dynamics During Search}
To better understand how different verifiers interact during optimization, we analyze how the scores of all verifiers evolve as search progresses when
\emph{one} verifier is used as the optimization target. Figure~\ref{fig:verifier_curve_changes} visualizes these trajectories.

We observe that when optimizing most verifiers, not only does the target verifier score increase steadily, but many other verifier scores also improve. This suggests that the majority of verifiers we use capture a broad notion of visual quality or semantic alignment that tends to correlate across metrics. Notably, optimizing ImageReward raises Aesthetic Score more strongly than optimizing CLIPScore or Grounded SAM, indicating that ImageReward encourages perceptual and stylistic improvements, whereas the other two verifiers primarily drive semantic alignment. In addition, optimizing the rule-based Grounded SAM verifier also yields consistent gains across other verifier dimensions. This implies that strengthening spatial grounding often contributes to improved global alignment. Moreover, because the rule-based verifier saturates at a score of 1 once all spatial constraints are satisfied, it is less susceptible to over-optimization or verifier hacking, reducing the likelihood of producing degenerate solutions.

\begin{figure*}
    \centering
    \includegraphics[width=\linewidth]{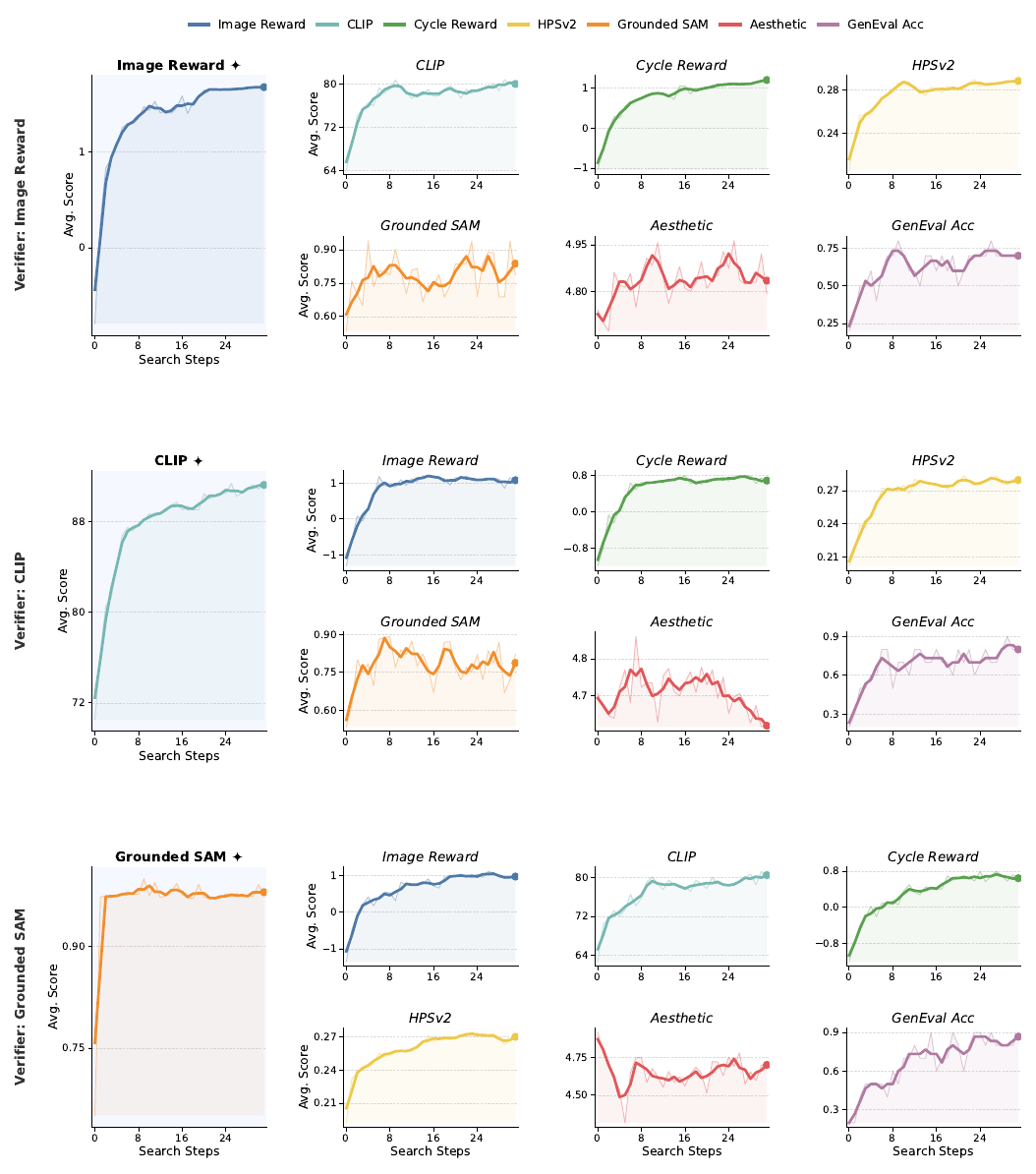}
    \caption{\textbf{Verifier score trajectories during search.} Each panel shows how optimizing one verifier affects all other verifier signals as well as GenEval accuracy. Curves are averaged over 15 prompts from GenEval using FlexTok with beam search on the first 32 tokens. Note that we only show verifier scores where they are comparable during the search process; we exclude likelihood because it always increases with longer token sequences, and also PickScore because it is affected by how similar other images are.}
    \label{fig:verifier_curve_changes}
\end{figure*}

\newpage
\section{Additional Visualizations}

\subsection{Visualization for Different AR Priors}\label{app:ar_prior_extend}
We compare three prior settings: a conditional AR prior (the standard text-conditioned FlexTok model), an unconditional AR prior (the same model without text conditioning), and a uniform prior. Results for different priors during search are presented in Figures~\ref{fig:ar_prior_comparison_vis}--\ref{fig:ar_prior_comparison_vis4}.

\subsection{Visualization for Different Verifiers}\label{app:verifier_vis}
To better illustrate how different verifiers influence the search outcome, we compare the images produced by FlexTok using direct autoregressive decoding and beam search guided by various verifiers. Figure~\ref{fig:verifier_progression_1}--\ref{fig:verifier_progression_5} show examples from the GenEval benchmark using verifiers including ImageReward, CLIPScore, Grounded SAM, Aesthetic Score, CycleReward, HPSv2, and the verifier ensemble. Different verifiers exhibit distinct preferences; for example, Aesthetic Score often favors better image quality and aesthetics, while CLIPScore and ImageReward tend to better preserve object semantics and counting. The ensemble generally provides the most balanced behavior.


\subsection{Visualization for Zero-shot Multimodal Control}\label{app:concept_pres_vis}
We provide additional qualitative results on the DreamBench++ benchmark in Figures~\ref{fig:dreambench_ar_vs_search}--\ref{fig:dreambenchplus}. We first compare direct AR generation against DreamSim-guided search on several additional subjects, then show a larger set of search-only qualitative examples. As in the main paper, images are generated using FlexTok, and DreamSim is used as the verifier for search. Each example consists of a reference identity image followed by multiple generated images conditioned on different prompts.

\section{Failure Case Analysis}\label{sec:failure_cases}
We discuss representative failure cases that happen in test-time search: (1) verifier hacking and (2) prior bottleneck.

\paragraph{Verifier hacking.}
A fundamental limitation of test-time search is its reliance on the robustness of the external verifier. When the search budget becomes large (e.g., high beam width or many search steps), the optimization process may overfit to the verifier and exploit its blind spots. In practice, this can lead to visually implausible or semantically inconsistent images that nevertheless achieve high verifier scores. For example, in Figure~\ref{fig:verifier_curve_changes}, when optimizing CLIP or Grounded-SAM scores, the optimized score continues to increase, while the aesthetic score may decrease. Similarly, when optimizing only for aesthetic score, task performance (e.g., GenEval accuracy) may drop. Similar trade-offs can be observed in Figure~\ref{fig:verifier_progression_1}--\ref{fig:verifier_progression_5} when optimizing different verifiers.

\paragraph{Prior bottleneck.}
While 1D ordered tokens help establish global semantics early in the generation process, test-time search cannot recover information that is missing or poorly modeled by the autoregressive prior. In such cases, search may refine local details but fail to correct global structural errors. For example, in Figure~\ref{fig:training_free_search}, under a uniform prior setting, the searched results fail to generate key semantic elements (e.g., “wine”), due to the lack of appropriate object priors in the initial generation. In Figure~\ref{fig:dreamsim}, even after search, the generated images still deviate from the reference image (e.g., in the “fog” case). Although search improves over AR decoding, the results remain limited by the weak or misaligned prior.

\begin{figure*}[h]
    \centering
    \includegraphics[width=\linewidth]{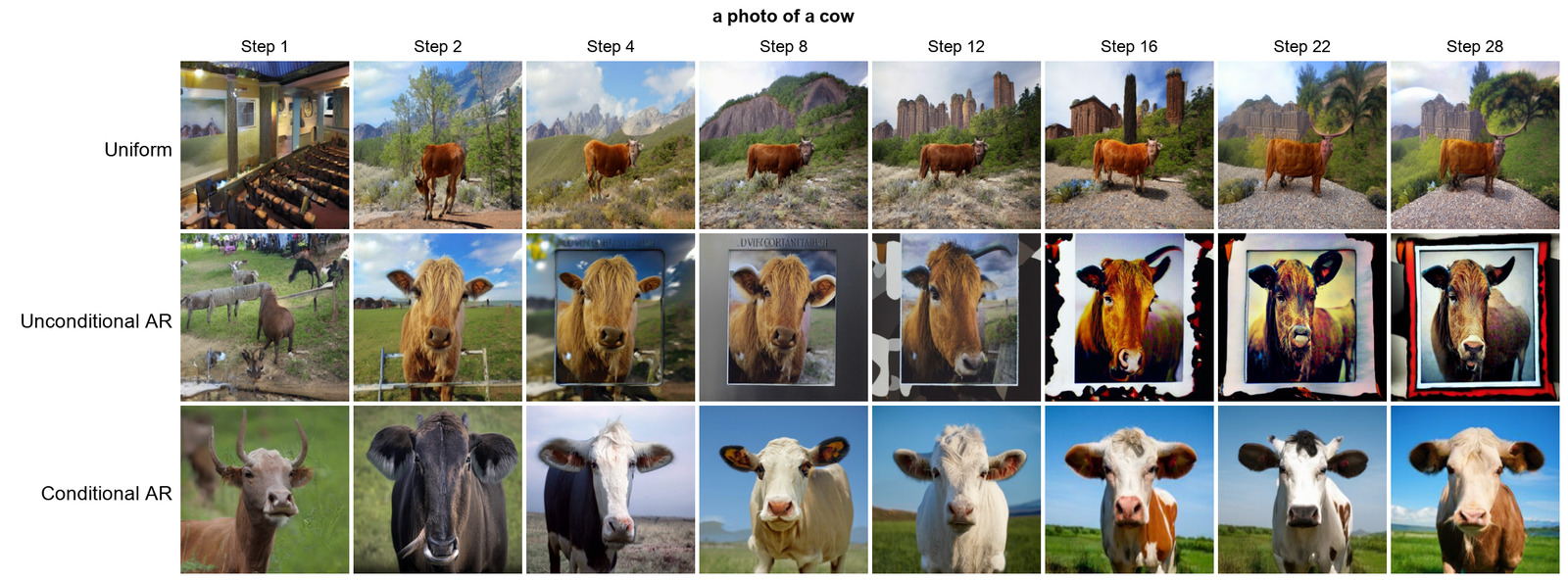}
    \includegraphics[width=\linewidth]{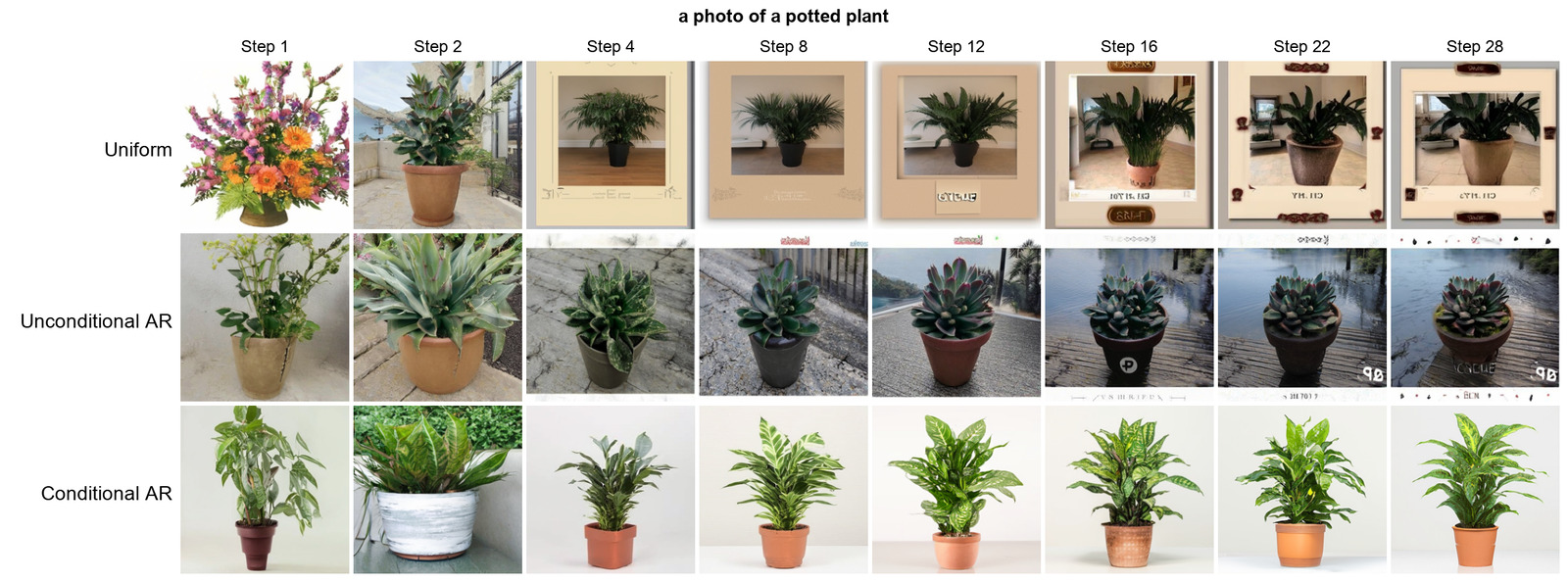}
    \includegraphics[width=\linewidth]{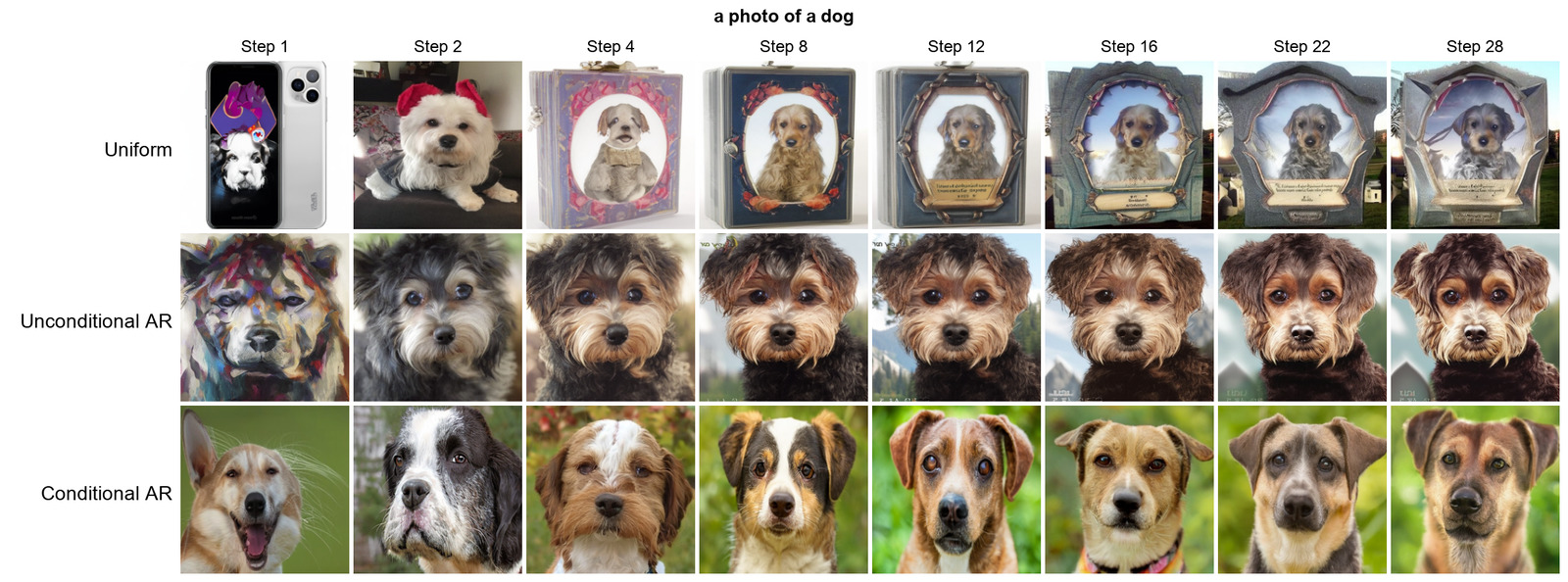}
    \caption{\textbf{Visual comparison when searching with different AR priors (Examples 1--3).} Beam search guided by different AR priors on the GenEval benchmark.}
    \label{fig:ar_prior_comparison_vis}
\end{figure*}

\clearpage

\begin{figure*}[h]
    \centering
    \includegraphics[width=\linewidth]{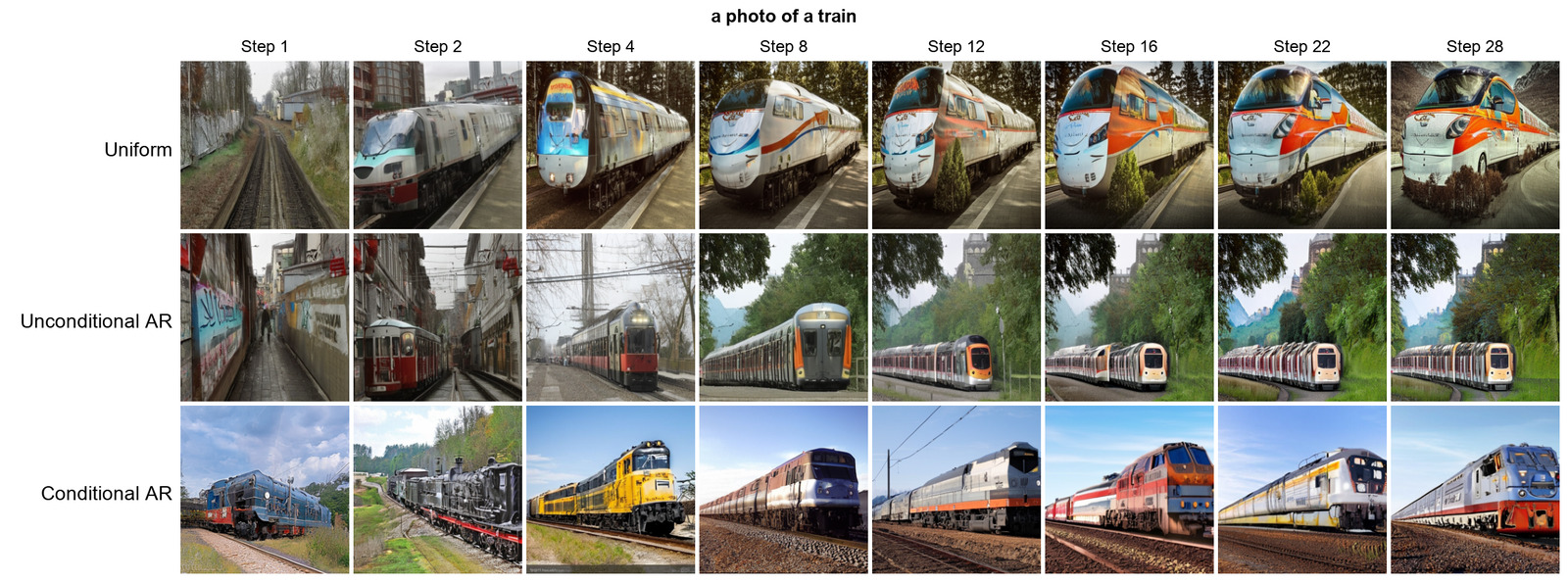}
    \includegraphics[width=\linewidth]{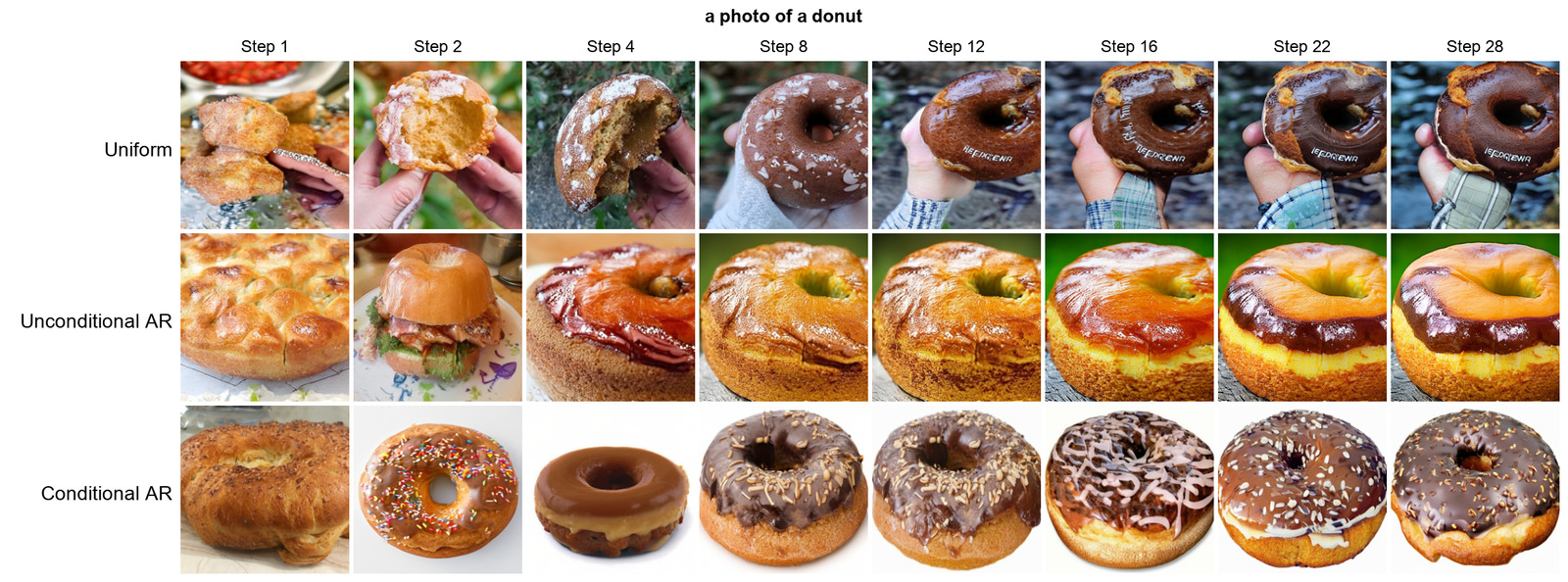}
    \includegraphics[width=\linewidth]{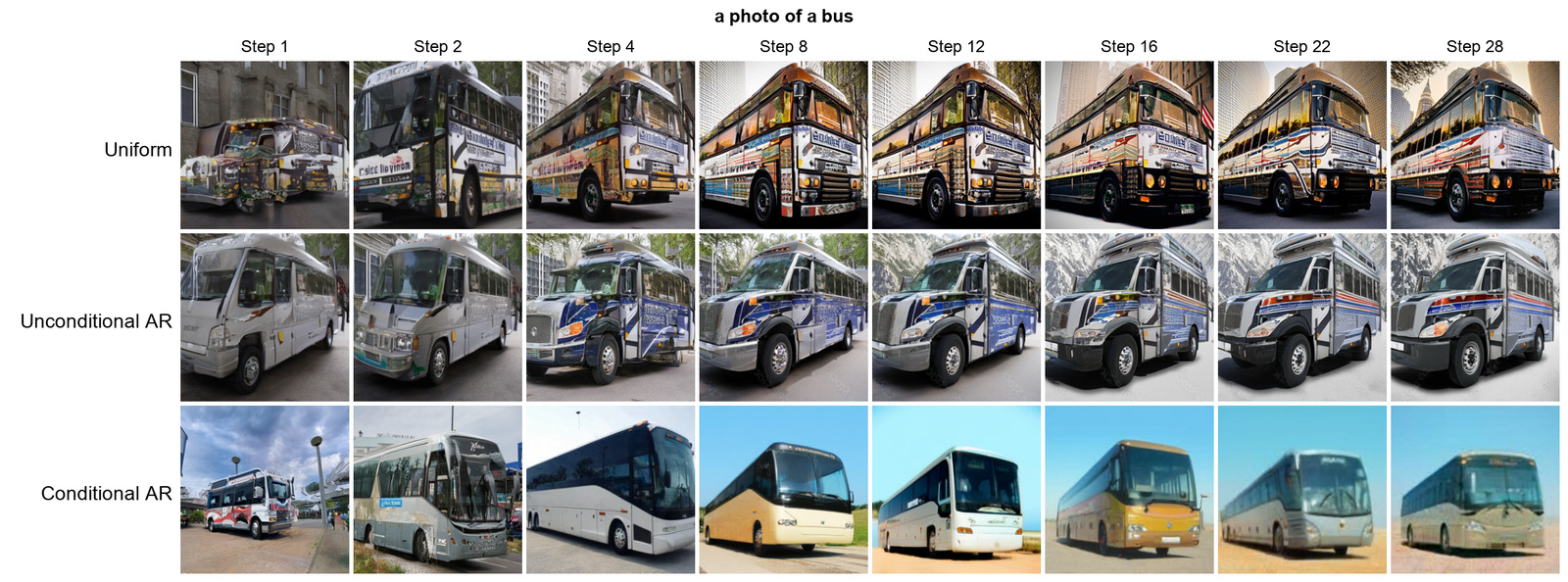}
    \caption{\textbf{Visual comparison when searching with different AR priors (Examples 4--6).} Beam search guided by different AR priors on the GenEval benchmark.}
    \label{fig:ar_prior_comparison_vis2}
\end{figure*}

\clearpage

\begin{figure*}[h]
    \centering
    \includegraphics[width=\linewidth]{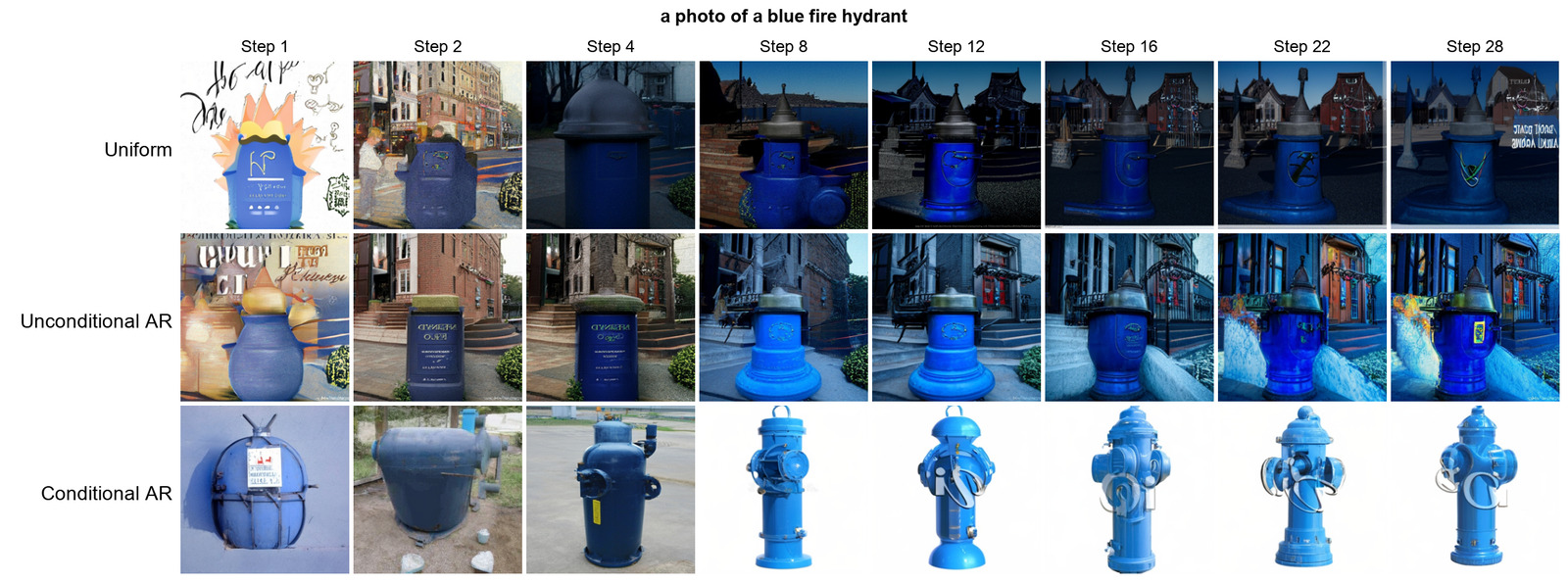}
    \includegraphics[width=\linewidth]{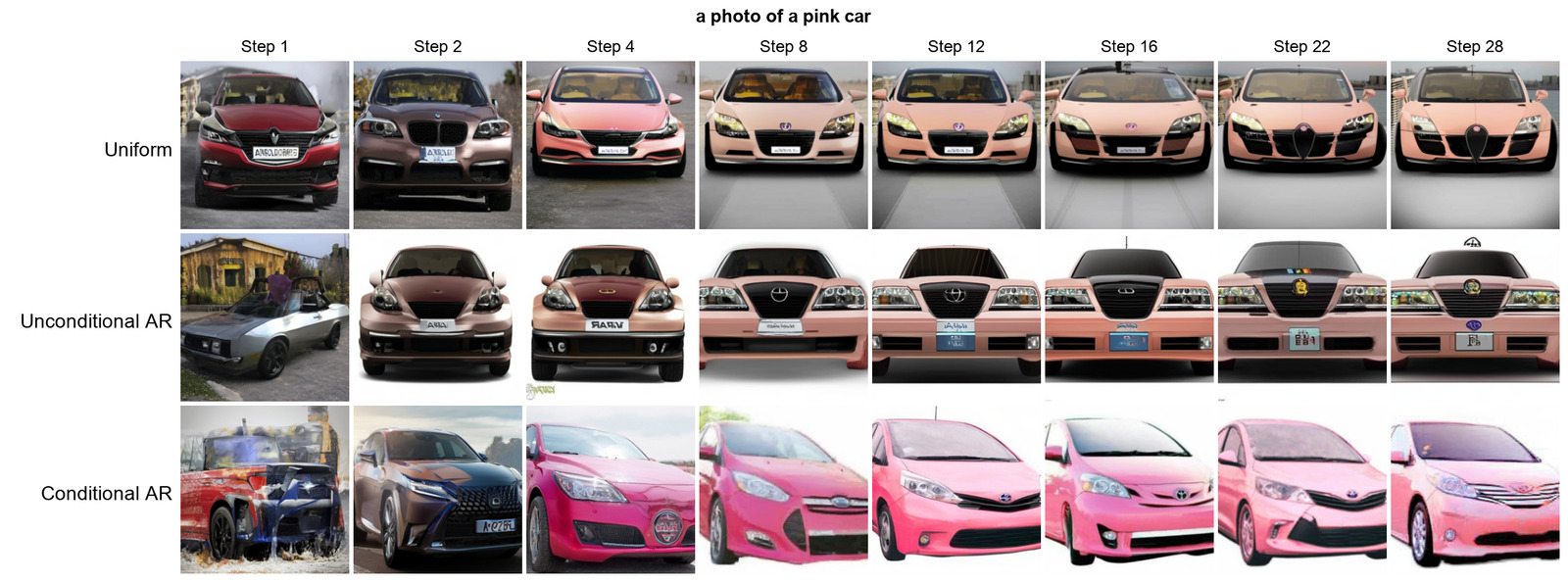}
    \includegraphics[width=\linewidth]{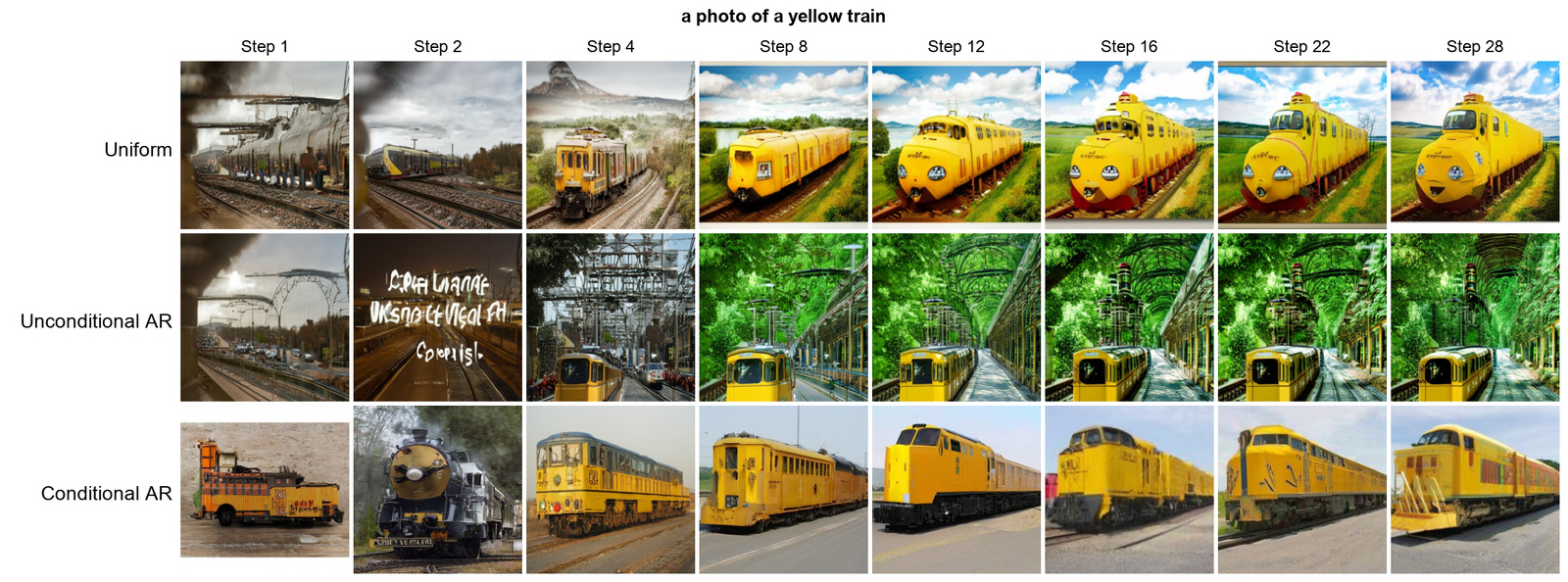}
    \caption{\textbf{Visual comparison when searching with different AR priors (Examples 7--9).} Beam search guided by different AR priors on the GenEval benchmark.}
    \label{fig:ar_prior_comparison_vis3}
\end{figure*}

\clearpage

\begin{figure*}[h]
    \centering
    \includegraphics[width=\linewidth]{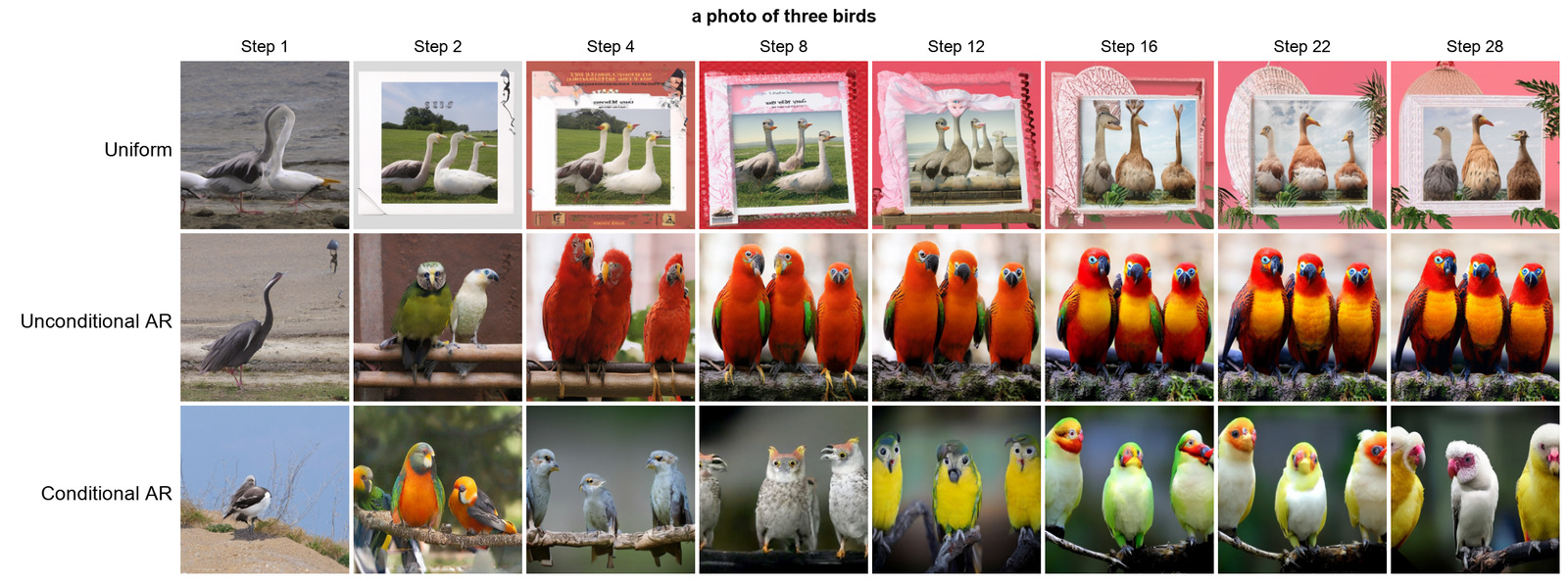}
    \includegraphics[width=\linewidth]{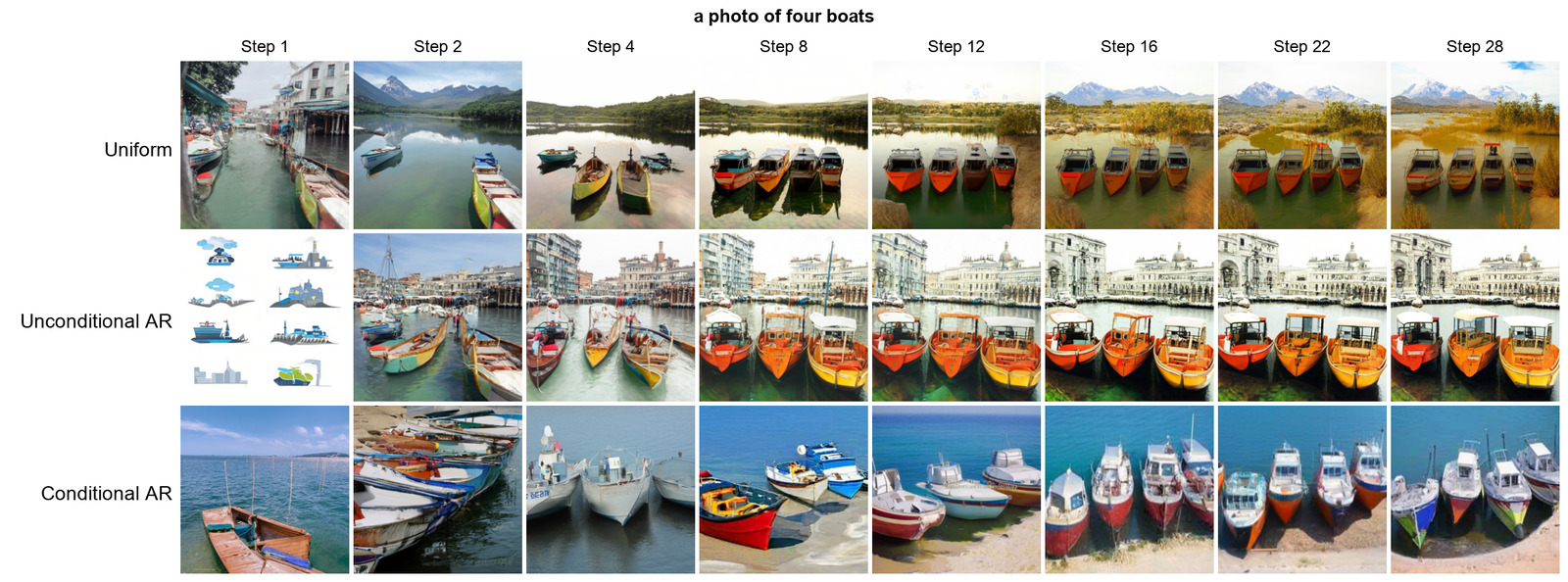}
    \includegraphics[width=\linewidth]{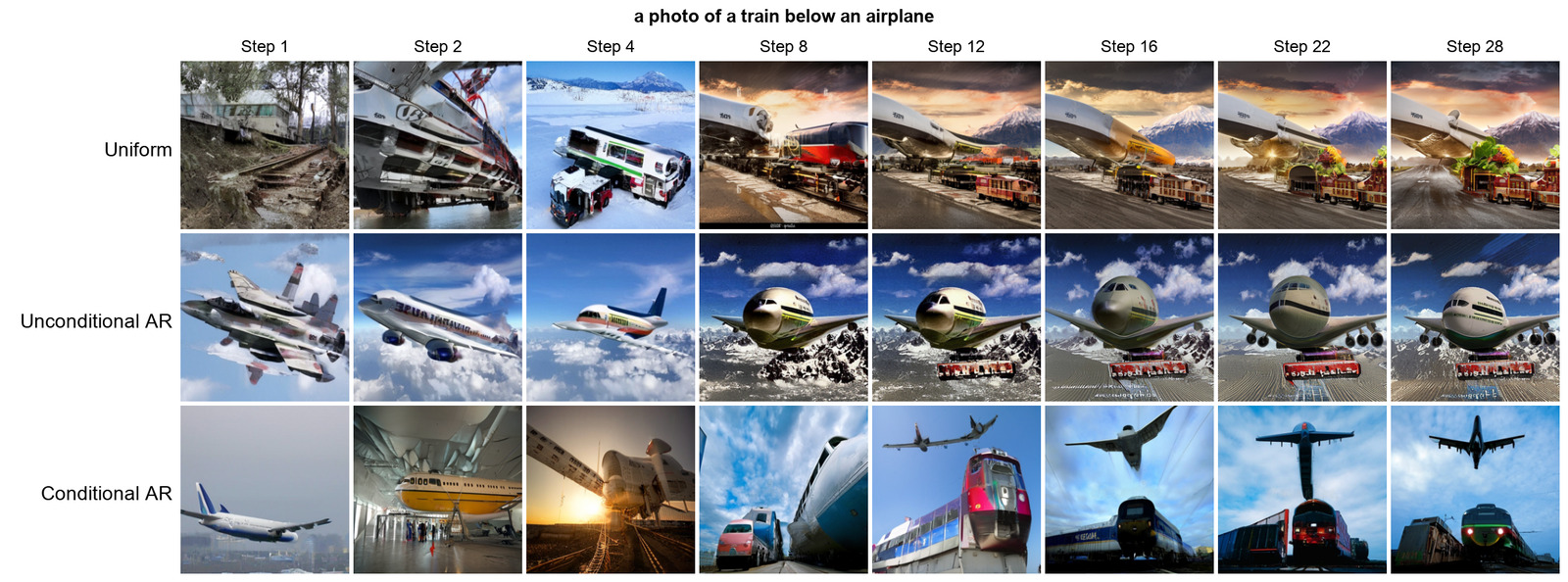}
    \caption{\textbf{Visual comparison when searching with different AR priors (Examples 10--12).} Beam search guided by different AR priors on the GenEval benchmark.}
    \label{fig:ar_prior_comparison_vis4}
\end{figure*}

\clearpage

\begin{figure*}[h]
    \centering
    \includegraphics[width=\linewidth]{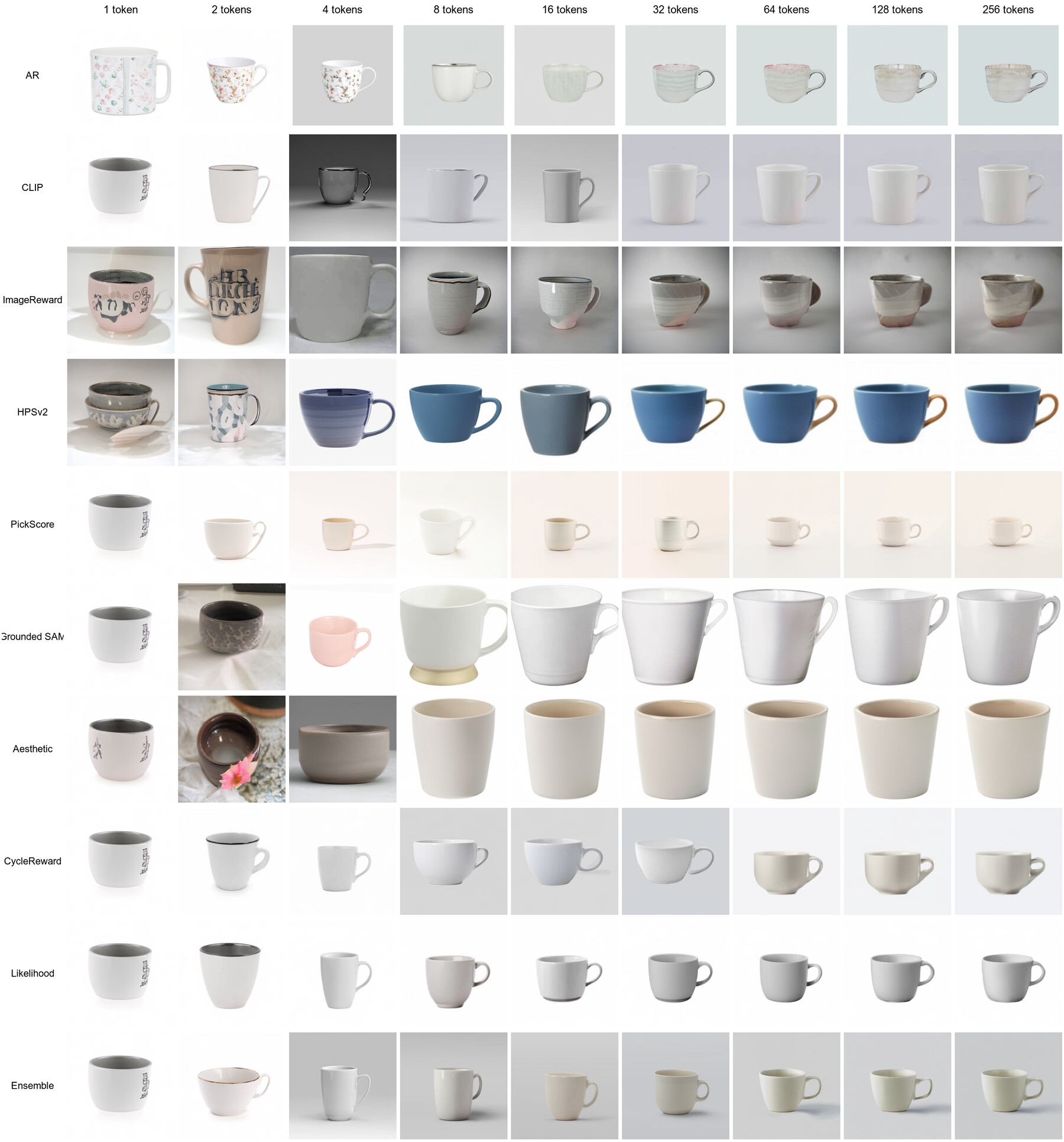}
    \caption{\textbf{Generation trajectories during verifier-guided search up to 256 tokens: cup.} We show intermediate outputs for the prompt ``a photo of a cup'' at token positions $1,2,4,8,16,32,64,128,256$. Even for a simple single-object prompt, different verifiers induce noticeably different search paths in object shape, texture, and realism before converging.}
    \label{fig:verifier_progression_1}
\end{figure*}

\begin{figure*}[h]
    \centering
    \includegraphics[width=\linewidth]{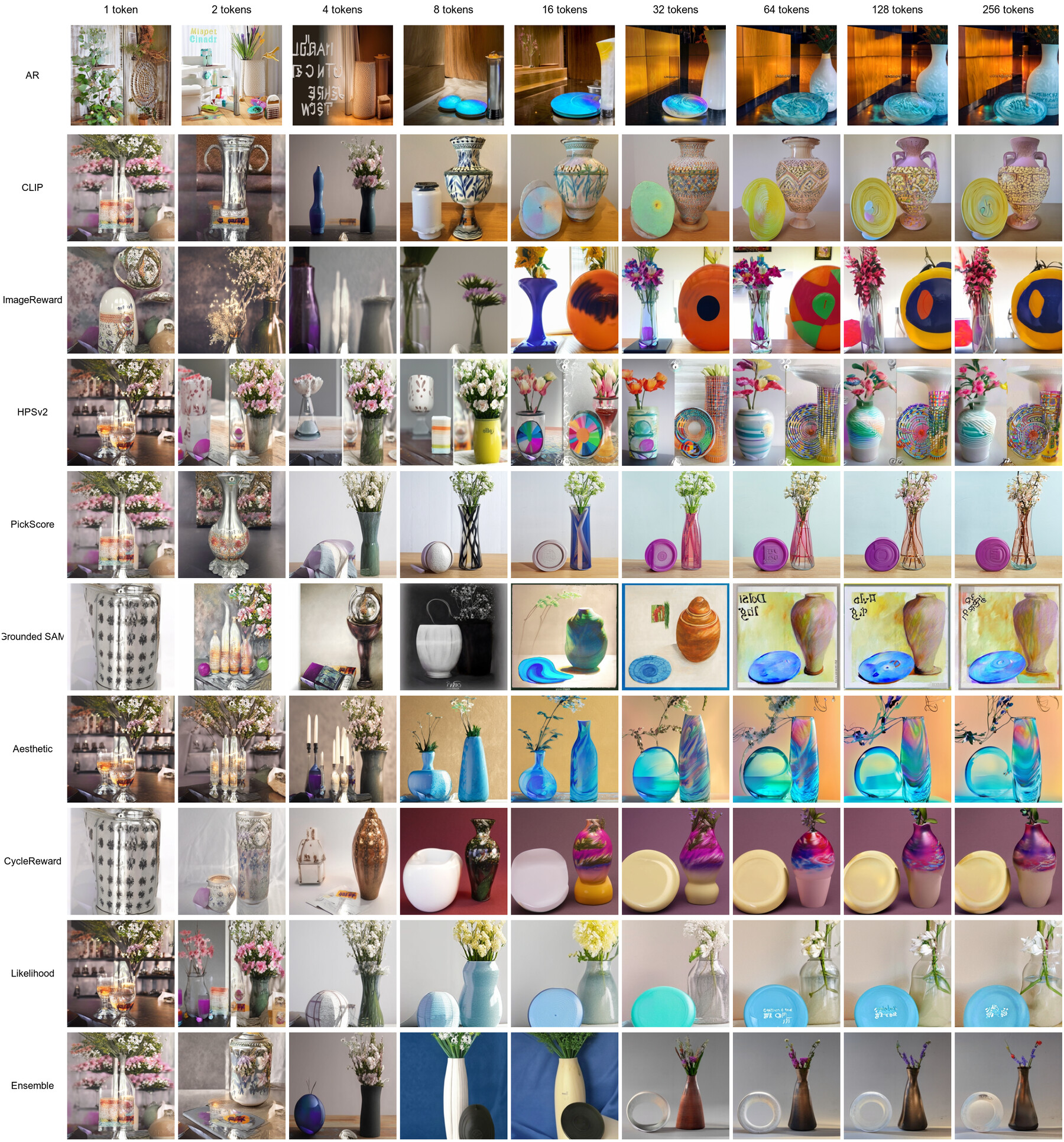}
    \caption{\textbf{Generation trajectories during verifier-guided search up to 256 tokens: frisbee and vase.} This prompt highlights how different verifiers handle a two-object composition with competing semantics. Some verifiers lock onto one object earlier, while others preserve both objects more reliably over the full search trajectory.}
    \label{fig:verifier_progression_2}
\end{figure*}

\begin{figure*}[h]
    \centering
    \includegraphics[width=\linewidth]{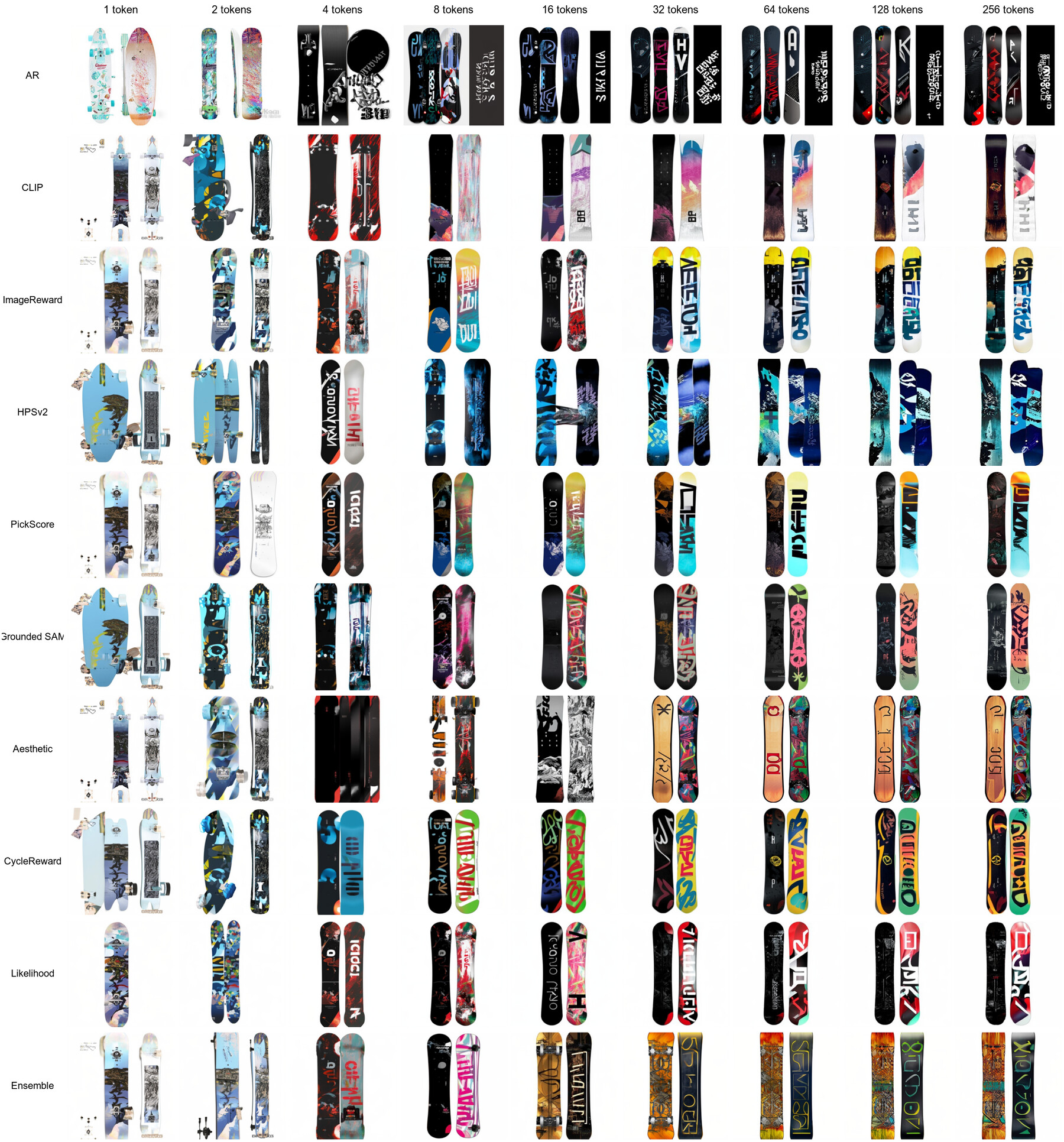}
    \caption{\textbf{Generation trajectories during verifier-guided search up to 256 tokens: two snowboards.} This counting-and-category prompt shows how verifiers differ in how quickly they commit to the correct duplicated object structure. Some prioritize realistic texture early, while others more directly organize the scene around the requested count.}
    \label{fig:verifier_progression_3}
\end{figure*}

\begin{figure*}[h]
    \centering
    \includegraphics[width=\linewidth]{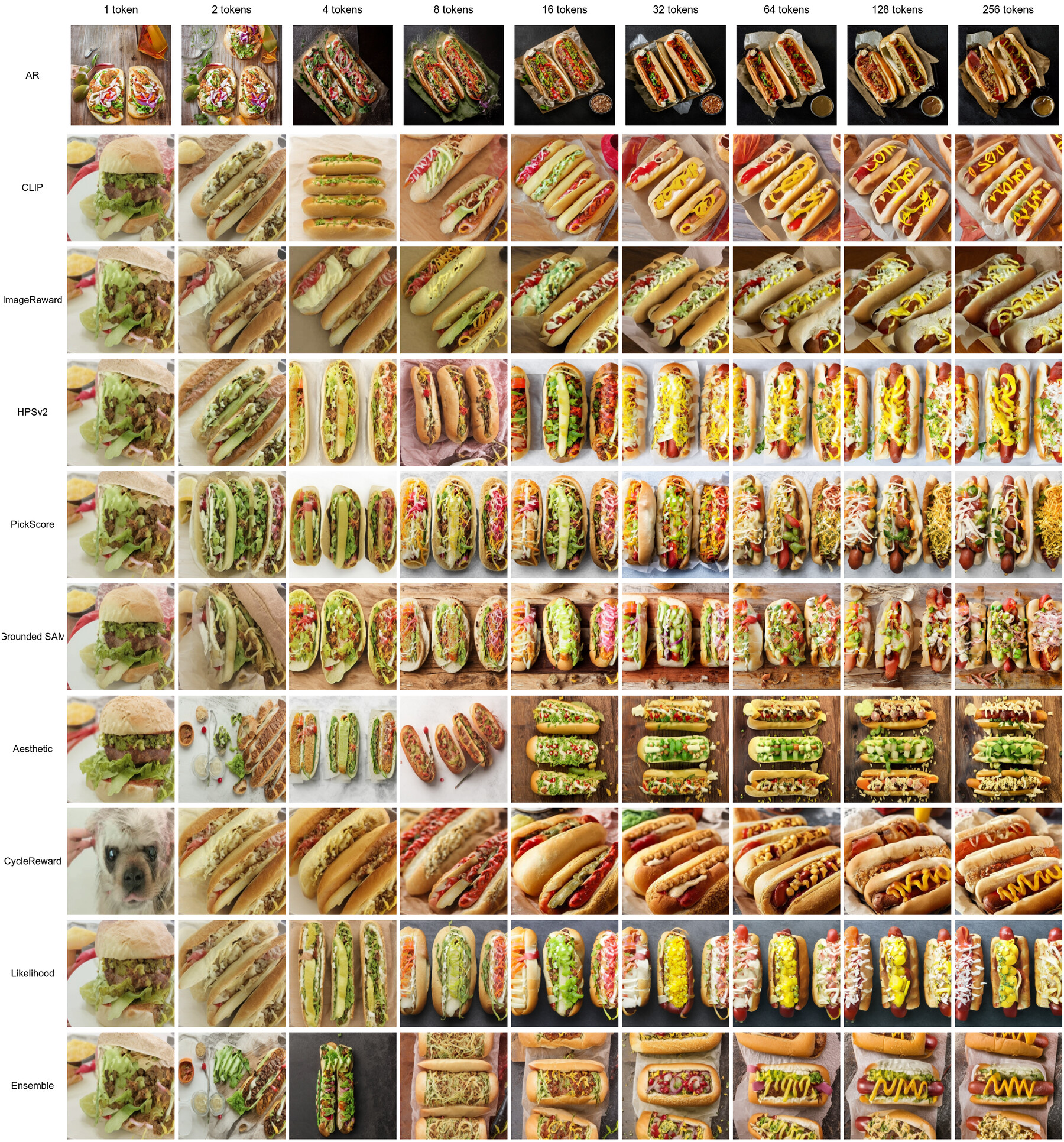}
    \caption{\textbf{Generation trajectories during verifier-guided search up to 256 tokens: three hot dogs.} This counting prompt illustrates how verifier choice changes the search path even when the target concept is simple. Alignment-focused verifiers improve object identity quickly, while the ensemble and structural verifiers more reliably organize the scene toward the requested count.}
    \label{fig:verifier_progression_4}
\end{figure*}

\begin{figure*}[h]
    \centering
    \includegraphics[width=\linewidth]{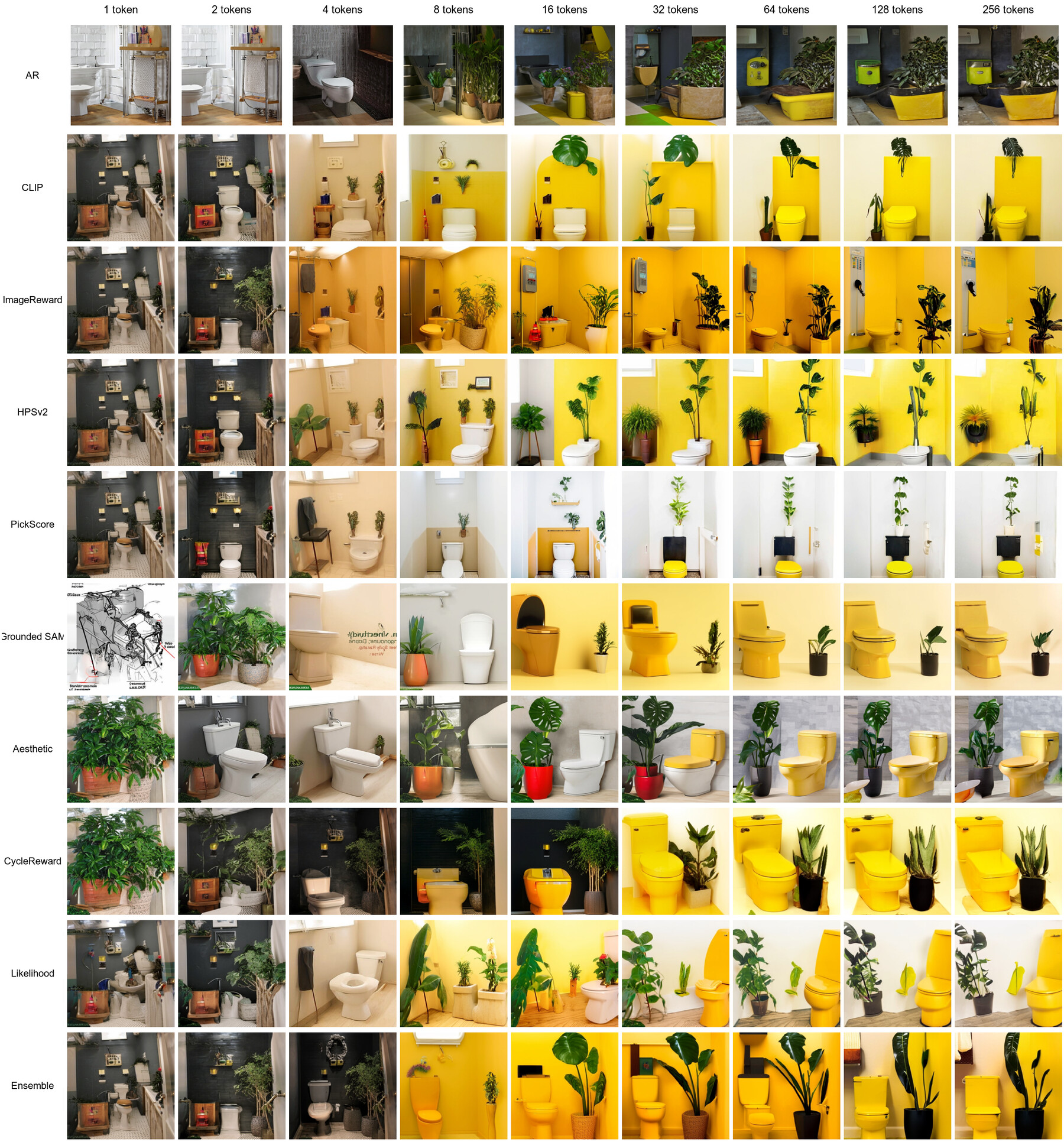}
    \caption{\textbf{Generation trajectories during verifier-guided search up to 256 tokens: black potted plant and yellow toilet.} This prompt emphasizes unusual object and color combinations. Different verifiers stabilize realism and layout at different rates; structural and ensemble guidance more reliably steer the search toward the requested plant--toilet composition over the full trajectory.}
    \label{fig:verifier_progression_5}
\end{figure*}

\begin{figure*}[h]
    \centering
    \includegraphics[width=0.88\linewidth]{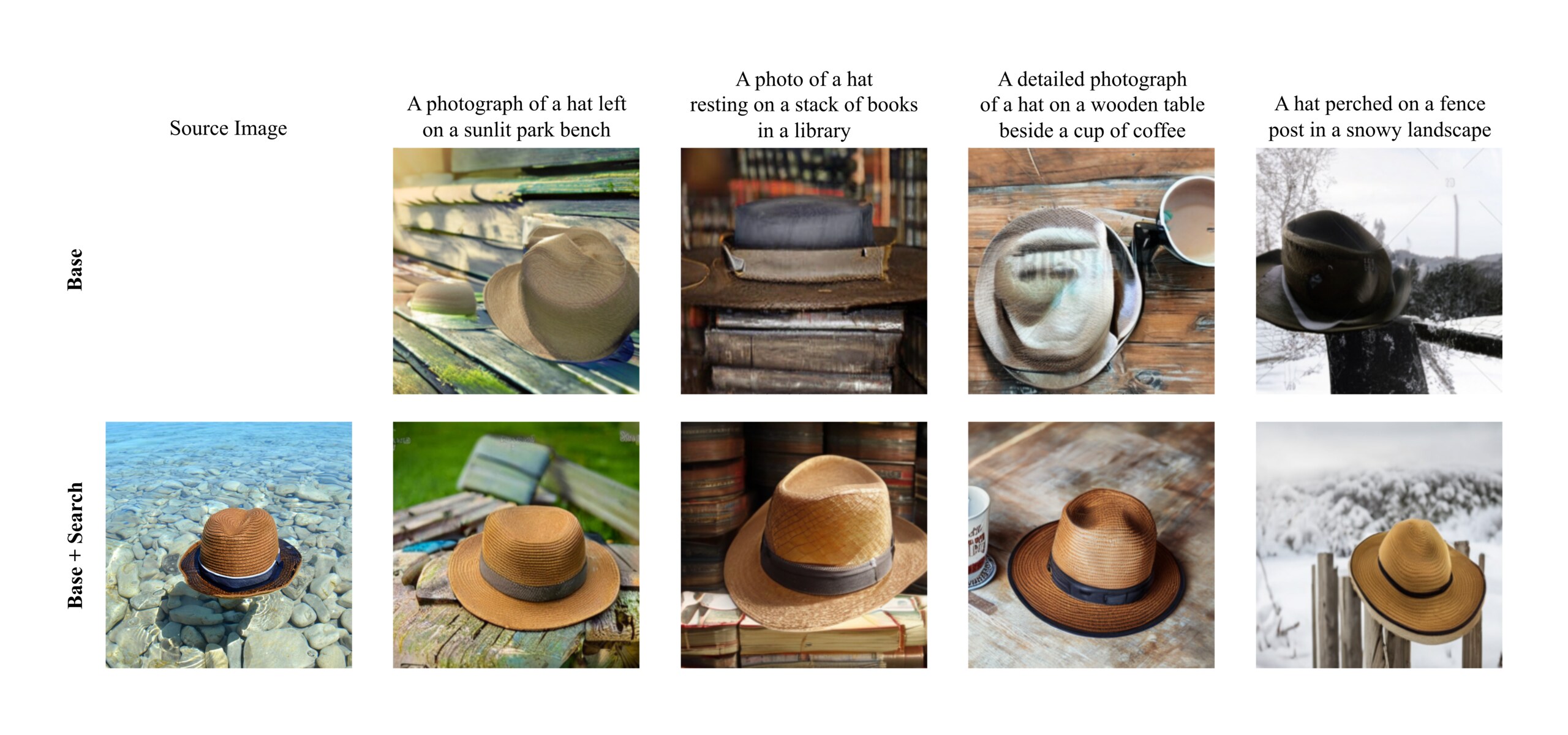}
    \includegraphics[width=0.88\linewidth]{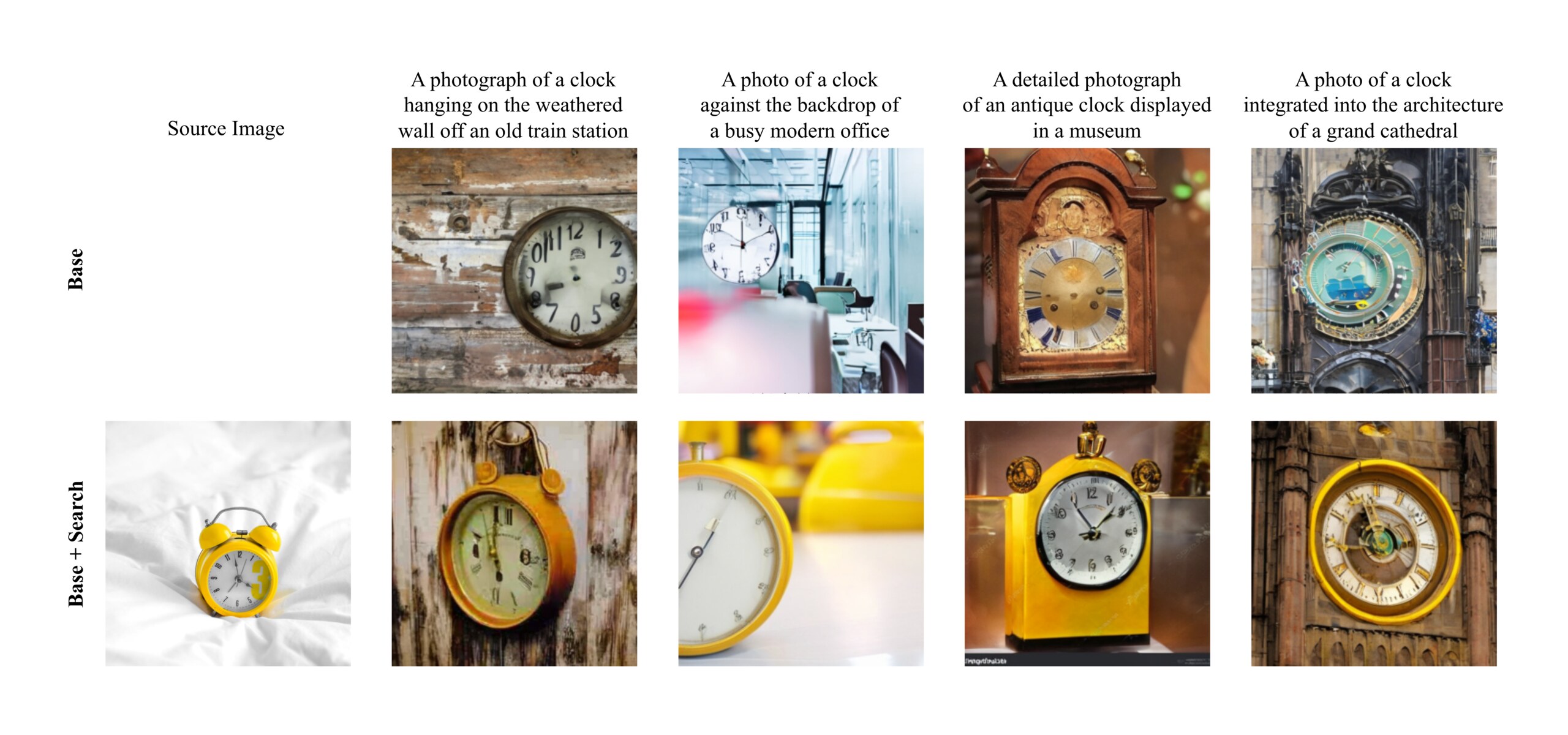}
    \includegraphics[width=0.88\linewidth]{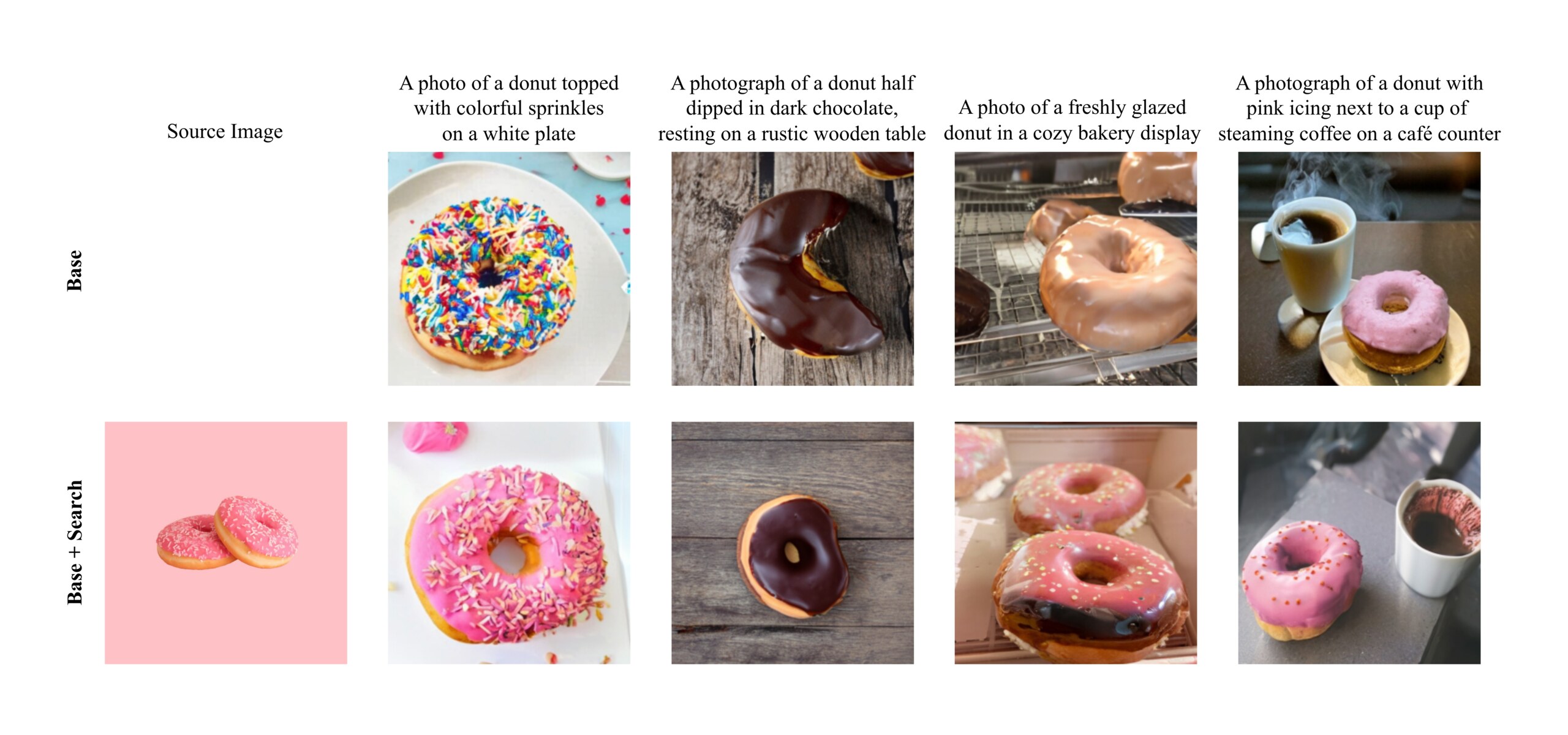}
    \caption{\textbf{DreamBench++ comparison between direct AR generation and DreamSim-guided search (Examples 1--3).} Each panel compares the direct AR baseline (\emph{Base}, top row) against verifier-guided search (\emph{Base + Search}, bottom row) for the same reference subject and prompt set. Search consistently improves identity preservation and prompt-conditioned scene adaptation.}
    \label{fig:dreambench_ar_vs_search}
\end{figure*}

\begin{figure*}[h]
    \centering
    \includegraphics[width=0.88\linewidth]{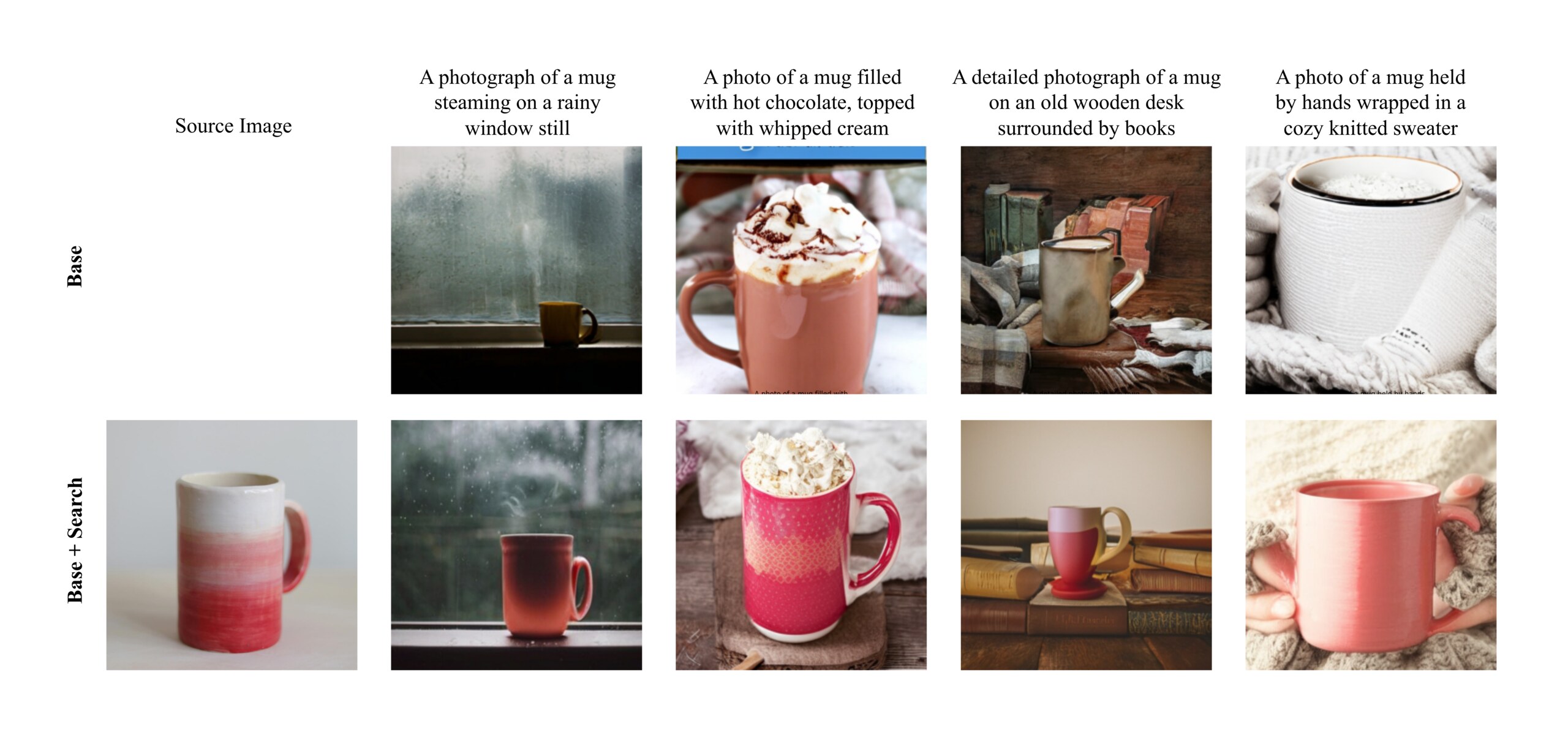}
    \includegraphics[width=0.88\linewidth]{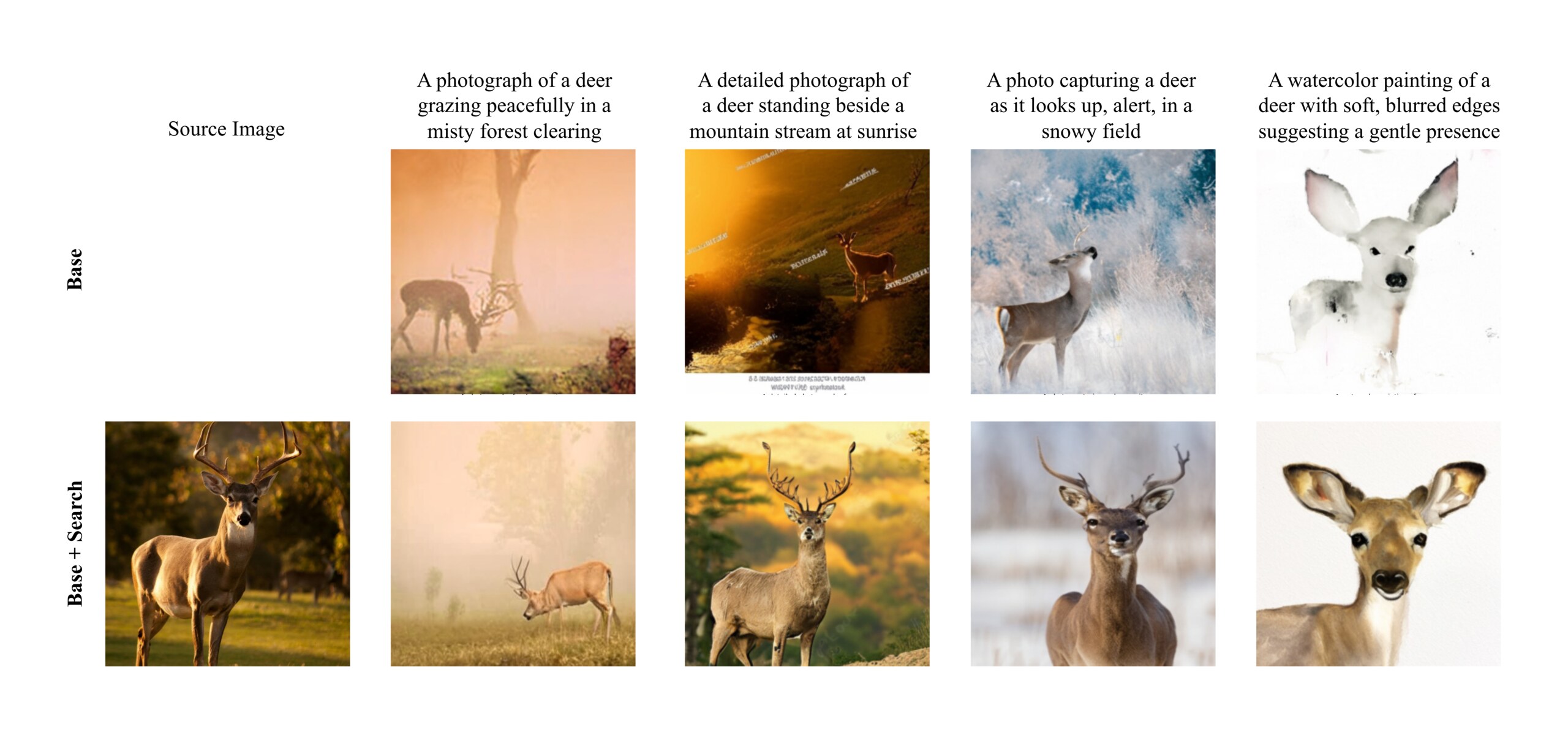}
    \includegraphics[width=0.88\linewidth]{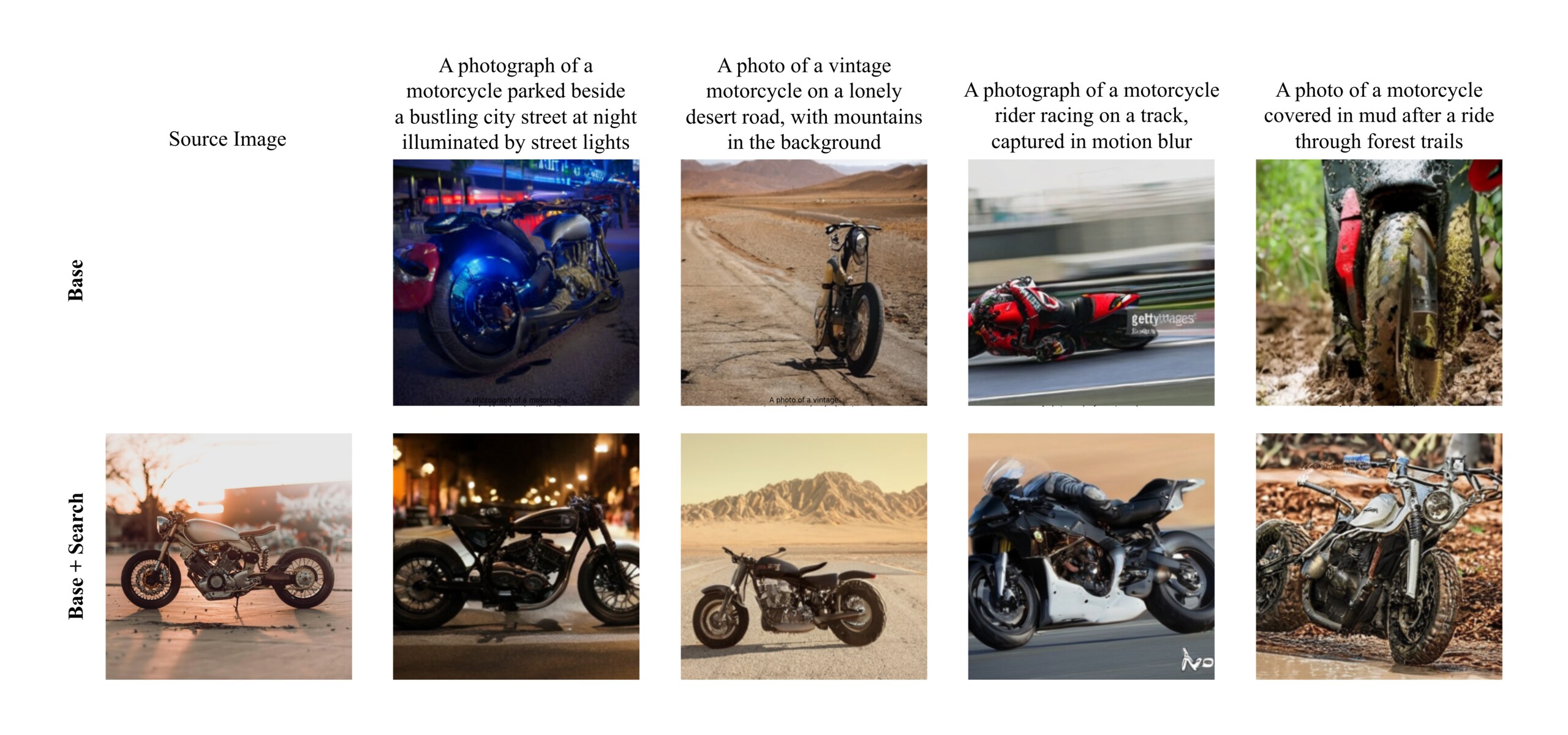}
    \caption{\textbf{DreamBench++ comparison between direct AR generation and DreamSim-guided search (Examples 4--6).} Additional subjects using the same visualization format as Figure~\ref{fig:dreambench_ar_vs_search}. The bottom row in each panel shows that search better preserves subject identity while adapting to diverse prompt contexts.}
    \label{fig:dreambench_ar_vs_search_2}
\end{figure*}

\begin{figure*}[h]
    \centering
    \includegraphics[width=0.81\linewidth]{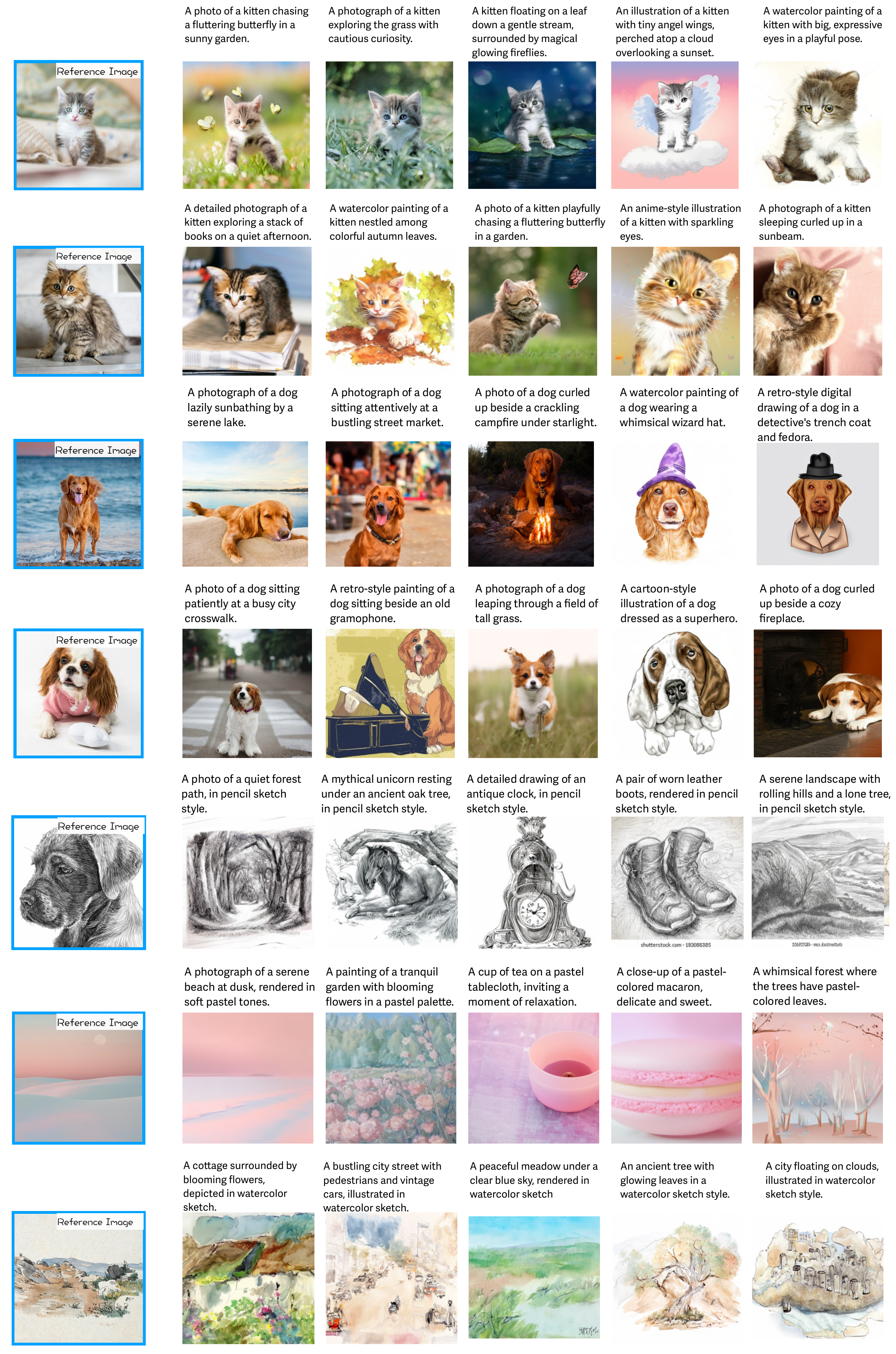}
    \caption{\textbf{DreamBench++ reference images and generated results.} Each row corresponds to a single subject from DreamBench++~\citep{peng2024dreambench++}, with the leftmost column showing the reference image and the remaining columns showing images generated by FlexTok~\citep{flextok} using beam search guided by the DreamSim verifier~\citep{fu2023dreamsim}.}
    \label{fig:dreambenchplus}
\end{figure*}

\clearpage

\end{document}